\newcommand{\cmnist}{Color-MNIST\xspace}
\newcommand{\mnistcifar}{MNIST-CIFAR\xspace}
\newcommand{\mnist}{MNIST\xspace}
\newcommand{\cifar}{CIFAR\xspace}
\newcommand{\rank}[1]{\textrm{rank}\left(#1\right)}
\newcommand{\set}[1]{\left\{#1\right\}}
\newcommand{\cD}{\mathcal{D}}
\newcommand{\ifm}{IFM\xspace}
\newcommand{\defeq}{\stackrel{\textrm{def}}{=}}
\newcommand{\trace}[1]{\textrm{Tr}\left(#1\right)}
\newcommand{\xtilde}{\widetilde{x}}
\newcommand{\ce}[1]{\gL\left(#1\right)}
\newcommand{\mdiv}[1]{\textrm{Mist-Div}\left(#1\right)}
\newcommand{\cclc}[1]{\textrm{CC-LogitCorr}\left(#1\right)}
\newcommand{\corr}[1]{\textrm{Corr}\left(#1\right)}
\newcommand{\norm}[1]{\left\| #1 \right\|}
\newcommand{\ft}{\tilde{f}}
\newcommand{\ldsb}{LD-SB\xspace}
\newcommand{\diverse}{OrthoP\xspace}
\def\eqref#1{eqn.~(\ref{#1})}
\def\Eqref#1{Eqn.~(\ref{#1})}
\def\1{\bm{1}}
\DeclareMathAlphabet{\mathsfit}{\encodingdefault}{\sfdefault}{m}{sl}
\SetMathAlphabet{\mathsfit}{bold}{\encodingdefault}{\sfdefault}{bx}{n}
\def\gD{{\mathcal{D}}}
\def\gH{{\mathcal{H}}}
\def\gL{{\mathcal{L}}}
\def\gN{{\mathcal{N}}}
\def\gP{{\mathcal{P}}}
\def\gR{{\mathcal{R}}}
\def\gS{{\mathcal{S}}}
\def\sN{{\mathbb{N}}}
\def\sR{{\mathbb{R}}}
\def\sS{{\mathbb{S}}}
\newcommand{\E}{\mathbb{E}}
\newcommand{\R}{\mathbb{R}}
\newcommand{\iprod}[2]{\langle #1, #2 \rangle}
\DeclareMathOperator*{\argmax}{arg\,max}
\DeclareMathOperator*{\argmin}{arg\,min}
\newcommandx{\pj}[2][1=]{\todo[linecolor=red,backgroundcolor=red!25,bordercolor=red,#1]{#2}}
\newcommand{\x}{\mathbf{x}}
\newcommand{\e}{\mathbf{e}}
\renewcommand{\L}{\mathcal L}
\newcommand{\Y}{\mathcal Y}
\newcommand{\U}{\mathcal U}
\newcommand{\F}{\mathcal F}
\newcolumntype{M}[1]{>{\centering\arraybackslash}m{#1}}
\theoremstyle{plain}
\newtheorem{theorem}{Theorem}[section]
\newtheorem{proposition}[theorem]{Proposition}
\newtheorem{lemma}[theorem]{Lemma}
\theoremstyle{definition}
\newtheorem{definition}[theorem]{Definition}
\theoremstyle{remark}
\newtheorem{claim}[theorem]{Claim}
\icmltitlerunning{Simplicity Bias in 1-Hidden Layer Neural Networks}
\begin{document}

\twocolumn[
\icmltitle{Simplicity Bias in 1-Hidden Layer Neural Networks}




\begin{icmlauthorlist}
\icmlauthor{Depen Morwani }{yyy}
\icmlauthor{Jatin Batra}{comp}
\icmlauthor{Prateek Jain}{sch,alp}
\icmlauthor{Praneeth Netrapalli}{sch,alp}
\end{icmlauthorlist}

\icmlaffiliation{yyy}{Department of Computer Science, Harvard University, Cambridge, MA, USA (part of the work done while at Google Research, Bengaluru, India)}
\icmlaffiliation{comp}{School of Technology and Computer Science,
TIFR,
Mumbai, India}
\icmlaffiliation{sch}{Google Research, Bengaluru, India.}
\icmlaffiliation{alp}{Alphabetical ordering}
\icmlcorrespondingauthor{Depen Morwani}{dmorwani@g.harvard.edu}

\icmlkeywords{Machine Learning, ICML}

\vskip 0.3in
]



\printAffiliationsAndNotice{}  

\begin{abstract}
Recent works~\cite{shah2020pitfalls,chen2021intriguing} have demonstrated that neural networks exhibit extreme \emph{simplicity bias} (SB). That is,  they learn \emph{only the simplest} features  to solve a task at hand, even in the presence of other, more robust but more complex features. Due to the lack of a general and rigorous definition of \emph{features}, these works showcase SB on \emph{semi-synthetic} datasets such as \cmnist , \mnistcifar where defining features is relatively easier. 

In this work, we rigorously define as well as thoroughly establish SB for \emph{one hidden layer} neural networks. More concretely, (i) we define SB as the network essentially being a function of a low dimensional projection of the inputs 
(ii) theoretically, we show that when the data is linearly separable, the network primarily depends on only the linearly separable ($1$-dimensional) subspace even in the presence of an arbitrarily large number of other, more complex features which could have led to a significantly more robust classifier,  (iii) empirically, we show that models trained on \emph{real} datasets such as Imagenette and Waterbirds-Landbirds indeed depend on a low dimensional projection of the inputs, thereby demonstrating SB on these datasets, iv) finally, we present a natural ensemble approach that encourages diversity in  models by training successive models on features not used by earlier models, and demonstrate that it yields models that are significantly more robust to Gaussian noise.
\end{abstract}

\section{Introduction}
\begin{figure}
    \includegraphics[width=0.48\textwidth]{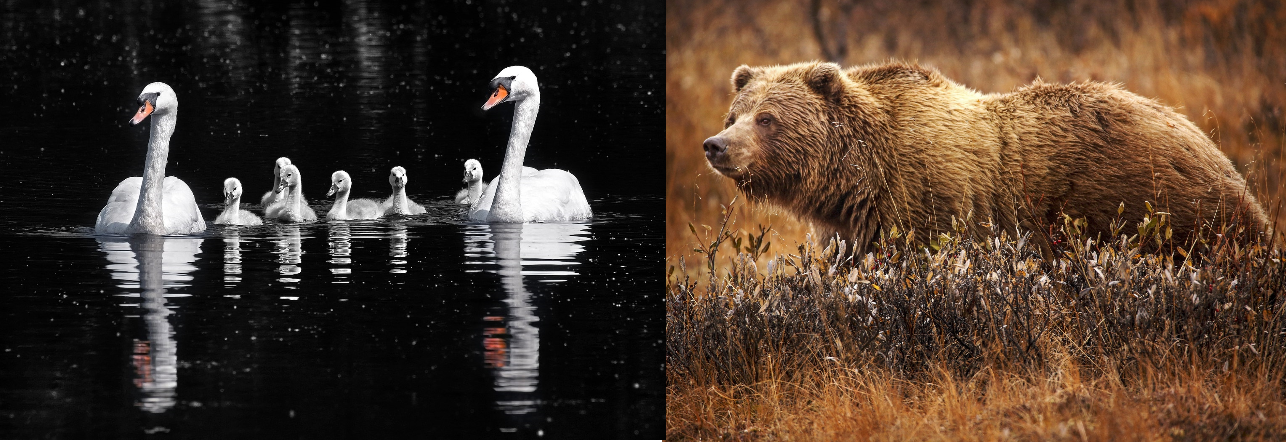}
  \caption[swansbears]{Classification of swans vs bears. There are several features such as background, color of the animal, shape of the animal etc., each of which is sufficient for classification but using all of them will lead to a more robust model. \footnotemark}
  \label{fig:swans-bears}
\end{figure}
It is well known that neural networks (NNs) are vulnerable to distribution shifts as well as to adversarial examples~\citep{szegedy2013intriguing,hendrycks2021many}. A recent line of work~\citep{geirhos2018imagenet,shah2020pitfalls,geirhos2020shortcut} proposes that \emph{Simplicity Bias (SB)} -- aka shortcut learning -- i.e., the tendency of neural networks (NNs) to learn only the simplest features over other useful but more complex features, is a key reason behind this non-robustness. 
\footnotetext{Image source: Wikipedia~\cite{swan},~\cite{bear}.} The argument is roughly as follows: for example, in the classification of swans vs bears, as illustrated in Figure~\ref{fig:swans-bears},
there are many features such as background, color of the animal, shape of the animal etc. that can be used for classification. However using only one or few of them can lead to models that are not robust to specific distribution shifts, while using all the features can lead to more robust models.

Several recent works have demonstrated SB on a variety of {\em semi-real constructed datasets}   \citep{geirhos2018imagenet,shah2020pitfalls,chen2021intriguing}, and have hypothesized SB to be the key reason for NN's brittleness to distribution shifts \citep{shah2020pitfalls}. However, such observations are still only for specific semi-real datasets, and a general method that can identify SB on a \emph{given dataset} and a \emph{given model} is still missing in literature. Such a method would be useful not only to estimate the robustness of a model but could also help in designing more robust models. 

A key challenge in designing such a general method to identify (and potentially fix) SB is that the notion of \emph{feature} itself is vague and lacks a rigorous definition. Existing works like \cite{geirhos2018imagenet, shah2020pitfalls,chen2021intriguing} avoid this challenge of vague  feature definition by using carefully designed datasets (e.g., concatenation of \mnist images and \cifar images), where certain high level features (e.g., \mnist features and \cifar features, shape and texture features) are already baked in the dataset definition, and arguing about their {\em simplicity} is intuitively easy.

\textbf{Contributions}: One of the main contributions of this work is to provide a precise definition of a particular  simplicity bias -- \ldsb -- of \emph{$1$-hidden layer neural networks}. In particular, we characterize SB as \emph{low dimensional input dependence} of the model. Concretely, 
\begin{definition}[\ldsb]\label{def:ldsb}
A model $f:\R^d \rightarrow \R^c$ with inputs $x \in \R^d$ and outputs $f(x) \in \R^c$ (e.g., logits for $c$ classes), trained on a distribution ${(x,y) \sim \cD}$ satisfies \ldsb if there exists a \emph{projection} matrix $P \in \R^{d \times d}$ satisfying:
\begin{itemize}[leftmargin=0.5cm]
    \item $\rank{P} =k \ll d$,
    \item $f(P x^{(1)} + P_\perp x^{(2)}) \approx f(x_1) \; \forall (x^{(1)}, y^{(1)})$, $(x^{(2)}, y^{(2)}) \sim \cD$
    \item An independent model $g$ trained on $(P_\perp x, y)$ where $(x,y)\sim \cD$ achieves high accuracy.
\end{itemize}
Here $P_{\perp}$ is the projection matrix onto the subspace orthogonal to $P$.
\end{definition}
In words, \ldsb says that there exists a small $k$-dimensional subspace (given by the projection matrix $P$) in the input space $\R^d$, which is the only thing that the model $f$ considers in labeling any input point $x$. In particular, if we \emph{mix} two data points $x_1$ and $x_2$ by using the projection of $x_1$ onto $P$ and the projection of $x_2$ onto the orthogonal subspace $P_{\perp}$, the output of $f$ on this \emph{mixed point} $Px_1 + P_\perp x_2$ is the same as that on $x_1$. This would have been fine if the subspace $P_\perp$ does not contain any feature useful for classification. However, the third bullet point says that $P_\perp$ indeed contains features that are useful for classification since an independent model $g$ trained on $(P_\perp x, y)$ achieves high accuracy.

Furthermore, theoretically, we demonstrate  \ldsb of $1$-hidden layer NNs for a fairly general class of distributions called \emph{independent features model (\ifm)}, where the features (i.e., coordinates) are distributed independently conditioned on the label. \ifm has a long history and is widely studied, especially in the context of naive-Bayes classifiers~\cite{lewis1998naive}.
For \ifm, we show that as long as there is even a \emph{single} feature in which the data is linearly separable, NNs trained using SGD will learn models that rely almost exclusively on this linearly separable feature, even when there are an \emph{arbitrarily large number} of features in which the data is separable but with a \emph{non-linear} boundary. Empirically, we demonstrate \ldsb on three real world datasets: binary and multiclass version of Imagenette \citep{imagenette} as well as waterbirds-landbirds \citep{Sagawa*2020Distributionally} dataset. Compared to the results in~\cite{shah2020pitfalls}, our results (i) theoretically show \ldsb in a fairly general setting and (ii) empirically show \ldsb on real datasets.

Finally, building upon these insights, we propose a simple ensemble method -- \emph{\diverse} -- that sequentially constructs NNs by projecting out principle input data directions that are used by previous NNs. We demonstrate that this method can lead to significantly more robust ensembles for real-world datasets in presence of simple distribution shifts like Gaussian noise.  

\textbf{Why only $1$-hidden layer networks?}: One might wonder why the results in this paper are restricted to $1$-hidden layer networks and why they are interesting. We present two reasons.
\begin{enumerate}[leftmargin=*,noitemsep,nolistsep]
    \item From a \textbf{theoretical} standpoint, prior works have thoroughly characterized the training dynamics of infinite width $1$-hidden layer networks under different initialization schemes~\citep{chizat2019lazy} and have also identified the limit points of gradient descent for such networks~\citep{ChizatB20}. Our results crucially build upon these prior works. On the other hand, we do not have such a clear understanding of the dynamics of deeper networks \footnote{For more discussion on the difficulty of extending these results to deep nets, refer Appendix \ref{app:deep-nets}}.
    \item From a \textbf{practical} standpoint, the dominant paradigm in machine learning right now is to pretrain large models on large amounts of data and then finetune on small target datasets. Given the large and diverse pretraining data seen by these models, it has been observed that they do learn rich features~\citep{rosenfeld2022domain,nasery2022daft}. However, finetuning on target datasets might not utilize all the features in the pretrained model. Consequently, approaches that can train robust finetuning heads (such as a $1$-hidden layer network on top) can be quite effective.
\end{enumerate}
Extending our results to deeper networks and to other architectures is an exciting direction of research from both theoretical and practical points of view.

\textbf{Paper organization}: This paper is organized as follows. Section~\ref{sec:related} presents related work. Section~\ref{sec:prelims} presents preliminaries. Our main results on \ldsb are presented in Section~\ref{sec:results}. Section~\ref{sec:diverse} presents results on training diverse classifiers. We conclude in Section~\ref{sec:disc}.
\section{Related Work}\label{sec:related} 
In this section, we briefly mention the closely related works. Extended related work can be found in Appendix \ref{app:rel-works}.

\textbf{Simplicity Bias}: Subsequent to~\cite{shah2020pitfalls}, there have been several papers investigating the presence/absence of SB in various networks as well as reasons behind SB~\cite{scimeca2021shortcut}. Of these,~\cite{huh2021low} is the most closely related work to ours. ~\cite{huh2021low} \emph{empirically observe} that on certain \emph{synthetic} datasets, the \emph{embeddings} of NNs both at initialization as well as after training have a low rank structure. In contrast, we prove \ldsb \emph{theoretically} on the \ifm model as well as empirically validate this on \emph{real} datasets. Furthermore, our results show that while the \emph{network weights} exhibit low rank structure in the rich regime (see Section~\ref{prelim:init} for definition), the manifestation of \ldsb is far more subtle in lazy regime.
Moreover, we also show how to use \ldsb to train a second diverse model and combine it to obtain a robust ensemble.~\cite{Tomer22} provide a theoretical intuition behind the relation between various hyperparameters (such as learning rate, batch size etc.) and rank of learnt weight matrices, and demonstrate it empirically.~\cite{pezeshki2021gradient} propose that \emph{gradient starvation} at the beginning of training is a potential reason for SB in the lazy/NTK regime but the conditions are hard to interpret. In contrast, our results are shown for any dataset in the \ifm model in the \emph{rich} regime of training. Finally~\cite{lyu2021gradient} consider anti-symmetric datasets and show that single hidden layer input homogeneous networks (i.e., without \emph{bias} parameters) converge to linear classifiers. However, our results hold for general datasets and do not require input homogeneity.

\textbf{Learning diverse classifiers}: There have been several works that attempt to learn diverse classifiers. Most works try to learn such models by ensuring that the input gradients of these models do not align~\citep{ross2018improving,teney2022evading}. \cite{xu2022controlling} propose a way to learn diverse/orthogonal classifiers under the assumption that a complete classifier, that uses all features is available, and demonstrates its utility for various downstream tasks such as style transfer. \cite{lee2022diversify} learn diverse classifiers by enforcing diversity on unlabeled target data.

\textbf{Spurious correlations}: There has been a large body of work which identifies reasons for spurious correlations in NNs~\citep{sagawa2020investigation} as well as proposing algorithmic fixes in different settings~\citep{liu2021just,chen2020self}.

\textbf{Implicit bias of gradient descent}: There is also a large body of work understanding the implicit bias of gradient descent dynamics. Most of these works are for standard linear~\citep{ji2019implicit} or deep linear networks~\citep{SoudryHNGS18,gunasekar2018implicit}. For nonlinear neural networks, one of the well-known results is for the case of $1$-hidden layer neural networks with homogeneous activation functions~\citep{ChizatB20}, which we crucially use in our proofs.

\section{Preliminaries}\label{sec:prelims}
In this section, we provide the notation and background on infinite width max-margin classifiers that is required to interpret the results of this paper.


\subsection{Basic notions}

\textbf{1-hidden layer neural networks and loss function.}
Consider instances $x \in \gR^d$ and labels $y \in \{\pm 1\}$ jointly distributed as $\gD$.
A 1-hidden layer neural network model for predicting the label for a given instance $x$, is defined by parameters $(\bar{w} \in \sR^{m\times d},  \bar{b} \in \sR^m, \bar{a} \in \sR^m)$. For a fixed activation function $\phi$, given input instance $x$, the model is given as $f((\bar{w},\bar{b},\bar{a}),x) \coloneqq \langle \bar{a},\phi(\bar{w}x+\bar{b})\rangle$, where $\phi(\cdot)$ is applied elementwise. The cross entropy loss $\gL$ for a given model $f$, input $x$ and label $y$ is given as $\ce{f(x),y} \defeq \log(1+\exp(-yf((\bar{w},\bar{b},\bar{a}),x)))$. 

\noindent\textbf{Margin.} For data distribution $\gD$, the margin of a model $f(x)$ is given as $\min_{(x,y) \sim \gD} yf(x)$.

\noindent\textbf{Notation.} Here is some useful notation that we will use repeatedly. For a matrix $A$, $A(i,.)$ denotes the $i$th row of $A$. For any $k \in \sN$, $\sS^{k-1}$ denotes the surface of the unit norm Euclidean sphere in dimension $k$.

\subsection{Initializations} \label{prelim:init}
The gradient descent dynamics of the network depends strongly on the scale of initialization.
In this work, we primarily consider \emph{rich regime} initialization.

\noindent\textbf{Rich regime.}
In rich regime initialization, for any $i \in [m]$, the parameters $(\bar{w}(i,.),\bar{b}(i)$) of the first layer are sampled from a uniform distribution on $\sS^d$. Each $\bar{a}(i)$ is sampled from $\textit{Unif}\{-1,1\}$, and the output of the network is scaled down by $\frac{1}{m}$ \citep{ChizatB20}. This is roughly equivalent to Xavier initialization~\citet{glorot2010understanding}, where the weight parameters in both the layers are initialized approximately as $\gN(0, \frac{2}{m})$ when $m \gg d$.

In addition, we also present some results for the lazy regime initialization described below.

\noindent\textbf{Lazy regime.}
In the lazy regime, the weight parameters in the first layer are initialized with $\gN(0, \frac{1}{d})$, those of second layer are initialized with $\gN(0, \frac{1}{m})$ and the biases are initialized to $0$~\citep{BiettiM19,LeeXSBNSP19}. This is approximately equivalent to Kaiming initialization~\citep{he2015delving}.

\subsection{Infinite Width Case}

For 1-hidden layer neural networks with ReLU activation in the infinite width limit i.e., as $m \rightarrow \infty$, \citet{JacotGH18,chizat2019lazy,ChizatB20} gave interesting characterizations of the trained model. As mentioned above, the training process of these models falls into one of two regimes depending on the scale of initialization~\citep{chizat2019lazy}:

\noindent\textbf{Rich regime.}
In the infinite width limit,  the neural network parameters can be thought of as a distribution $\nu$ over triples $(w,b,a)\in \sS^{d+1}$ where $w \in \sR^{d}, b,a \in \sR$.
Under the rich regime initialization, the function $f$ computed by the model can be expressed as 
\begin{equation} \label{eq:rich:f} 
f(\nu,x) = \mathbb{E}_{(w,b,a) \sim \nu}[a(\phi(\langle w,x \rangle+b)]\,.
\end{equation}

\cite{ChizatB20} showed that the training process with rich initialization can be thought of as gradient flow on the Wasserstein-2 space and gave the following characterization \footnote{\noindent Theorem \ref{thm:CBrich} is an informal version of \citealt[Theorem 5]{ChizatB20}. For exact result, refer Theorem~\ref{thm:technical} in Appendix \ref{app:relu-F1}.}  of the trained model under the cross entropy loss $\mathbb{E}_{(x,y)\sim \gD} [\gL(\nu,(x,y))]$.

\begin{theorem}\citep{ChizatB20}\label{thm:CBrich}
Under rich initialization in the infinite width limit with cross entropy loss, if gradient flow on 1-hidden layer NN with ReLU activation converges, it converges to a maximum margin classifier $\nu^*$ given as
\begin{equation}\label{eq:nustar}
 \nu^* = \argmax_{\nu \in \gP(\sS^{d+1})} \min_{(x,y)\sim \gD} yf(\nu,x)\,, 
\end{equation}
where 
$\gP(\sS^{d+1})$ denotes the space of distributions over $\sS^{d+1}$.
\end{theorem}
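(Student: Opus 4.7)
The plan is to use the $2$-homogeneity of a ReLU neuron under joint rescaling of its parameters, combined with the Wasserstein gradient flow formalism for the mean-field limit of wide networks. Since $\phi(\lambda u) = \lambda \phi(u)$ for $\lambda > 0$, each neuron $\theta = (w, b, a)$ contributes $a\,\phi(\langle w, x\rangle + b)$, which is $2$-homogeneous in $\theta$. This allows the optimization to be lifted from probability distributions on $\sS^{d+1}$ to positive measures on $\sR^{d+2}$ via polar decomposition $\theta = \|\theta\| \cdot \bar{\theta}$ with $\bar{\theta} \in \sS^{d+1}$, which is the standard setting for the Chizat--Bach analysis.

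Concretely, the steps are: (1) write the population cross-entropy objective as a functional on the Wasserstein-$2$ space of (lifted) parameter measures and derive its Wasserstein gradient flow; (2) argue that along any trajectory whose loss approaches the infimum, the total parameter mass must diverge, since the exponential-tailed logistic loss is bounded away from zero whenever the margin is bounded; (3) reparameterize $\nu_t$ into a scalar mass $r_t \to \infty$ and a normalized angular measure $\bar{\nu}_t \in \gP(\sS^{d+1})$, so that $2$-homogeneity gives $f(\nu_t, x) = r_t^2 \, f(\bar{\nu}_t, x)$; (4) show that as $r_t \to \infty$, the exponential weighting in the logistic loss concentrates on the minimum-margin training points, so the induced flow on $\bar{\nu}_t$ asymptotically behaves like a max-margin flow; (5) verify that first-order Wasserstein stationarity of this normalized flow coincides with the KKT conditions of the max-margin program in~(\ref{eq:nustar}). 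Since the map $\nu \mapsto \min_{(x,y) \sim \gD} y f(\nu, x)$ is concave on the convex set $\gP(\sS^{d+1})$, these KKT conditions are sufficient for global optimality, yielding the limit point $\nu^*$.

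The principal obstacle is step (4): making the ``logistic to max-margin'' passage rigorous. One needs uniform-in-$t$ tightness of the dual weights the logistic loss implicitly assigns to training examples, together with a stability argument showing that these weights concentrate on the margin-achieving support examples, in the spirit of the analyses of~\citet{ChizatB20} and Lyu--Li. A secondary difficulty is that ReLU is non-smooth, so one has to work with a Wasserstein subdifferential; however, ReLU is differentiable outside a single kink and the object of study is a measure over neurons, so generic measures avoid the non-differentiability. Finally, the theorem asserts only that \emph{if} convergence holds then the limit is a max-margin classifier, which sidesteps the deepest open question of whether gradient flow actually converges, letting the proof concentrate purely on characterizing the possible limit points.
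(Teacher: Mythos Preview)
Your proposal is a reasonable high-level sketch of the Chizat--Bach argument, but there is nothing in the present paper to compare it against: Theorem~\ref{thm:CBrich} is not proved here. It is quoted verbatim as a result of \citet{ChizatB20} (see the footnote pointing to Theorem~\ref{thm:technical} in Appendix~\ref{app:relu-F1}) and used as a black box. The paper's own contribution begins only with Theorem~\ref{thm:rich}, where the max-margin characterization is \emph{applied} to the specific \ifm\ dataset via the primal--dual certificate of Lemma~\ref{lem:primdualrich}.

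Regarding the content of your sketch itself: the ingredients you list (2-homogeneity, lifting to positive measures, divergence of the total mass, concentration of the exponential weights on minimum-margin points, and KKT sufficiency by concavity) are indeed the backbone of the original Chizat--Bach proof. Your identification of step~(4) as the crux is accurate; in the actual argument this is handled not by a direct ``stability of dual weights'' bound but through the smooth-margin functional $S(f) = -\log \int e^{-f}\,d\rho$ and an assumption that $\nabla S(\hat h(\mu_t))$ converges weakly, together with a Morse--Sard condition on the limiting first variation (cf.\ Theorem~\ref{thm:technical}). So your outline is morally correct but slightly understates how much is packaged into the convergence hypotheses rather than proved outright.
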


This training regime is known as the `rich' regime since it learns data dependent features $\iprod{w}{\cdot}$.

\textbf{Lazy regime.}
\cite{JacotGH18} showed that in the infinite width limit, the neural network behaves like a kernel machine. This kernel is popularly known as the Neural Tangent Kernel(NTK), and is given by $K(x, x') = \left\langle \frac{\partial f(x)}{\partial W}, \frac{\partial f(x')}{\partial W} \right\rangle$, where $W$ denotes the set of all trainable weight parameters. This initialization regime is called 'lazy' regime since the weights do not change much from initialization, and the NTK remains almost constant, i.e, the network does not learn data dependent features. 
We will use the following characterization of the NTK for 1-hidden layer neural networks.

\begin{theorem}\cite{BiettiM19}\label{thm:lazyBM}
Under lazy regime initialization in the infinite width limit, the NTK for 1-hidden layer neural networks with ReLU activation i.e., $\phi(u)= \max(u,0)$, is given as
\[ K(x,x') = \|x\|\|x'\|\kappa\left(\frac{\langle x,x' \rangle}{\|x\|\|x'\|}\right)\,,\]
where
\[\kappa(u)= \frac{1}{\pi}(2u(\pi - cos^{-1}(u))+\sqrt{1-u^2})\,.\]
\end{theorem}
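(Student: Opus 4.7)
The plan is to evaluate the NTK directly from its definition
\[K(x,x') \defeq \left\langle \frac{\partial f(x)}{\partial W}, \frac{\partial f(x')}{\partial W}\right\rangle\]
by splitting the parameters $W$ into the first-layer weights $\{w_i\}_{i=1}^m$ and the second-layer weights $\{a_i\}_{i=1}^m$, taking the infinite-width limit so that the resulting sums concentrate, and then reducing the two Gaussian expectations that appear to the classical closed-form arc-cosine kernel identities of Cho and Saul. Under the lazy initialization $w_i \sim \gN(0, I/d)$, $a_i \sim \gN(0, 1/m)$, differentiating $f(x) = \sum_{i=1}^m a_i\,\phi(\langle w_i, x\rangle)$ (biases are zero at initialization) shows that $K$ decomposes additively as $K = K_a + K_w$.

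For the second-layer piece, $\partial f/\partial a_i = \phi(\langle w_i, x\rangle)$, so by the law of large numbers and after absorbing the factor $m\,\E[a_i^2]$,
\[K_a(x,x') \;=\; \E_{w \sim \gN(0, I/d)}\bigl[\phi(\langle w, x\rangle)\phi(\langle w, x'\rangle)\bigr].\]
For the first-layer piece, $\partial f/\partial w_i = a_i\,\phi'(\langle w_i, x\rangle)\,x$, and since the ReLU derivative is the indicator of the positive halfspace, the summand factors as $a_i^2\,\mathbf{1}\{\langle w_i, x\rangle \ge 0\}\mathbf{1}\{\langle w_i, x'\rangle \ge 0\}\,\langle x, x'\rangle$; summing and taking expectations gives
\[K_w(x,x') \;=\; \langle x, x'\rangle\cdot \E_w\bigl[\mathbf{1}\{\langle w, x\rangle \ge 0\}\,\mathbf{1}\{\langle w, x'\rangle \ge 0\}\bigr].\]

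Both expectations depend only on $\|x\|$, $\|x'\|$, and the angle $\theta = \cos^{-1}(u)$ with $u = \langle x, x'\rangle / (\|x\|\|x'\|)$: by the rotational invariance of $w$ one may restrict to the two-dimensional plane spanned by $x$ and $x'$ and integrate in polar coordinates, obtaining the standard arc-cosine formulas
\[\E\bigl[\phi(\langle w, x\rangle)\phi(\langle w, x'\rangle)\bigr] \propto \|x\|\|x'\|\bigl(\sin\theta + (\pi-\theta)\cos\theta\bigr), \qquad \E\bigl[\mathbf{1}\{\langle w, x\rangle \ge 0\}\mathbf{1}\{\langle w, x'\rangle \ge 0\}\bigr] \propto \pi - \theta.\]
I would then combine $K_a + K_w$, substitute $\cos\theta = u$, $\sin\theta = \sqrt{1-u^2}$, and $\langle x, x'\rangle = \|x\|\|x'\|u$, and collect the two resulting occurrences of $u(\pi-\theta)$ into a single term $2u(\pi - \cos^{-1}(u))$; the overall $1/\pi$ prefactor stated in the theorem then drops out once the variance and scaling constants of the lazy initialization are tracked carefully. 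The only step with any genuine content is the pair of Gaussian integrals above, which are the classical Cho--Saul computations; everything else is elementary bookkeeping. The main obstacle is therefore not conceptual but just the careful matching of the precise normalization convention used in the theorem statement.
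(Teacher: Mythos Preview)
The paper does not prove this theorem at all: it is quoted verbatim as a known result from \cite{BiettiM19} and used as a black box in the analysis of the lazy regime. Your outline is the standard derivation (decompose the NTK into first- and second-layer contributions and evaluate each via the Cho--Saul arc-cosine integrals), and it is correct; there is simply nothing in the present paper to compare it against.
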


\emph{Lazy regime for binary classification.} \cite{SoudryHNGS18} showed that for linearly separable datasets, gradient descent for linear predictors on logistic loss converges to the max-margin support vector machine (SVM) classifier. This implies that, any sufficiently wide neural network, when trained for a finite time in the lazy regime on a dataset that is separable by the finite-width induced NTK, will tend towards the $\L_2$ max-margin-classifier given by
\begin{equation} \label{max-marg-ntk}
\argmin_{f \in \gH} \| f \|_{\gH} \text{ s.t. } yf(x) \geq 1 \text{ }\forall\text{ } (x,y) \sim \gD\,,    
\end{equation}
where $\gH$ represents the Reproducing Kernel Hilbert Space (RKHS) associated with the finite width kernel \citep{bachblog}. With increasing width, this kernel tends towards the infinite-width NTK (which is universal \citep{JiTX20}). Therefore, in lazy regime, we will focus on the $\L_2$ max-margin-classifier induced by the infinite-width NTK. 


\section{Characterization of SB in $1$-hidden layer neural networks}\label{sec:results}
In this section, we first theoretically characterize the SB exhibited by gradient descent on linearly separable datasets in the \emph{independent features model (\ifm)}. The main result, stated in Theorem~\ref{thm:rich}, is that for binary classification of inputs in $\R^d$, even if there is a \emph{single} coordinate in which the data is linearly separable, gradient descent dynamics will learn a model that relies \emph{solely} on this coordinate, even when there are an arbitrarily large number $d-1$ of coordinates in which the data is separable, but by a non-linear classifier. In other words, the simplicity bias of these networks is characterized by \emph{low dimensional input dependence}, which we denote by \ldsb. We then experimentally verify that NNs trained on some real datasets do indeed satisfy \ldsb.



\subsection{Dataset} \label{ifm:dataset}

\begin{figure}
\centering
\begin{subfigure}{0.3\columnwidth}
  \includegraphics[width=\columnwidth]{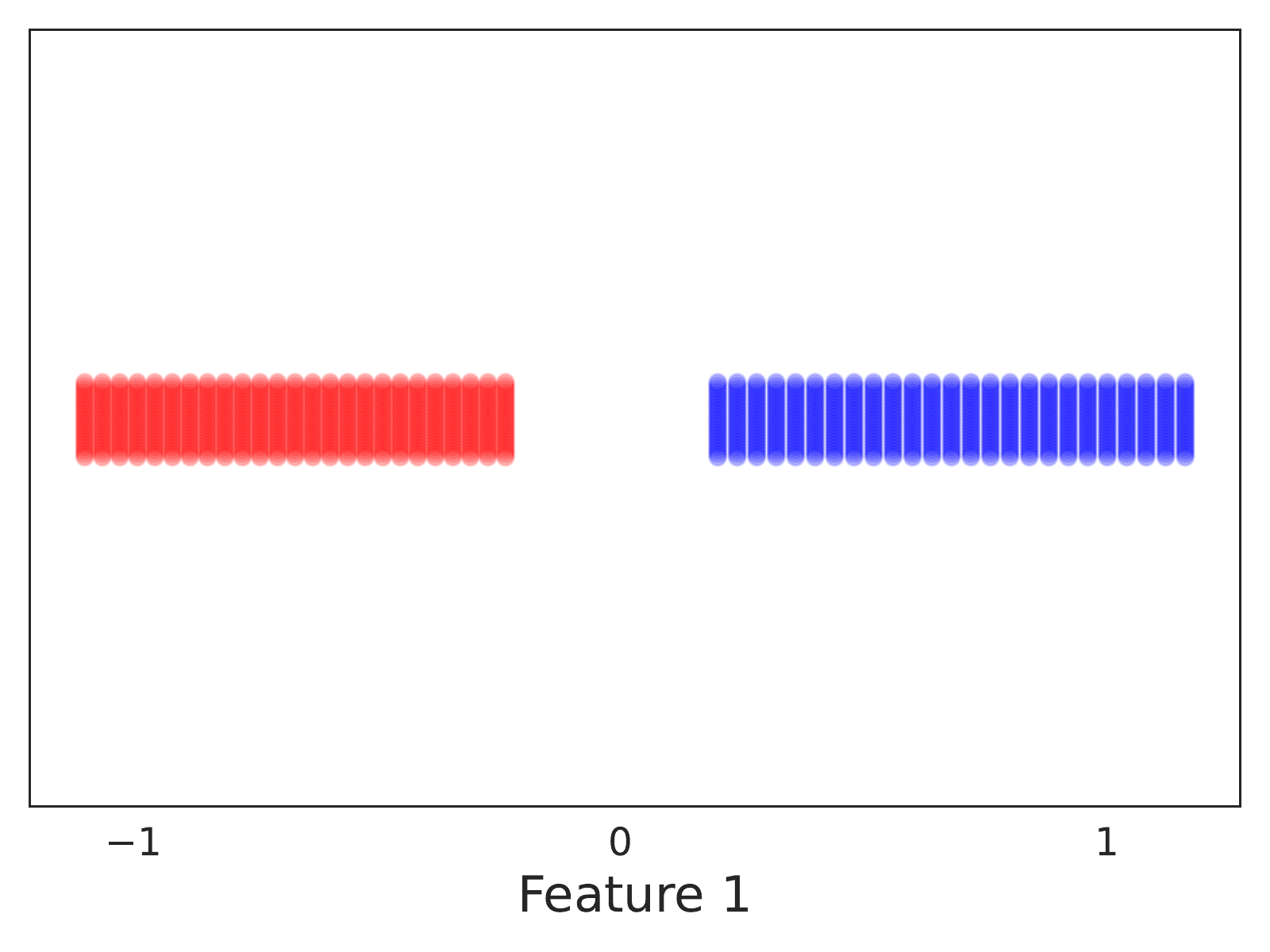}
\end{subfigure}
\begin{subfigure}{0.3\columnwidth}
  \includegraphics[width=\columnwidth]{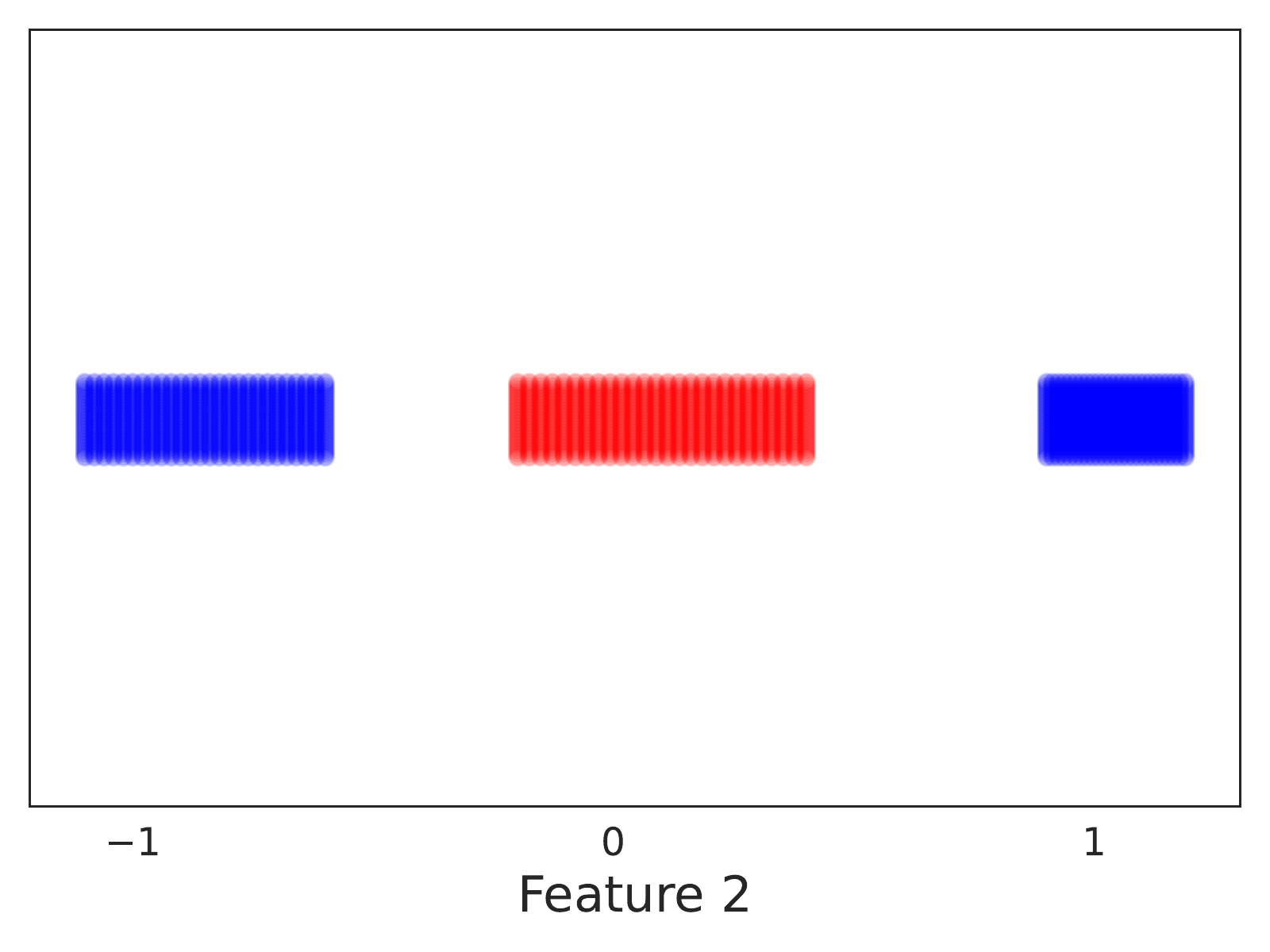}
\end{subfigure}
\begin{subfigure}{0.3\columnwidth}
  \includegraphics[width=\columnwidth]{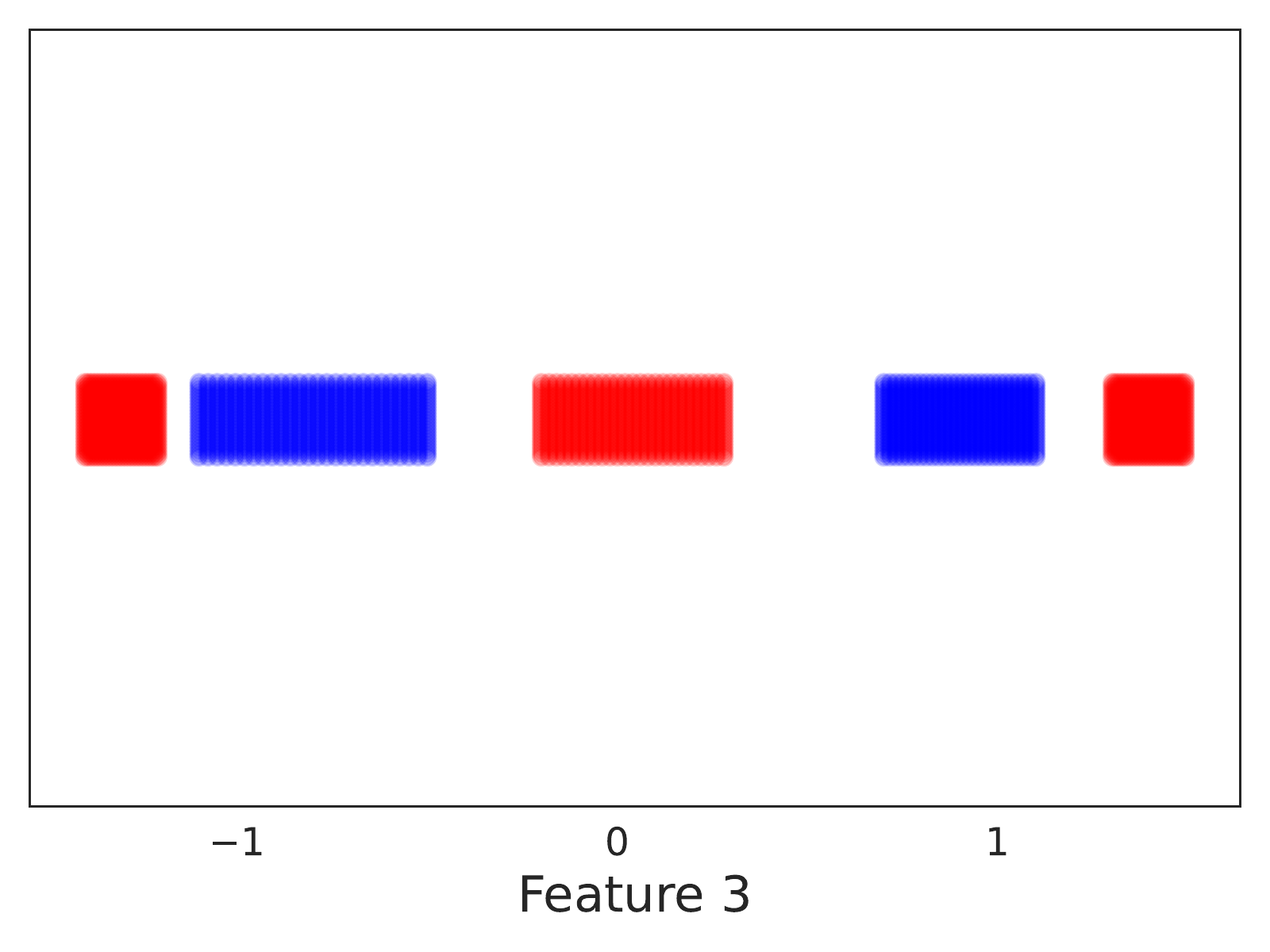}
\end{subfigure}
\caption{Illustration of an \ifm dataset. Given a class $\pm 1$ represented by {\color{blue} blue} and {\color{red} red} respectively, each coordinate value is drawn independently from the corresponding distribution. Shown above are the supports of distributions on three different coordinates for an illustrative \ifm dataset, for {\color{blue} positive} and {\color{red} negative} labels.}
\label{fig:synth}
\end{figure}
We consider datasets in the independent features model (\ifm), where the joint distribution over $(x,y)$ satisfies $p(x,y) = r(y)\prod^d_{i=1}q_i(x_i|y)$, i.e, the features are distributed independently conditioned on the label $y$. Here $r(y)$ is a distribution over $\{-1,+1\}$ and  $q_i(x_i|y)$ denotes the conditional distribution of $i^{\textrm{th}}$-coordinate $x_i$ given $y$. \ifm is widely studied in literature, particularly in the context of naive-Bayes classifiers~\cite{lewis1998naive}. We make the following assumptions which posit that there are at least two features of differing complexity for classification: \emph{one} with a linear boundary and \emph{at least} one other with a non-linear boundary. See Figure~\ref{fig:synth} for an illustrative example.
\begin{itemize}[leftmargin=5mm]
    \item One of the coordinates (say, the $1^\textrm{st}$ coordinate WLOG) is separable by a linear decision boundary with margin $\gamma$ (see Figure \ref{fig:synth}), i.e, $\exists \gamma > 0$, such that $\gamma \in Supp(q_1(x_1|y=+1)) \subseteq [\gamma, \infty)$ and $-\gamma \in Supp(q_1(x_1|y=-1)) \subseteq (-\infty, -\gamma]$, where $Supp(\cdot)$ denotes the support of a distribution.
    \item None of the other coordinates is linearly separable. More precisely, for all the other coordinates $i \in [d] \setminus\{1\}$, $0 \in Supp(q_i(x_i|y=-1))$ and $\{-1,+1\} \subseteq Supp(q_i(x_i|y=+1))$. 
    \item The dataset can be perfectly classified even without using the linear coordinate. This means, $\exists i \neq 1$, such that $q_i(x_i|y)$ has disjoint support for $y=+1$ and $y=-1$.
\end{itemize}
Though we assume axis aligned features, our results also hold for any rotation of the dataset. While our results hold in the general \ifm setting, in comparison, current results for SB e.g.,~\cite{shah2020pitfalls}, are obtained for \emph{very specialized} datasets within \ifm, and do not apply to \ifm in general.



\subsection{Main result}\label{sec:rich}
Our main result states that,
for rich initialization (Section \ref{prelim:init}),
NNs demonstrate \ldsb for any \ifm dataset satisfying the above conditions, along with some technical conditions
stated in Theorem~\ref{thm:technical}. Its proof appears in Appendix \ref{app:rich}.
\begin{theorem}\label{thm:rich}
For any dataset in the \ifm model with bounded density and bounded support, satisfying the above conditions and
$\gamma \geq 1$, if gradient flow for 1-hidden layer FCN under rich initialization in the infinite width limit with cross entropy loss converges and satisfies the technical conditions in Theorem~\ref{thm:technical} \footnote{Note that Theorem \ref{thm:technical} is a restatement of Theorem 5 of \citet{ChizatB20}, which we are using as a black box in our analysis}, it converges to $\nu^* = 0.5\delta_{\theta_1} + 0.5\delta_{\theta_2}$ on $\gS^{d+1}$, where $\theta_1 = (\frac{\gamma}{\sqrt{2(1+\gamma^2)}}\e_1,\frac{1}{\sqrt{2(1+\gamma^2)}},1/\sqrt{2}) ,\theta_2 = (-\frac{\gamma}{\sqrt{2(1+\gamma^2)}}\e_1,\frac{1}{\sqrt{2(1+\gamma^2)}},-1/\sqrt{2})$ and $\e_1\defeq [1,0, \cdots, 0]$ denotes first standard basis vector. This implies $f(\nu^*, Px^{(1)} + P_{\perp}x^{(2)}) = f(\nu^*, x_1^{(1)})$ $\forall \; (x^{(1)},y^{(1)}),(x^{(2)},y^{(2)}) \sim \gD$,
where $P$ represents the (rank-1) projection matrix on first coordinate.
\end{theorem}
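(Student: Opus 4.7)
The plan is to apply Theorem~\ref{thm:CBrich} as a black box: under the stated convergence and technical conditions it guarantees that the limit of gradient flow is the max-margin classifier $\nu^* = \argmax_{\nu \in \gP(\sS^{d+1})} \min_{(x,y)\sim\gD} yf(\nu,x)$. It therefore suffices to show that the specific two-atom distribution in the statement is this maximizer, and that $f(\nu^*,\cdot)$ depends on $x$ only through $x_1$; the mixing identity $f(\nu^*, Px^{(1)} + P_\perp x^{(2)}) = f(\nu^*, x_1^{(1)})$ then follows trivially from coordinate independence.

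First I would verify $\theta_1,\theta_2 \in \sS^{d+1}$ by a short norm calculation, and then expand
\[
f(\nu^*,x) = \frac{1}{2\sqrt{2}}\left[\phi\!\left(\frac{\gamma x_1 + 1}{\sqrt{2(1+\gamma^2)}}\right) - \phi\!\left(\frac{1-\gamma x_1}{\sqrt{2(1+\gamma^2)}}\right)\right].
\]
The IFM condition $\mathrm{Supp}(q_1(x_1 \mid y=+1)) \subseteq [\gamma,\infty)$, together with $\gamma \ge 1$, makes the second ReLU vanish whenever $y=+1$ and the first vanish whenever $y=-1$. Hence $y f(\nu^*, x)$ is linear in $x_1$ on $\mathrm{Supp}(\gD)$ and attains its worst-case value $M^* := \sqrt{1+\gamma^2}/4$ precisely at $y x_1 = \gamma$. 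This gives the primal lower bound on the maximum margin and also yields the mixing identity, since $f(\nu^*,x)$ depends on $x$ only through $x_1$.

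For the matching upper bound I would exhibit a dual certificate: a probability measure $\mu^*$ on $\mathrm{Supp}(\gD)$ such that
\[
G(w,b,a) := \E_{(x,y)\sim\mu^*}\!\bigl[y\,a\,\phi(\langle w,x\rangle + b)\bigr] \le M^* \qquad \forall\,(w,b,a) \in \sS^{d+1},
\]
with equality at $\theta_1$ and $\theta_2$. Combined with complementary slackness ($\mu^*$ is supported on data where $yf(\nu^*,x)=M^*$), this yields $\min_{(x,y)\in\gD} yf(\nu,x) \le \int yf(\nu,x)\, d\mu^* = \int G \, d\nu \le M^*$ for every $\nu \in \gP(\sS^{d+1})$, so $\nu^*$ is optimal. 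A natural candidate $\mu^*$ places mass $1/2$ uniformly on the positive points $\{(\gamma, z) : z \in \{\pm 1\}^{d-1}\}$ and mass $1/2$ on the single negative point $(-\gamma, 0, \ldots, 0)$; all of these lie in $\mathrm{Supp}(\gD)$ by the IFM support assumptions on coordinates $i > 1$.

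The principal obstacle is establishing $G \le M^*$ uniformly on $\sS^{d+1}$. The difficulty is that ReLU convexity gives $\E_{z \in \{\pm 1\}^{d-1}} \phi(\gamma w_1 + \langle w_{2:d}, z\rangle + b) \ge \phi(\gamma w_1 + b)$ by Jensen's inequality, so naively a neuron with $\|w_{2:d}\| > 0$ looks strictly more attractive than the pure-$x_1$ neurons $\theta_1, \theta_2$. The resolution must exploit the sphere constraint: any norm budget spent on $w_{2:d}$ is stolen from $(w_1, b, a)$, degrading the useful positive contribution $a\bigl(\phi(\gamma w_1 + b) - \phi(-\gamma w_1 + b)\bigr)$. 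Quantitatively, I would bound $\E_z|\gamma w_1 + \langle w_{2:d}, z\rangle + b|$ by a Khintchine-style inequality and reduce to an optimization over the four parameters $(w_1, \|w_{2:d}\|, b, a)$ on the unit sphere; the hypothesis $\gamma \ge 1$ enters through the resulting trade-off and by ensuring the "wrong-sign" ReLU in $f(\nu^*,\cdot)$ is inactive. If uniform Rademacher mass does not exactly cancel the Jensen surplus, the flexibility of IFM permits reweighting or adding further support points (including on the negative class) to $\mu^*$ to achieve tightness at $\theta_1$ and $\theta_2$.
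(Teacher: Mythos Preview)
Your proposal is essentially the paper's proof. Both invoke the Chizat--Bach primal--dual optimality conditions (the paper states them as Lemma~\ref{lem:primdualrich}) and both use exactly the same dual certificate: mass $1/2$ on the single negative point $(-\gamma,0,\dots,0)$ and mass $1/2$ uniform over the $2^{d-1}$ positive points $(\gamma,\pm1,\dots,\pm1)$. The paper verifies $\mathrm{Supp}(\mu^*)$ lies in the margin-attaining set of $\nu^*$, shows $G(w,b,a)\le M^*$ with equality only at $\theta_1,\theta_2$, and finishes uniqueness by checking that among convex combinations of $\delta_{\theta_1},\delta_{\theta_2}$ only the $50$--$50$ mix attains margin $M^*$. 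No reweighting of $\mu^*$ is needed.

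The one place you are vague is the execution of $G\le M^*$, and here a word of caution. After reducing by homogeneity to $|a|=1/\sqrt2$ and writing $X=\sum_{i\ge 2}\sigma_i w_i$, $\alpha=\gamma w_1+b$, the paper uses the symmetry decomposition $\E_\sigma[\phi(\alpha+X)]\le \alpha+\E[X\,\mathbb{1}\{X\ge\alpha\}]$ and then bounds the truncated moment by Cauchy--Schwarz combined with Chebyshev (their Lemma~\ref{lem:expec}); the resulting upper bound is uniquely maximized at $\alpha=\sqrt{(1+\gamma^2)/2}$, $\|w_{2:d}\|=0$. If by ``Khintchine-style'' you mean the triangle inequality $\E|\alpha+X|\le|\alpha|+\|w_{2:d}\|$, be aware that the induced bound $\alpha+\tfrac12\|w_{2:d}\|$ is \emph{not} tight enough: its maximum over $w_1^2+b^2+\|w_{2:d}\|^2=1/2$ is $\tfrac{1}{2\sqrt2}\sqrt{1+4(1+\gamma^2)}>\sqrt{(1+\gamma^2)/2}$. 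What does close the gap (and is in fact simpler than the paper's route) is the second-moment bound $\E|\alpha+X|\le\sqrt{\alpha^2+\|w_{2:d}\|^2}$, which with the sphere constraint yields $\E[\phi(\alpha+X)]\le\tfrac12\bigl(\alpha+\sqrt{1/2+\alpha^2(1-1/(1+\gamma^2))}\bigr)$, strictly increasing in $\alpha$ and equal to $\sqrt{(1+\gamma^2)/2}$ at the endpoint.
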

Moreover, since at least one of the coordinates $\{2,\ldots,d\}$ has disjoint support for $q_i(x_i|y=+1)$ and $q_i(x_i|y=-1)$, $P_{\perp}(x)$ can still perfectly classify the given dataset, thereby implying \ldsb.

It is well known that the rich regime is more relevant for the practical performance of NNs since it allows for feature learning, while lazy regime does not~\citep{chizat2019lazy}.
Nevertheless, in the next section, we present theoretical evidence that \ldsb holds even in the lazy regime, by considering a much more specialized dataset within \ifm.

\subsection{Lazy regime}\label{sec:lazy}
In this regime, we will work with the following dataset within the \ifm family:

For $y \in \{\pm 1\}$ we generate $(x,y) \in D$ as
\[ \x_1 = \gamma y \]
\[ \forall i \in {2,..,d}, \x_i = \left\{ \begin{array}{ccc}
\pm 1 & \mbox{for}
& y=1 \\ 0 & \mbox{for} & y=-1 \\
\end{array}\right.\]
Although the dataset above is a point mass dataset, it still exhibits an important characteristic in common with the rich regime dataset -- only one of the coordinates is linearly separable while others are not. For this dataset, we provide the characterization of max-margin NTK (as in \Eqref{max-marg-ntk}):



\begin{theorem}\label{thm:lazyrandacc}
    For sufficiently small $\epsilon > 0$, there exists an absolute constant $N$ such that for all $d > N$ and $\gamma \in [7, \epsilon\sqrt{d})$, the $\gL_2$ max-margin classifier for joint training of both the layers of 1-hidden layer FCN in the NTK regime on the dataset $D$, i.e., any $f$ satisfying~\Eqref{max-marg-ntk} satisfies:
    \begin{align*}
        & \text{pred}(f(Px^{(1)} + P_{\perp}x^{(2)})) = \text{pred}(f(x^{(1)})) \\ & \qquad \forall \text{ } (x^{(1)},y^{(1)}), (x^{(2)},y^{(2)}) \in D
    \end{align*}
    where $P$ represents the projection matrix on the first coordinate and $\text{pred}(f(x))$ represents the predicted label by the model $f$ on $x$.
\end{theorem}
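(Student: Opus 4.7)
The plan is to exploit the symmetry of $D$ under sign flips of coordinates $2,\ldots,d$. Denote the unique negative point $v = (-\gamma, 0,\ldots,0)$ and the positive points $u^s = (\gamma, s)$ for $s\in\{\pm 1\}^{d-1}$. The primal in \eqref{max-marg-ntk} is strictly convex in $\gH$, so the optimum $f^*$ is unique, and invariance of $D$ under sign flips forces $f^*$ to be invariant as well. Combined with the dual representation $f^* = \sum_i \alpha_i y_i K(x_i,\cdot)$ from SVM, all positive dual weights must coincide, reducing the solution to two parameters: $f^* = \alpha \sum_{s} K(u^s,\cdot) - \beta K(v,\cdot)$. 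Both $\alpha,\beta$ are strictly positive, since $K(v,u^s)>0$ in this regime (cosine is close to $0$, where $\kappa=1/\pi$), which rules out either being zero. Complementary slackness then gives two linear equations
\begin{equation*}
\alpha A - \beta K_{uv} = 1, \qquad 2^{d-1}\alpha K_{uv} - 2\gamma^2\beta = -1,
\end{equation*}
where $A = \sum_s K(u^{s_0},u^s)$ and $K_{uv} = K(v,u^s)$ are independent of $s_0, s$ by symmetry and $K(v,v)=2\gamma^2$ comes from $\kappa(1)=2$.

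A case analysis on $(y^{(1)},y^{(2)})$ shows that when $y^{(1)}=y^{(2)}$ the mixed point coincides with a training example and the sign matches automatically from the SVM constraint. Only two cases are non-trivial: $(+,-)$ produces $p:=(\gamma,0,\ldots,0)$ and requires $f^*(p)>0$, while $(-,+)$ produces $q^s:=(-\gamma,s)$ and requires $f^*(q^s)<0$ for every $s$. The first is immediate: $\langle v,p\rangle = -\gamma^2 = -\|v\|\|p\|$, so $\kappa$ is evaluated at $-1$ where $\kappa(-1)=0$, hence $K(v,p)=0$ and $f^*(p) = \alpha\cdot 2^{d-1} K_{up}>0$ with $K_{up}:=K(u^s,p)>0$. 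For the second, matching norms and cosines gives $K(v,q^s) = K_{up}$, and setting $B := \sum_{s_0} K(u^{s_0},q^s)$ (independent of $s$ by the same sign-flip symmetry) one has $f^*(q^s) = \alpha B - \beta K_{up}$. Eliminating $\alpha$ through the first SVM equation converts the goal into the single inequality
\begin{equation*}
\beta\,(A K_{up} - B K_{uv}) > B.
\end{equation*}

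The main obstacle is verifying this inequality quantitatively for $\gamma \in [7,\epsilon\sqrt{d})$ with $d$ large. In this regime all cosines that appear, namely $\pm\gamma/\sqrt{\gamma^2+d-1}$ and $(\pm\gamma^2 + s\cdot s_0)/(\gamma^2+d-1)$, are $o(1)$, so the Taylor expansion $\kappa(u) = 1/\pi + u + O(u^2)$ (using $\kappa(0)=1/\pi$ and $\kappa'(0)=1$) applies; concentration of $s\cdot s_0$ on the scale $\sqrt{d-1}$ controls the binomial averages. Writing $P := (\gamma^2+d-1)/\pi$, $Q := \gamma\sqrt{\gamma^2+d-1}/\pi$, and $R := \gamma^2$, one obtains the leading-order approximations $A/2^{d-1}\approx P+R$, $B/2^{d-1}\approx P-R$, $K_{up}\approx Q+R$, $K_{uv}\approx Q-R$, from which a short computation yields $AK_{up}-BK_{uv}\approx 2^d R(P+Q)$ and $\beta \approx (P+Q)/(R P(2-1/\pi))$. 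Plugging these in reduces the target to $2(P+Q) > (2-1/\pi)(P-R)$, equivalently $P/\pi + 2Q + R(2-1/\pi) > 0$, which holds with positive slack of order $P/\pi = \Theta(d)$. The role of the hypotheses is exactly to make this slack robust: $\gamma < \epsilon\sqrt{d}$ (with $\epsilon$ small) keeps every cosine uniformly $o(1)$ and guarantees $K_{uv}>0$, while $\gamma \ge 7$ bounds the linear-separation margin away from degenerate cases so that the SVM system is well-conditioned. The technically demanding part is the careful bookkeeping of the $O(u^2)$ Taylor remainders for $\kappa$ and the $O(1/\sqrt{d})$ fluctuations from the binomial concentration, and showing these corrections lie well within the $\Theta(d)$ slack, which completes the proof.
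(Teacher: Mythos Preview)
Your approach and the paper's are essentially the same: reduce the SVM dual to two parameters via the sign-flip symmetry of $D$, solve the resulting $2\times 2$ linear system explicitly, and verify the sign of $f^*$ at the test points using the Taylor expansion $\kappa(u)=1/\pi+u+O(u^2)$ together with binomial concentration over $s$. The paper routes through a stronger intermediate statement (Theorem~\ref{thm:lazy2layer}): it studies the sign of $f^*((\zeta,x_{2:d}))$ for \emph{all} $\zeta$ via a first-order expansion in $\zeta$ around $\zeta=\gamma$ (carefully splitting off the non-smooth tail of the sum over $i$ where $\kappa'$ is unbounded), obtains the explicit thresholds $\zeta\ge 0.73$ and $\zeta\le -0.95\gamma$, and then specializes to $\zeta=\pm\gamma$; this also yields the extra conclusion that the classifier is adversarially brittle along the first coordinate. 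Your direct evaluation only at $p=(\gamma,0,\ldots,0)$ and $q^s=(-\gamma,s)$ is more economical for Theorem~\ref{thm:lazyrandacc} itself, and the shortcut $K(v,p)=\gamma^2\kappa(-1)=0$ is a neat observation the paper does not exploit.

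Two smaller points. First, the paper appends a bias coordinate $1$ to every instance (the network in Section~\ref{sec:prelims} has trainable biases), so in that setting $\langle v,p\rangle/(\|v\|\|p\|)=(1-\gamma^2)/(1+\gamma^2)\neq -1$ and your $K(v,p)=0$ shortcut is lost; one can still check that $\kappa$ is non-positive on the relevant cosine range when $\gamma\ge 7$, so $f^*(p)>0$ survives, but your write-up should track the bias. Second, substituting your approximations actually gives $2(P+Q)^2/[P(2-1/\pi)]>P-R$, not $2(P+Q)>(2-1/\pi)(P-R)$; the correct version is still positive term by term so the conclusion is unaffected. Relatedly, since your leading-order inequality holds regardless of $\gamma$, the role of the hypothesis $\gamma\ge 7$ is not visible in your argument; in the paper it is precisely what makes $\zeta=-\gamma$ satisfy the threshold $\zeta<-0.905\gamma-1.905/\gamma$ in the bias-augmented analysis of the $q^s$ case.
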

The above theorem shows that the prediction on a \emph{mixed} example $P x^{(1)} + P_\perp x^{(2)}$ is the same as that on $x^{(1)}$, thus establishing \ldsb. The proof for this theorem is provided in Appendix \ref{app:lazy}.



\begin{table*}
\caption{Demonstration of \ldsb in the rich regime: This table presents $P_\perp$ and $P$ randomized accuracies (RA) as well as logit changes (LC) on the four datasets. These results confirm that projection of input $x$ onto the subspace spanned by $P$ essentially determines the model's prediction on $x$. $\uparrow$ (resp. $\downarrow$) indicates that \ldsb implies a large (resp. small) value.
}
\label{tab:proj_logits_rich}
\begin{center}
\setlength\tabcolsep{4.5pt}
\begin{tabular}{|M{2.2cm} M{1cm} M{1.7cm} M{1.7cm} M{1.7cm} M{1.9cm} M{2.2cm}|} 
 \hline
 Dataset & $\rank{P}$ & Acc($f$) & $P_\perp$-RA $(\uparrow)$ & $P$-RA $(\downarrow)$ & $P_\perp$-LC $(\downarrow)$ & $P$-LC $(\uparrow)$ \\ [0.5ex] 
 \hline\hline
 b-Imagenette& $1$ & $93.05\pm 0.26$ & $89.94\pm0.22$ & $49.53\pm0.24$ & $28.57\pm0.26$ & $92.13\pm0.24$ \\ 
 Imagenette& $10$ & $79.52\pm0.13$ & $75.89\pm0.25$ & $9.33\pm0.01$ & $33.64\pm1.21$  & $106.29\pm0.53$ \\ 
 Waterbirds& $3$ & $91.88\pm0.1$ & $91.47\pm0.11$ & $62.51\pm0.07$ & $25.24\pm1.03$  & $102.35\pm0.19$ \\
 \mnistcifar& 1 & $99.69\pm0.0$ & $94.15\pm0.21$ & $55.2\pm0.13$ & $38.97\pm0.76$  & $101.98\pm0.31$ \\ 
 \hline
\end{tabular}
\end{center}
\end{table*}

\subsection{Empirical verification}
In this section, we will present empirical results demonstrating \ldsb on $3$ real datasets: Imagenette \citep{imagenette}, a binary version of Imagenette (b-Imagenette) and waterbirds-landbirds \citep{Sagawa*2020Distributionally}
as well as one designed dataset \mnistcifar~\citep{shah2020pitfalls}. More details about the datasets can be found in Appendix~\ref{app:exp-setting}.

\subsubsection{Experimental setup}
We take Imagenet pretrained Resnet-50 models, with $2048$ features, for feature extraction and train a $1$-hidden layer fully connected network, with ReLU nonlinearity, and $100$ hidden units, for classification on each of these datasets. During the finetuning process, we freeze the backbone Resnet-50 model and train only the $1$-hidden layer head (more details in Appendix~\ref{app:exp-setting}) . 


\textbf{Demonstrating \ldsb}:
Given a model $f(\cdot)$, we establish its low dimensional SB by identifying a small dimensional subspace, identified by its projection matrix $P$, such that if we \emph{mix} inputs $x_1$ and $x_2$ as $P x_1 + P_\perp x_2$, the model's output on the mixed input $\xtilde \defeq Px_1 + P_\perp x_2$,  $f(\xtilde)$ is always \emph{close} to the model's output on $x_1$ i.e., $f(x_1)$. We measure \emph{closeness} in four metrics: (1) $P_\perp$-randomized accuracy ($P_\perp$-RA):  accuracy on the dataset $(P x_1+ P_\perp x_2, y_1)$ where $(x_1,y_1)$ and $(x_2, y_2)$ are sampled iid from the dataset, (2) $P$-randomized accuracy ($P$-RA): accuracy on the dataset $(P x_1+ P_\perp x_2, y_2)$, (3) $P_\perp$ logit change ($P_\perp$-LC): relative change wrt logits of $x_1$ i.e., $\norm{f(\xtilde)-f(x_1)}/\norm{f(x_1)}$, and (4)$P$ logit change ($P$-LC): relative change wrt logits of $x_2$ i.e., $\norm{f(\xtilde)-f(x_2)}/\norm{f(x_2)}$. Moreover, we will also show that a subsequent model trained on $(P_{\perp}x,y)$ achieves significantly high accuracy on these datasets.

\begin{table*}
\caption{Demonstration of \ldsb in the lazy regime: This table presents $P_\perp$ and $P$ randomized accuracies as well as logit changes on the four datasets. These results confirm that the projection of input $x$ onto the subspace spanned by $P$ essentially determines the model's prediction on $x$.
}
\label{tab:proj_logits_lazy}
\begin{center}
\setlength\tabcolsep{4.5pt}
\begin{tabular}{|M{2.2cm} M{1cm} M{1.7cm} M{1.7cm} M{1.7cm} M{1.7cm} M{1.9cm}|}
 \hline
 Dataset & $\rank{P}$ & Acc($f$) & $P_\perp$-RA $(\uparrow)$ & $P$-RA $(\downarrow)$ & $P_\perp$-LC $(\downarrow)$ & $P$-LC $(\uparrow)$ \\ [0.5ex] 
 \hline\hline
b-Imagenette& $1$ & $92.75\pm 0.06$ & $90.07\pm0.34$ & $52.09\pm1.34$ & $36.94\pm1.01$ & $138.41\pm1.62$ \\ 
 Imagenette& $15$ & $79.97\pm0.44$ & $68.25\pm1.18$ & $11.92\pm0.82$ & $55.99\pm3.86$  & $133.86\pm5.42$ \\ 
 Waterbirds& $6$ & $90.46\pm0.07$ & $89.67\pm0.42$ & $62.44\pm4.48$ & $36.89\pm5.18$  & $105.41\pm7.06$ \\
 \mnistcifar& $2$ & $99.74\pm0.0$ & $99.45\pm0.17$ & $49.83\pm0.67$ & $24.9\pm0.61$  & $141.12\pm1.86$ \\
 \hline
\end{tabular}
\end{center}
\end{table*}
\begin{table*}
\caption{Mistake diversity and class conditioned logit correlation of models trained independently ($\mdiv{f,f_{\textrm{ind}}}$ and $\cclc{f,f_{\textrm{ind}}}$ resp.) vs trained sequentially after projecting out important features of the first model ($\mdiv{f,f_{\textrm{proj}}}$ and $\cclc{f,f_{\textrm{proj}}}$ resp.). The results demonstrate that $f$ and $f_{\textrm{proj}}$ are more diverse compared to $f$ and $f_\textrm{ind}$.}
\label{tab:err_div_cclc}
\begin{center}
\begin{tabular}{|M{2.2cm} M{2.2cm} M{2.2cm} M{2.2cm} M{2.2cm}|} 
 \hline
 Dataset & $\textrm{Mist-Div}$ $\left(f,f_\textrm{ind}\right)$ $(\uparrow)$ & $\textrm{Mist-Div}$ $\left(f,f_\textrm{proj}\right)$   $(\uparrow)$ & $\textrm{CC-LogitCorr}$ $\left(f,f_\textrm{ind}\right) (\downarrow)$ & $\textrm{CC-LogitCorr}$ $\left(f,f_\textrm{proj}\right) (\downarrow)$ \\ [0.5ex] 
 \hline\hline
 B-Imagenette & $3.87\pm 1.54$ & $21.15\pm1.57$ & $99.88\pm0.01$ & $90.86\pm1.08$ \\ 
 Imagenette & $6.6\pm0.46$ & $11.44\pm0.65$ & $99.31\pm0.12$ & $91\pm0.59$ \\ 
 Waterbirds & $2.9\pm0.52$ & $14.53\pm0.48$ & $99.66\pm0.04$ & $93.81\pm0.48$ \\
 MNIST-CIFAR & $0.0\pm0.0$ & $5.56\pm7.89$ & $99.76\pm0.17$ & $78.74\pm2.28$ \\ [1ex]
 \hline
\end{tabular}
\end{center}
\end{table*}
As described in Sections~\ref{sec:rich} and~\ref{sec:lazy}, the training of $1$-hidden layer neural networks might follow different trajectories depending on the scale of initialization. So, the subspace projection matrix $P$ will be obtained in different ways for rich vs lazy regimes. For rich regime, we will empirically show that the first layer weights have a low rank structure as per Theorem~\ref{thm:rich} while for lazy regime, we will show that though first layer weights do not exhibit low rank structure, the model still has low dimensional dependence on the input as per Theorem~\ref{thm:lazyrandacc}.
\subsubsection{Rich regime}
Theorem~\ref{thm:rich} suggests that asymptotically, the first layer weight matrix will be low rank. However, since we train only for a finite amount of time, the weight matrix will only be approximately low rank. To quantify this, we use the notion of effective rank~\cite{roy2007effective} to measure the rank of the first layer weight matrix.
\begin{definition}\label{def:effrank}
Given a matrix $M$, its effective rank is defined as:
    $\textrm{Eff-rank}(M) = e^{-\sum_i \overline{\sigma_i(M)^2} \log \overline{\sigma_i(M)^2}}$
where $\sigma_i(M)$ denotes the $i^{\textrm{th}}$ singular value of $M$ and $\overline{\sigma_i(M)^2} \defeq \frac{\sigma_i(M)^2}{\sum_i \sigma_i(M)^2}$.
\end{definition}

\begin{figure}%
\centering
\begin{subfigure}{0.49\columnwidth}
  \centering
  \includegraphics[width=\columnwidth]{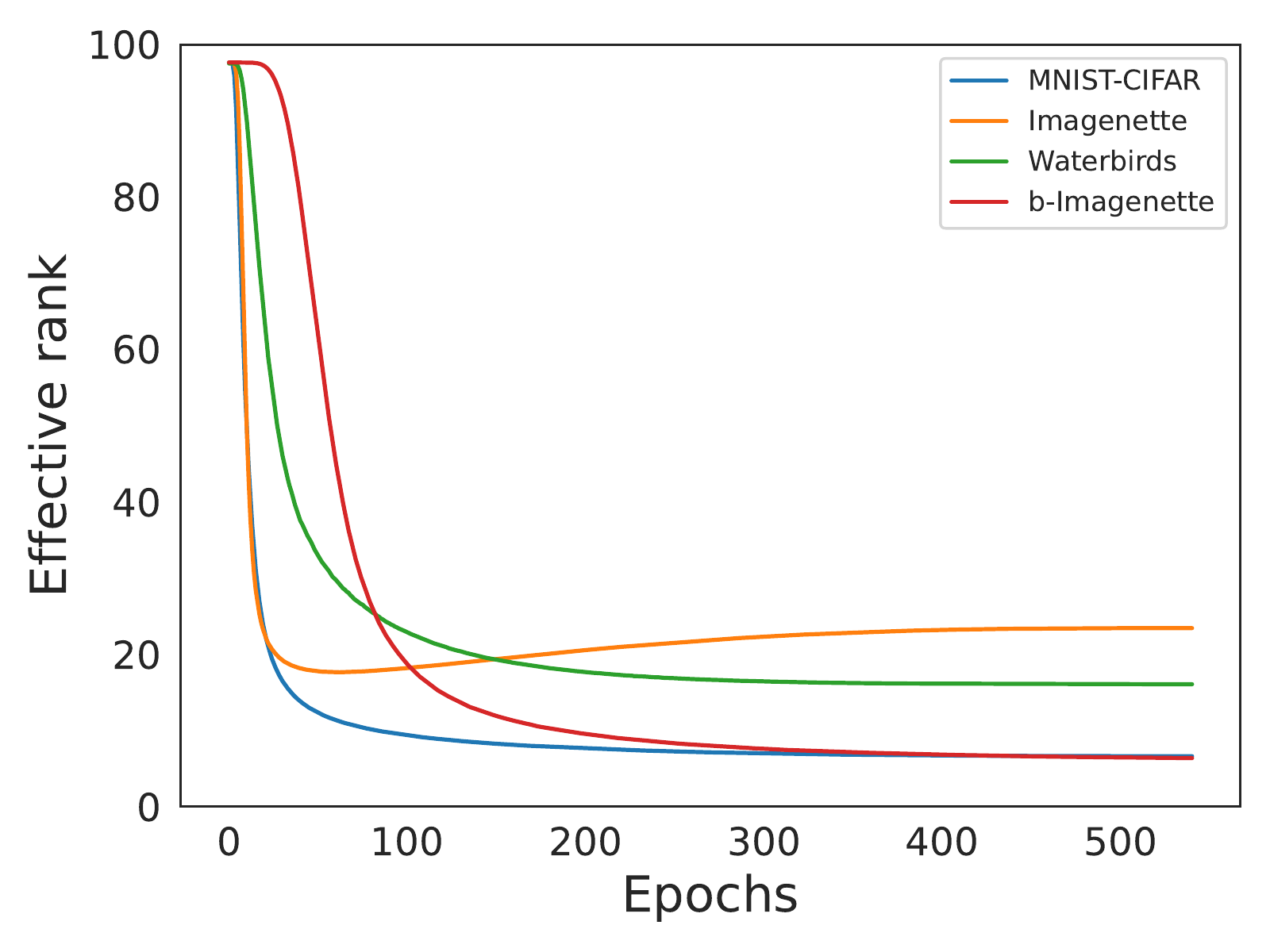}
    \caption{Rich regime}
    \label{fig:rich_eff_rank}
\end{subfigure}
\begin{subfigure}{0.49\columnwidth}
  \centering
  \includegraphics[width=\columnwidth]{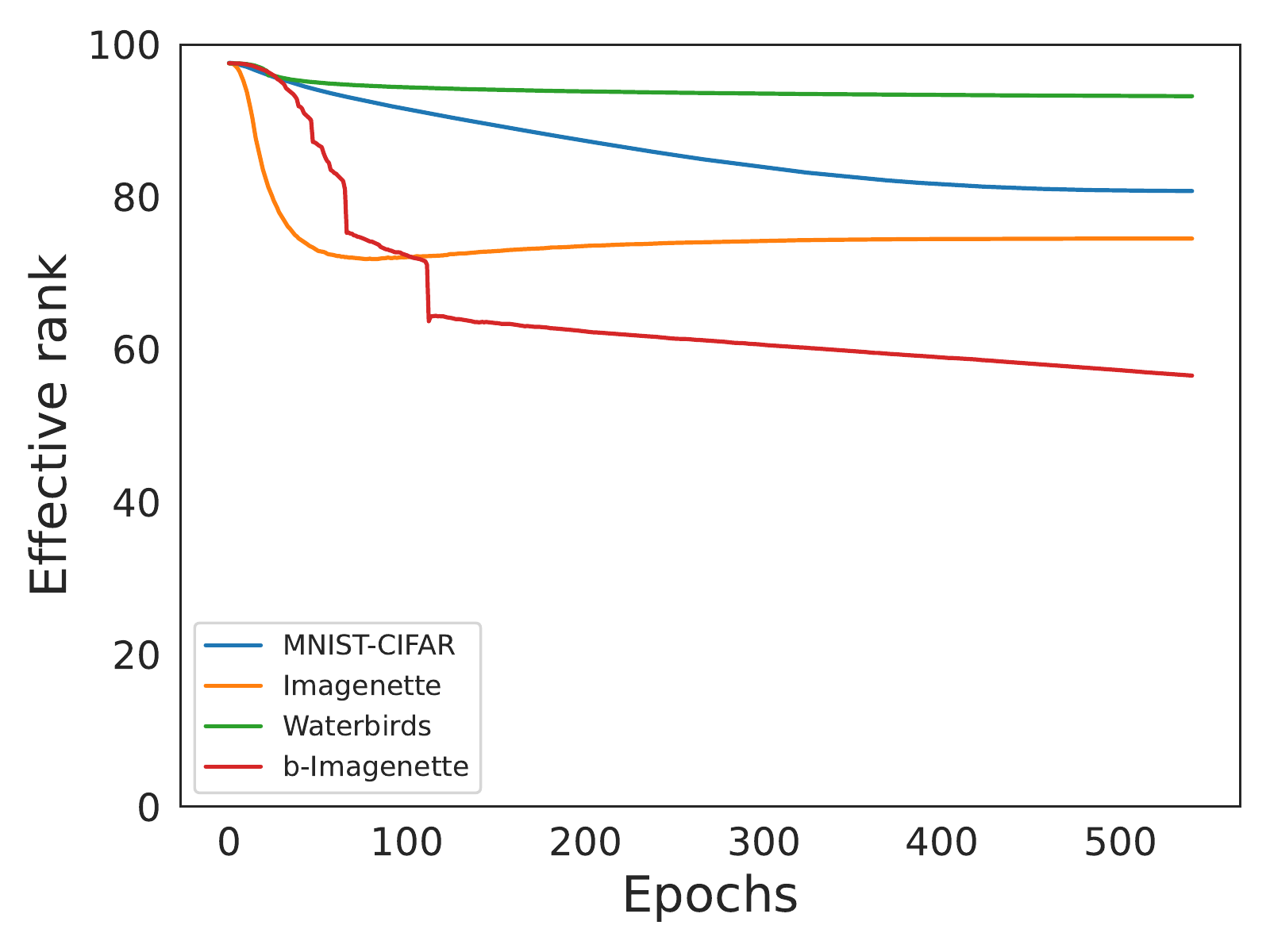}
    \caption{Lazy regime}
    \label{fig:lazy_eff_rank}
\end{subfigure}
\caption{Evolution of effective rank of first layer weight matrices in rich and lazy regimes.}
\label{fig:eff_rank}
\end{figure}

One way to interpret the effective rank is that it is the exponential of von-Neumann entropy~\cite{petz2001entropy} of the matrix $\frac{M M^\top}{\trace{M M^\top}}$, where $\trace{\cdot}$ denotes the trace of a matrix. For illustration, the effective rank of a projection matrix onto $k$ dimensions equals $k$.

Figure~\ref{fig:rich_eff_rank} shows the evolution of the effective rank through training on the four datasets. We observe that the effective rank of the weight matrix decreases drastically towards the end of training. 
To confirm that this indeed leads to \ldsb, we set $P$ to be the subspace spanned by the top singular directions of the first layer weight matrix and compute $P$ and $P_\perp$ randomized accuracies as well as the relative logit change. The results, presented in Table~\ref{tab:proj_logits_rich} confirm \ldsb in the rich regime on these datasets. Moreover, in Appendix \ref{app:additional-exps}, in Table \ref{tab:f_proj_acc}, we show that an independent model trained on $(P_{\perp} x, y)$ achieves significantly high accuracy.


\subsubsection{Lazy regime}
For the lazy regime, it turns out that the rank of first layer weight matrix remains high throughout training, as shown in Figure~\ref{fig:lazy_eff_rank}. However, we are able to find a low dimensional projection matrix $P$ satisfying the conditions of LD-SB (as stated in Def \ref{def:ldsb}) as the solution to an optimization problem. More concretely, given a pretrained model $f$ and a rank $r$, we obtain a \emph{projection matrix} $P$ solving:
\begin{align}
    \min_{P} \frac{1}{n} \sum_{i=1}^n \left(\ce{f(Px^{(i)}),y^{(i)}} + \lambda \ce{f(P^{\perp}x^{(i)}), \U[L]}\right) \label{eqn:opt-P}
\end{align}
where $\U[L]$ represents a uniform distribution over all the $L$ labels, $(x^{(1)}, y^{(1)}),\cdots,(x^{(n)}, y^{(n)})$ are training examples and $\ce{\cdot,\cdot}$ is the cross entropy loss. We reiterate that the optimization is only over $P$, while the model parameters $f$ are unchanged. In words, the above function ensures that the neural network produces correct predictions along $P$ and uninformative predictions along $P_{\perp}$. Table~\ref{tab:proj_logits_lazy} presents the results for $P_\perp$ and $P$-RA as well as LC. As can be seen, even in this case, we are able to find small rank projection matrices demonstrating \ldsb. Similar to the rich regime, in Appendix \ref{app:additional-exps}, in Table \ref{tab:f_proj_acc_lazy}, we show that an independent model trained on $(P_{\perp}x, y)$ in the lazy regime achieves significantly high accuracy.

\section{Training diverse classifiers using \emph{\diverse}}\label{sec:diverse}

\begin{figure*}[t]
\centering
\begin{subfigure}{.4\columnwidth}
  \centering
  \includegraphics[width=\columnwidth]{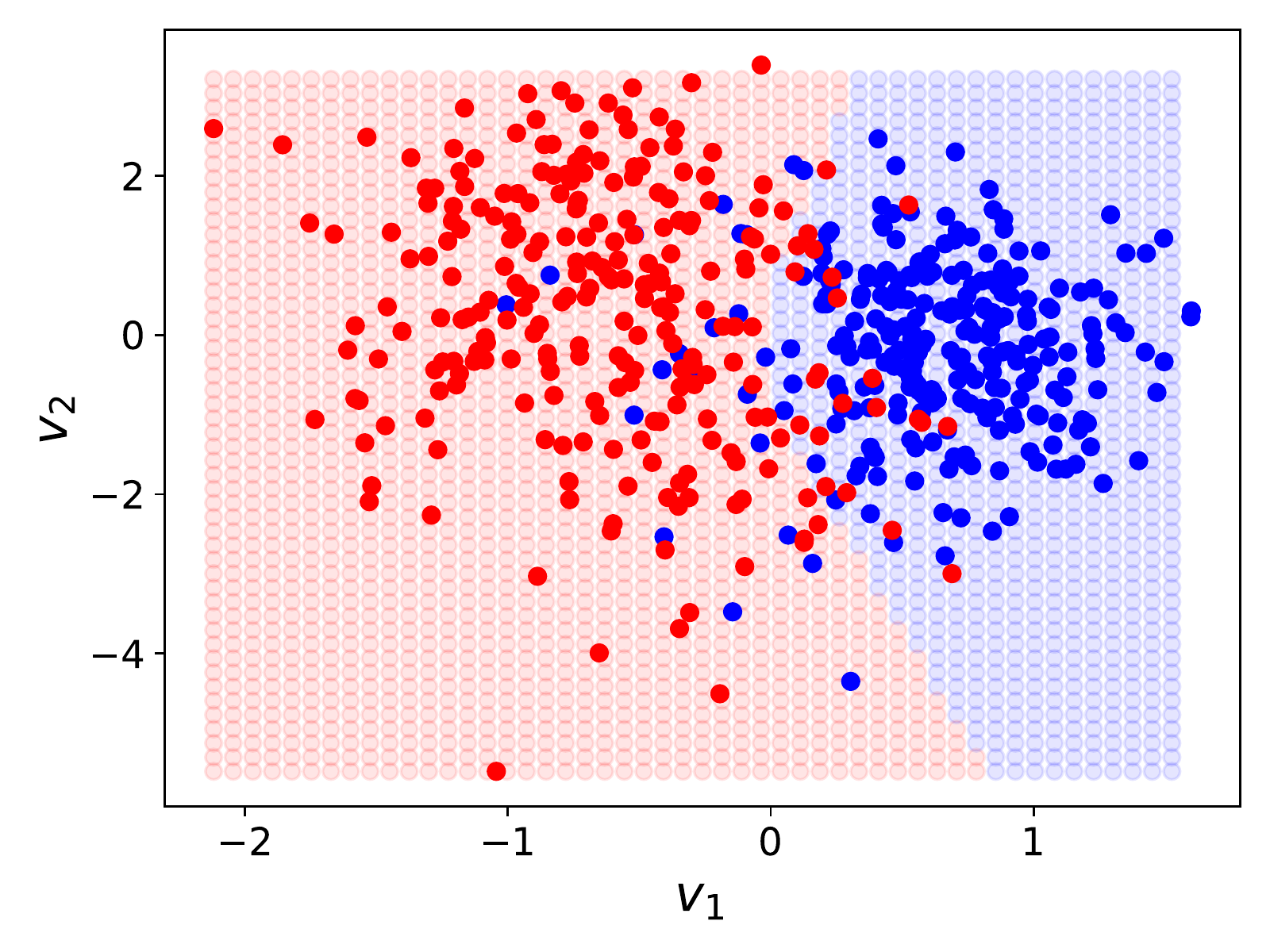}%
    \caption{b-Imagenette ($f$)}
    \label{b-imagenet:eff_rank}%
\end{subfigure}
\begin{subfigure}{.4\columnwidth}
  \centering
  \includegraphics[width=\columnwidth]{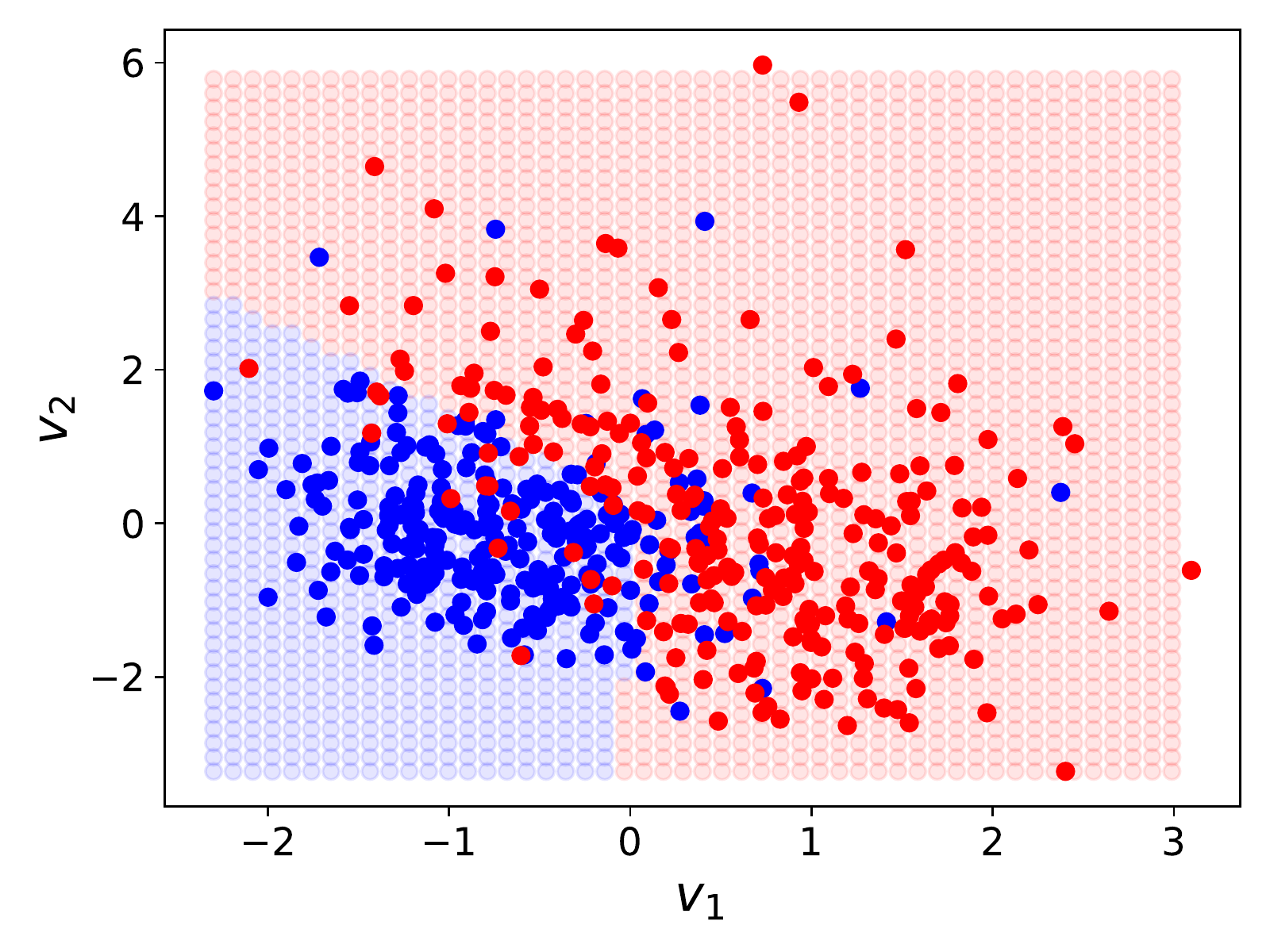}
    \caption{b-Imagenette ($f_\textrm{proj}$)}%
    \label{b-imagenet:eff_rank}%
\end{subfigure}
\begin{subfigure}{.4\columnwidth}
  \centering
  \includegraphics[width=\columnwidth]{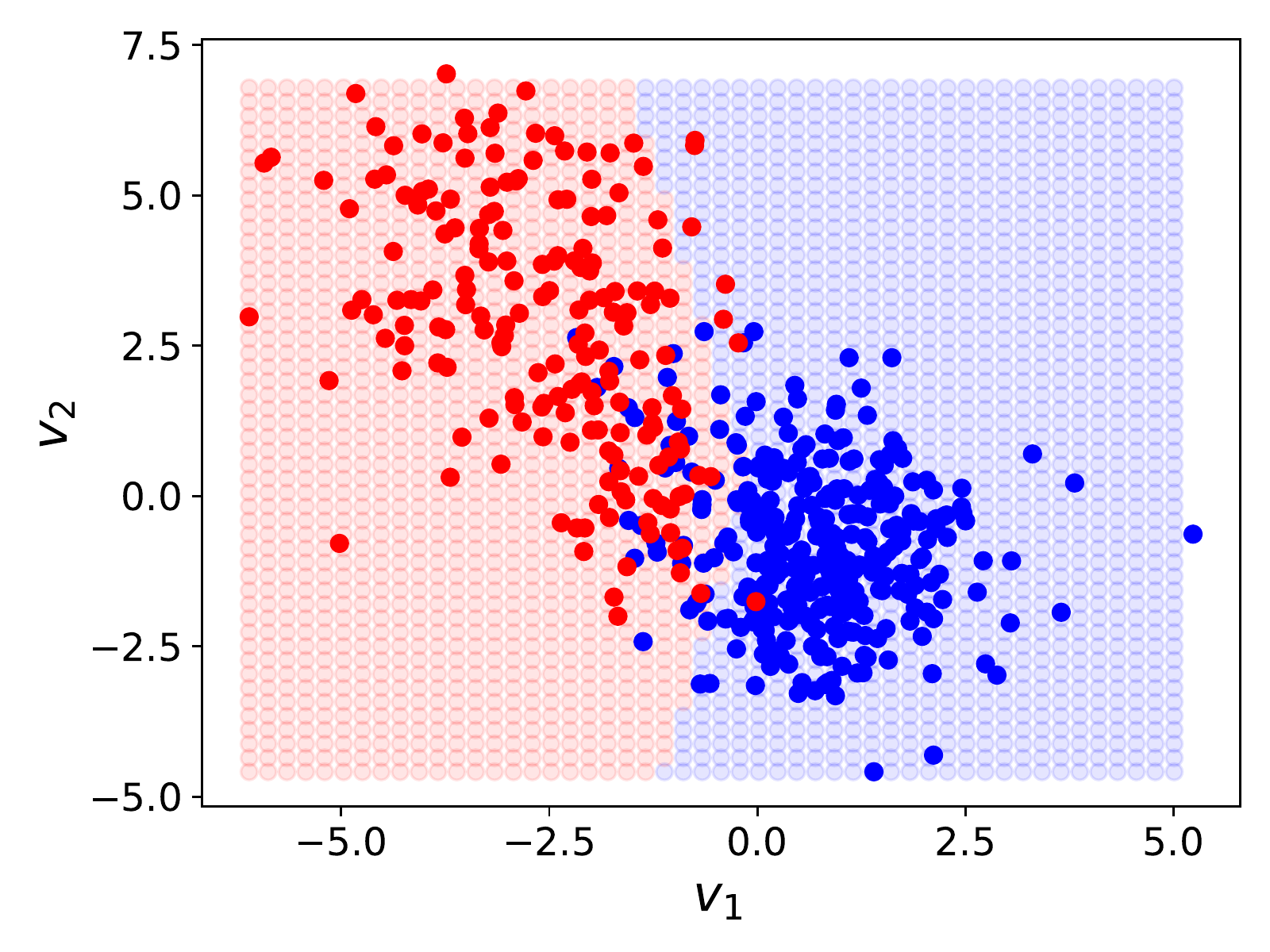}
    \caption{Waterbirds ($f$)}%
    \label{waterbirds:eff_rank}%
\end{subfigure}
\begin{subfigure}{.4\columnwidth}
  \centering
  \includegraphics[width=\columnwidth]{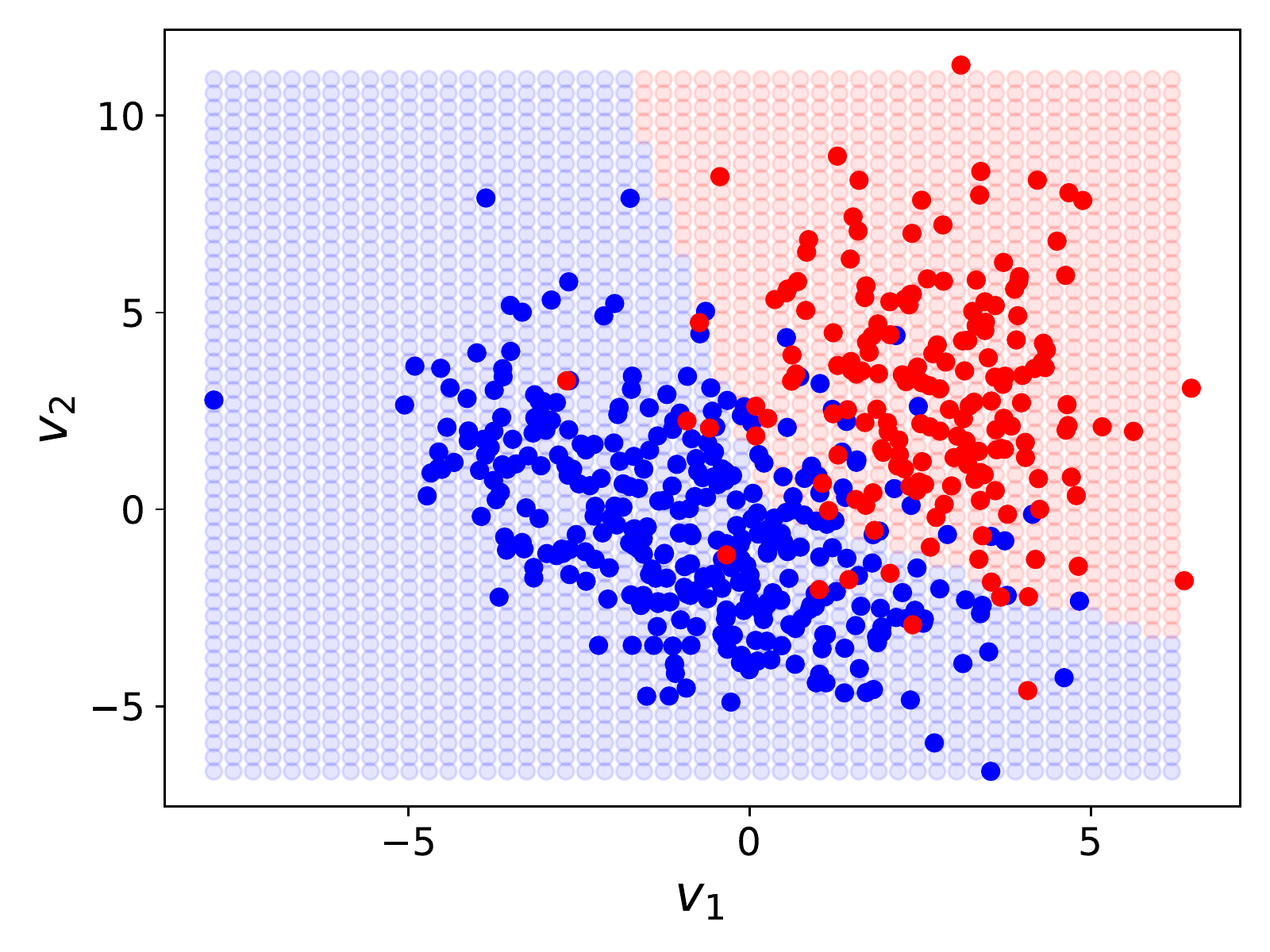}
    \caption{Waterbirds ($f_\textrm{proj}$)}%
    \label{waterbirds:eff_rank}%
\end{subfigure}
\caption{Decision boundaries for $f$ and $f_\textrm{proj}$ for B-Imagenette and Waterbirds datasets, visualized in the top $2$ singular directions of the first layer weight matrix. The decision boundary of $f_{\textrm{proj}}$ is more non-linear compared to that of $f$.}
\label{fig:dec_bound}
\end{figure*}

\begin{figure*}
\centering
\begin{subfigure}{.5\columnwidth}
  \centering
  \includegraphics[width=\columnwidth]{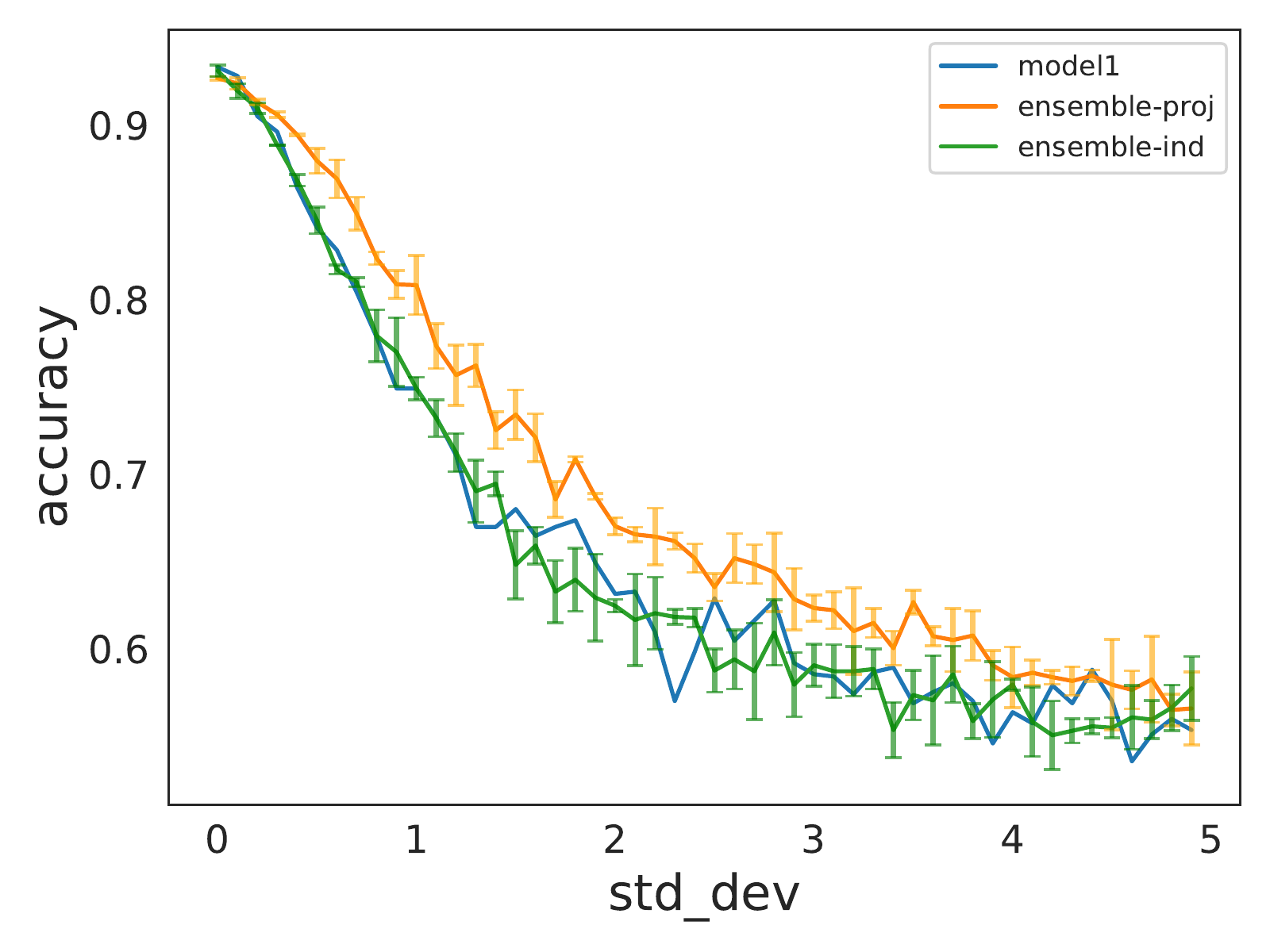}
    \caption{Binary-Imagenette}
    \label{b-imagenet:eff_rank}
\end{subfigure}
\begin{subfigure}{.5\columnwidth}
  \centering
  \includegraphics[width=\columnwidth]{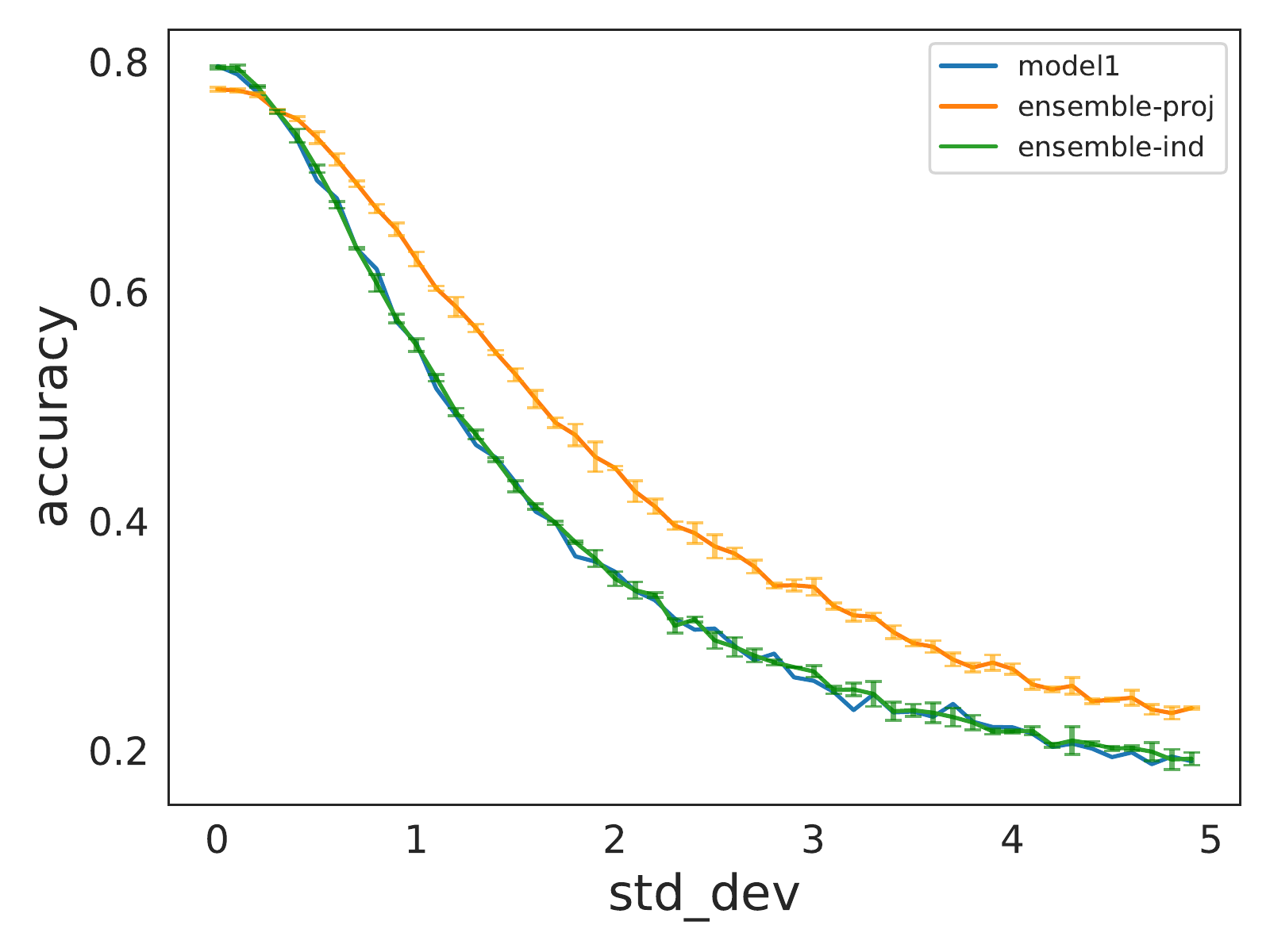}
    \caption{Imagenette}
    \label{imagenet:eff_rank}
\end{subfigure}
\begin{subfigure}{.5\columnwidth}
  \centering
  \includegraphics[width=\columnwidth]{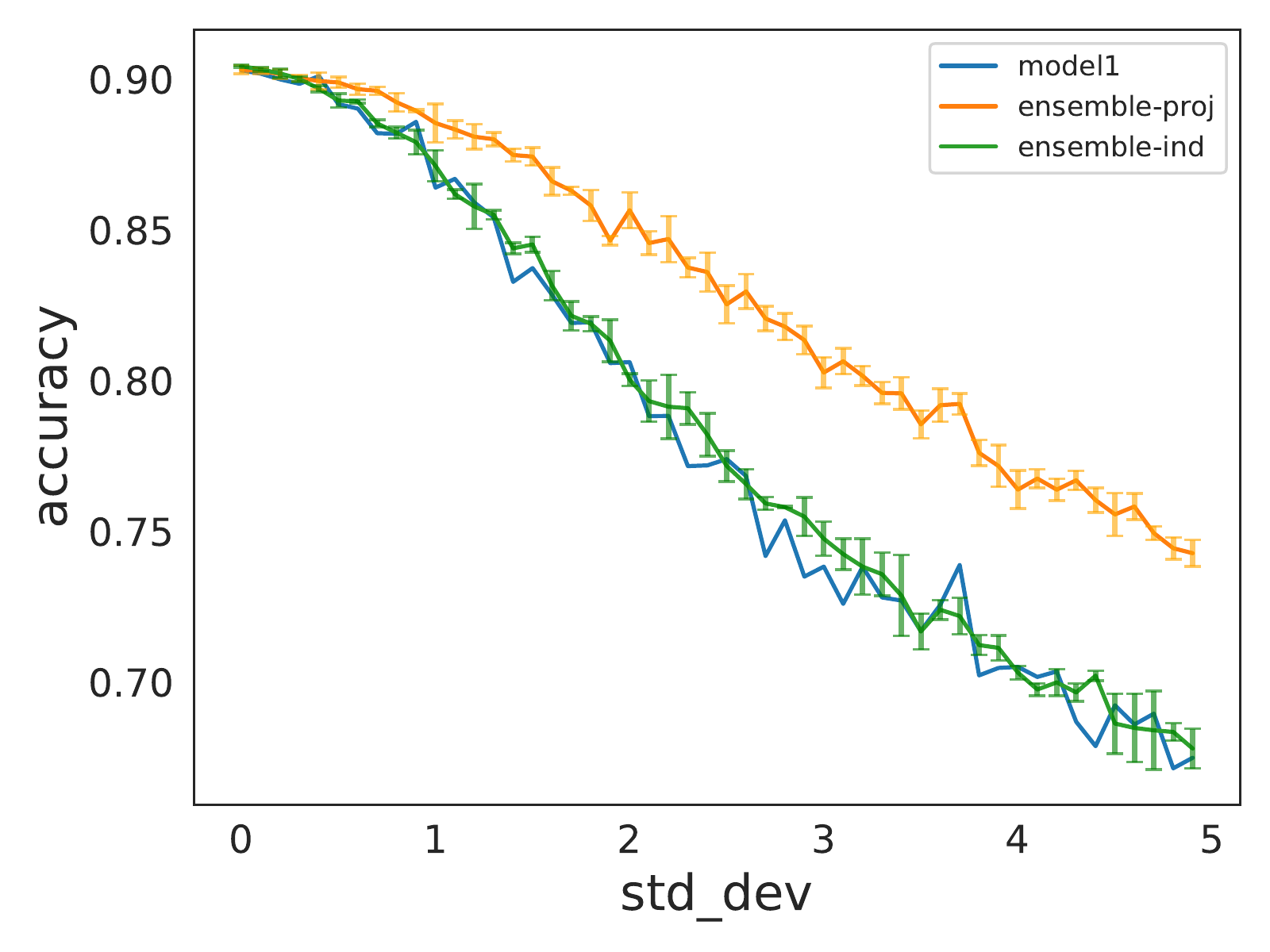}
    \caption{Waterbirds-Landbirds}
    \label{waterbirds:eff_rank}
\end{subfigure}
\caption{Variation of test accuracy vs standard deviation of Gaussian noise added to the pretrained representations of the dataset. Model 1 (i.e., $f$) is kept fixed, and values for both the ensembles are averaged across 3 runs. Standard deviation is shown by the error bars.}
\label{fig:rich_gauss_robust}
\end{figure*}
Motivated by our results on low dimensional SB, in this section, we present a natural way to train diverse models, so that an ensemble of such models could mitigate SB. More concretely, given an initial model $f$ trained with rich regime initialization, we first compute the low dimensional projection matrix $P$ using the top few PCA components of the first layer weights.

We then train another model $f_{\textrm{proj}}$ by projecting the input through $P_\perp$ i.e., instead of using dataset $(x^{(i)}, y^{(i)})$ for training, we use $(P_\perp x^{(i)}, y^{(i)})$ for training the second model (denoted by $f_{\textrm{proj}}$). We refer to this training procedure as $\diverse$ for \emph{orthogonal projection}. First, we show that this method provably learns different set of features for any dataset within \ifm in the rich regime. Then, we provide two  natural diversity metrics and demonstrate that $\diverse$ leads to diverse models on practical datasets. Finally, we provide a natural way of ensembling these models and demonstrate than on real-world datasets such ensembles can be significantly more robust than the baseline model. 

\textbf{Theoretical proof for \ifm}: First, we theoretically establish that $f$ and $f_{\textrm{proj}}$ obtained via $\diverse$ rely on different features for any dataset within \ifm. Consequently, by the definition of \ifm, $f$ and $f_{\textrm{proj}}$ have independent logits conditioned on the class. Its proof appears in Appendix \ref{app:prop-diverse}.
\begin{proposition} \label{prop:diverse-ifm}
Consider any \ifm dataset as described in Section~\ref{ifm:dataset}.
Let $f$ be the model described in Theorem~\ref{thm:CBrich} and $f_\textrm{proj}$ be the second model obtained by $\diverse$. Then, the outputs $f$ and $f_\textrm{proj}$ on $x$ i.e., $f(x)$ and $f_\textrm{proj}(x)$ depend only on $x_1$ and $\set{x_2, \cdots, x_d}$ respectively.
\end{proposition}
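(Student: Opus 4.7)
My plan is to break the proposition into its two independent claims, one per model, and handle each with a short argument. The first, concerning $f$, is essentially a direct corollary of Theorem~\ref{thm:rich}. The theorem pinpoints the limit of rich-regime gradient flow as $\nu^* = \tfrac{1}{2}\delta_{\theta_1} + \tfrac{1}{2}\delta_{\theta_2}$, with both $\theta_i$ having their $w$-component proportional to $\pm\e_1$ and explicit values of $b$ and $a$. Plugging this $\nu^*$ into the infinite-width representation $f(\nu^*,x) = \E_{(w,b,a)\sim\nu^*}[a\,\phi(\langle w,x\rangle + b)]$ yields a closed-form expression for $f(\nu^*,x)$ in which every pre-activation depends on $x$ only through the scalar $x_1$; hence so does $f$.

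For $f_\textrm{proj}$, the plan is to read the conclusion directly off the definition of $\diverse$. By construction, $f_\textrm{proj}$ is trained on the projected dataset $\{(P_\perp x^{(i)}, y^{(i)})\}$, and is used at inference by first applying $P_\perp$ to the input. Writing $f_\textrm{proj}(x) = g(P_\perp x)$ for the trained network $g$ and recalling that $P$ is the rank-$1$ projector onto the first coordinate, we have $P_\perp x = (0, x_2, \ldots, x_d)$. Hence $f_\textrm{proj}(x)$ depends on $x$ only through $\{x_2, \ldots, x_d\}$, as claimed.

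I expect the main obstacle, if one insists on interpreting $f_\textrm{proj}$ as the raw infinite-width network with no $P_\perp$ preprocessing at inference, to be showing that the trained measure puts zero mass on neurons with $w_1 \neq 0$. The natural route is a two-step argument: first, use the symmetry of uniform rich initialization under $w_1 \mapsto -w_1$, combined with invariance of the training data (which has $x_1 \equiv 0$) under $x_1 \mapsto -x_1$, to show via the Wasserstein gradient flow of Theorem~\ref{thm:CBrich} that the limiting $f_\textrm{proj}$ is even in $x_1$; second, invoke an F1-norm economy argument -- any contribution to $w_1$ strictly inflates $\|\theta\|^2$ without changing the function on the training support -- to conclude that the max-margin classifier selects the representative with $w_1 \equiv 0$ almost surely. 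The subtle step is the second, because the projected dataset lacks a linearly separable coordinate and hence does not directly fit the hypothesis of Theorem~\ref{thm:rich}; one would need to redo the F1-norm analysis from scratch rather than invoke Theorem~\ref{thm:rich} as a black box.
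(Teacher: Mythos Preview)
Your argument for $f$ is exactly the paper's: plug the explicit $\nu^*$ from Theorem~\ref{thm:rich} into the network representation and observe that both atoms have $w$ proportional to $\pm\e_1$. One step you leave implicit but the paper spells out is that $P = \e_1\e_1^\top$ must be \emph{derived}, not assumed: $\diverse$ defines $P$ as the top-PCA projector of $f$'s first-layer weights, and since $\nu^*$ is supported on $w\in\{\pm c\e_1\}$ the top singular direction is $\e_1$.

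For $f_\textrm{proj}$ you and the paper diverge slightly. You argue directly from the architecture: $f_\textrm{proj}(x) = g(P_\perp x) = g(0,x_2,\ldots,x_d)$, hence independence of $x_1$ is immediate. The paper instead argues from the training distribution: the projected dataset has $x_1\equiv 0$ for both classes, so the first coordinate carries no class signal and the learned model must use the remaining coordinates. Your route is cleaner and fully rigorous under the natural reading that $P_\perp$ is part of $f_\textrm{proj}$'s forward pass; the paper's phrasing is more heuristic (``not separable along the first coordinate, hence relies on others'') but gestures at the stronger statement that the learned weights themselves ignore $x_1$.

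Your final paragraph, the symmetry-plus-$\mathcal{F}_1$-economy fallback for the case where $P_\perp$ is \emph{not} applied at inference, is thoughtful but unnecessary here: the paper does not attempt that stronger claim, and its own proof is no more detailed than your primary argument. You can safely drop that discussion.
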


\textbf{Diversity Metrics}: Given any two models $f$ and $\ft$, we empirically evaluate their diversity using two metrics. The first is mistake diversity:
$\mdiv{f, \ft} \defeq 1 - \frac{|\{i: f(\x^{(i)})\neq y^{(i)} \text{ }\&\text{ }  \ft(\x^{(i)})\neq y^{(i)}\}|}{\min(|\{i: f(\x^{(i)})\neq y^{(i)}\}|, |\{i: \ft(\x^{(i)})\neq y^{(i)}\}|}$,
where we abuse notation by using $f(x_i)$ (resp. $\ft(x_i)$) to denote the class predicted by $f$ (resp $\ft$) on $x_i$. Higher $\mdiv{f,\ft}$ means that there is very little overlap in the mistakes of $f$ and $\ft$.
The second is class conditioned logit correlation i.e., correlation between outputs of $f$ and $\ft$, conditioned on the class. More concretely,
$\cclc{f, \ft} = \frac{\sum_{y \in \Y} \corr{[f(\x_i)], [\ft(\x_i)]: y_i=y}}{|\Y|}$,
where corr$([f(\x_i)], [\ft(\x_i)]: y_i=y)$ represents the empirical correlation between the logits of $f$ and $\ft$ on the data points where the true label is $y$. Table~\ref{tab:err_div_cclc} compares the diversity of two independently trained models ($f$ and $f_\textrm{ind}$) with that of two sequentially trained models ($f$ and $f_\textrm{proj}$).
The results demonstrate that $f$ and $f_{\textrm{proj}}$ are more diverse compared to $f$ and $f_\textrm{ind}$.

Figure~\ref{fig:dec_bound} shows the decision boundary of $f$ and $f_\textrm{proj}$ on $2$-dimensional subspace spanned by top two singular vectors of the weight matrix. We observe that the decision boundary of the second model is more non-linear compared to that of the first model.

\textbf{Ensembling}: Figure~\ref{fig:rich_gauss_robust} shows the variation of test accuracy with the strength of gaussian noise added to the pretrained representations of the dataset. Here,  an ensemble is obtained by averaging the logits of the two models. We can see that an ensemble of $f$ and $f_\textrm{proj}$ is much more robust as compared to an ensemble of $f$ and $f_\textrm{ind}$


\section{Discussion}\label{sec:disc}
In this work, we characterized the simplicity bias 
 (SB) exhibited by one hidden layer neural networks in terms of the low-dimensional input dependence of the model. We provided a theoretical proof of presence of low-rank SB on a general class of linearly separable datasets. We further validated our hypothesis empirically on real datasets. Based on this characterization, we also proposed OrthoP -- a simple ensembling technique -- to train diverse models and show that it leads to models with significantly better Gaussian noise robustness.

This work is an initial step towards rigorously defining simplicity bias or shortcut learning of neural networks, which is believed to be a key reason for their brittleness \citep{geirhos2020shortcut}. Providing a similar characterization for deeper networks is an important research direction, which requires {\em deeper} understanding of the training dynamics and limit points of gradient descent on the loss landscape.
\section*{Acknowledgements}

We acknowledge support from Simons Investigator Fellowship, NSF grant DMS-2134157, DARPA grant W911NF2010021, and DOE grant DE-SC0022199.

\nocite{langley00}

\bibliography{iclr2023_conference}
\bibliographystyle{icml2023}

\newpage
\appendix
\onecolumn

\section{Proofs for rich and lazy regime}

\subsection{Rich regime}\label{app:rich}
We restate Theorem~\ref{thm:rich} below and prove it.
\begin{theorem}\label{thm:rich-app}
For any dataset in \ifm model satisfying the conditions in Section \ref{ifm:dataset},
$\gamma \geq 1$ and $f(\nu,x)$ as in \Eqref{eq:rich:f}, the distribution $\nu^* = 0.5\delta_{\theta_1} + 0.5\delta_{\theta_2}$ on $\gS^{d+1}$ is the \emph{unique} max-margin classifier satisfying~\Eqref{eq:nustar}, where $\theta_1 = (\frac{\gamma}{\sqrt{2(1+\gamma^2)}}\e_1,\frac{1}{\sqrt{2(1+\gamma^2)}},1/\sqrt{2}) ,\theta_2 = (-\frac{\gamma}{\sqrt{2(1+\gamma^2)}}\e_1,\frac{1}{\sqrt{2(1+\gamma^2)}},-1/\sqrt{2})$ and $\e_1\defeq [1,0, \cdots, 0]$ denotes first standard basis vector. In particular, this implies that if gradient flow for 1-hidden layer FCN under rich initialization in the infinite width limit with cross entropy loss converges, and satisfies the technical conditions in Theorem~\ref{thm:technical}, then it converges to $\nu^*$ satisfying $f(\nu^*, Px_1 + P_{\perp}x_2) = f(\nu^*, x_1) \forall (x_1,y_1),(x_2,y_2) \in D$,
where $P$ represents the (rank-1) projection matrix on the first coordinate.
\end{theorem}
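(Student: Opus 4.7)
First I would directly evaluate $f(\nu^*, x) = \tfrac{1}{2\sqrt{2}}[\phi(\alpha x_1 + \beta) - \phi(-\alpha x_1 + \beta)]$ where $\alpha = \gamma/\sqrt{2(1+\gamma^2)}$ and $\beta = 1/\sqrt{2(1+\gamma^2)}$. The hypothesis $\gamma \geq 1$ forces $-\alpha\gamma + \beta = (1-\gamma^2)/\sqrt{2(1+\gamma^2)} \leq 0$, so on $y = +1$ data (where $x_1 \geq \gamma$) only the first ReLU fires and $yf(\nu^*, x) = (\gamma x_1 + 1)/(4\sqrt{1+\gamma^2})$, and symmetrically for $y = -1$. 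Both sides are minimized at $|x_1| = \gamma$, yielding $\mathrm{margin}(\nu^*) = M := \sqrt{1+\gamma^2}/4$.

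\textbf{Dual witness (upper bound setup).} By the minimax theorem applied to $\mathbb{E}_{(x,y) \sim \rho}[y f(\nu, x)]$ (bilinear in $(\nu, \rho)$), one has $M^* \leq \max_{\theta \in \gS^{d+1}} G_{\mu^*}(\theta)$ for every probability measure $\mu^*$ on $\mathrm{Supp}(\gD)$, where $G_{\mu^*}(\theta) := \mathbb{E}_{\mu^*}[y a\phi(\langle w, x\rangle + b)]$. I would take $\mu^* = \tfrac{1}{2}\mu^+ + \tfrac{1}{2}\mu^-$ with $\mu^+$ uniform over the positive-class corners $\{(\gamma, \epsilon_2, \ldots, \epsilon_d) : \epsilon_i \in \{\pm 1\}\}$ (which lie in $\mathrm{Supp}(\gD_+)$ because $\{-1, +1\} \subseteq \mathrm{Supp}(q_i|+1)$) and $\mu^- = \delta_{(-\gamma, 0, \ldots, 0)}$ (in $\mathrm{Supp}(\gD_-)$ because $0 \in \mathrm{Supp}(q_i|-1)$). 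A direct computation gives $G_{\mu^*}(\theta_j) = M$ for $j = 1, 2$, while the Rademacher symmetry $\mathbb{E}_\epsilon[\epsilon_i] = 0$ makes $\nabla G_{\mu^*}(\theta_j)$ parallel to $\theta_j$, verifying that each $\theta_j$ is a critical point of $G_{\mu^*}$ on $\gS^{d+1}$.

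\textbf{The main obstacle} will be proving $G_{\mu^*}(\theta) \leq M$ \emph{globally} on $\gS^{d+1}$. After WLOG taking $w_1, a \geq 0$ via the antipodal symmetry $(w, b, a) \mapsto (-w, -b, -a)$ and invoking the Rademacher bound $\mathbb{E}_\epsilon[\phi(z + \langle w_{2:d}, \epsilon\rangle)] \leq z^+ + \tfrac{1}{2}\|w_{2:d}\|$, one arrives at
\[ G_{\mu^*}(\theta) \leq \tfrac{a}{2}\big[(w_1\gamma + b)^+ + \tfrac{1}{2}\|w_{2:d}\| - (-w_1\gamma + b)^+\big]. \]
A case analysis on the signs of $\pm w_1\gamma + b$, combined with the Cauchy--Schwarz bound $w_1\gamma + b \leq \sqrt{(1+\gamma^2)(w_1^2 + b^2)}$ and AM--GM on the sphere constraint, should reduce the right-hand side to $\leq M$ with strict inequality outside $\{\theta_1, \theta_2\}$. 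The hypothesis $\gamma \geq 1$ will be used crucially in the ``both ReLUs active'' subcase via $\gamma/(2\sqrt{1+\gamma^2}) \leq \sqrt{1+\gamma^2}/4$ (equality only at $\gamma = 1$); tightly absorbing the $\tfrac{1}{2}\|w_{2:d}\|$ slack, so that $w_{2:d} \neq 0$ forces strict inequality, is the delicate step and may require a sharper Khintchine-type bound on the Rademacher expectation than the crude $\mathbb{E}[|\langle w_{2:d}, \epsilon\rangle|] \leq \|w_{2:d}\|$.

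\textbf{Uniqueness and conclusion.} Strong duality then gives $(\nu^*, \mu^*)$ as an optimal primal-dual pair; the KKT support condition $\mathrm{Supp}(\nu^*) \subseteq \{\theta : G_{\mu^*}(\theta) = M\} = \{\theta_1, \theta_2\}$ pins down the two-atom structure, and the $\tfrac{1}{2}/\tfrac{1}{2}$ mass split is forced by the equality $yf(\nu^*, x) = M$ on $\mathrm{Supp}(\mu^*)$. Invoking Theorem~\ref{thm:technical} then yields convergence of gradient flow to $\nu^*$, and the $\ldsb$ conclusion $f(\nu^*, Px^{(1)} + P_\perp x^{(2)}) = f(\nu^*, x^{(1)})$ is immediate since $f(\nu^*, x)$ depends on $x$ only through $x_1$.
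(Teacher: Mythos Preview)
Your overall primal--dual strategy and your choice of dual witness $\mu^*$ coincide exactly with the paper's. Two issues, one minor and one substantive.

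\textbf{Minor.} The antipodal map $(w,b,a)\mapsto(-w,-b,-a)$ does \emph{not} leave $G_{\mu^*}$ invariant (ReLU is not odd, and the two classes in $\mu^*$ are not symmetric in the non-linear coordinates), so you cannot reduce to $w_1,a\ge 0$ that way. The paper instead uses $1$-homogeneity of $\phi$ to reduce to $|a|=1/\sqrt2$ and then splits on $\mathrm{sign}(a)$; the case $a=-1/\sqrt2$ is easy because dropping the nonnegative Rademacher expectation already gives $G\le \tfrac{|a|}{2}\phi(b-\gamma w_1)\le M$ via Cauchy--Schwarz, with equality only at $\theta_2$.

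\textbf{Substantive.} Your bound $\mathbb E_\epsilon[\phi(z+\langle w_{2:d},\epsilon\rangle)]\le z^++\tfrac12\|w_{2:d}\|$ is provably too loose to close the argument, and no Khintchine-type sharpening of $\mathbb E|\langle w_{2:d},\epsilon\rangle|$ can repair it, because what is needed is that the slack \emph{vanishes} as $z$ approaches its sphere-constrained maximum. Concretely, at $\gamma=1$, $a=1/\sqrt2$, $(w_1,b)=(0.45,0.45)$, $\|w_{2:d}\|=\sqrt{0.095}$ (which lies on $\gS^{d+1}$), your displayed upper bound gives $\tfrac{1}{2\sqrt2}\bigl(0.9+\tfrac12\sqrt{0.095}\bigr)\approx 0.373$, strictly exceeding $M=\sqrt2/4\approx 0.354$. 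The paper's fix is different in kind: writing $X=\langle w_{2:d},\epsilon\rangle$ and $\alpha=\gamma w_1+b$, symmetry of $X$ gives the identity
\[
\mathbb E[\phi(X+\alpha)]=\mathbb E[X\,\mathbb I\{X\ge\alpha\}]+\alpha\bigl(1-\Pr(X\ge\alpha)\bigr)\le \alpha+\mathbb E[X\,\mathbb I\{X\ge\alpha\}],
\]
and the \emph{$\alpha$-shifted} truncated moment is then bounded by Cauchy--Schwarz together with Chebyshev,
\[
\mathbb E[X\,\mathbb I\{X\ge\alpha\}]\le \sqrt{\tfrac12\min\Bigl(\tfrac12,\tfrac{\mathrm{Var}(X)}{2\alpha^2}\Bigr)\,\mathrm{Var}(X)}.
\]
The point is that as $\alpha\uparrow\sqrt{(1+\gamma^2)/2}$ the sphere forces $\mathrm{Var}(X)=\|w_{2:d}\|^2\downarrow 0$, and the Chebyshev factor $\mathrm{Var}(X)/(2\alpha^2)$ shrinks as well; the resulting product decays fast enough that $\alpha+\text{(tail term)}$ is uniquely maximized at $\alpha=\sqrt{(1+\gamma^2)/2}$, i.e.\ at $\theta_1$. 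It is this $\alpha$-dependent tail control (truncation at $\alpha$, not at $0$) that your outline is missing.
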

\begin{proof}[Proof of Theorem \ref{thm:rich-app}:]
\citep{ChizatB20} showed the following primal-dual characterization of maximum margin classifiers in \eqref{eq:nustar}:
\begin{lemma}\citep{ChizatB20}\label{lem:primdualrich}
$\nu^*$ satisfies \eqref{eq:nustar} if there exists a data distribution $p^*$ such that the following two complementary slackness conditions hold:
\begin{equation}\label{eq:nustarcs}
    \text{Supp}(\nu^*) \subseteq \argmax_{(w,b,a)\in \sS^{d+1}} \E_{(x,y)\sim p^*} y[a(\phi(\langle w,x \rangle+b))]
    \quad \text{and}
\end{equation}

\begin{equation}\label{eq:pstarcs}
    \text{Supp}(p^*) \subseteq \argmin_{(x,y)\sim \gD} y\E_{(w,b,a)\sim \nu^*} [a(\phi(\langle w,x \rangle+b))]\,. 
\end{equation}
\end{lemma}
 The plan is to construct a distribution $p^*$ that satisfies the conditions of the above Lemma. 
 
 \textbf{Uniqueness.} Note further that for a fixed $p^*$, $\E_{(x,y)\sim p^*}yf(\nu,x)$ is an upper bound for the margin $\min_{(x,y) \sim \gD} yf(\nu,x)$ of any classifier $\nu$. Hence, for uniqueness, it suffices to show that $\delta_{\theta_1},\delta_{\theta_2}$ are the unique maximizers of the objective on the RHS of \eqref{eq:nustarcs} 
 and that the unique maximum margin convex combination of $\delta_{\theta_1},\delta_{\theta_2}$ over $\gD$ is $\nu^*$.
 
 We first describe the support $D$ of $p^*$. For $y \in \{\pm 1\}$ we generate $(x,y) \in D$ as
\[ \x_1 = \gamma y \]
\[ \forall i \in {2,..,d}, \x_i = \left\{ \begin{array}{ccc}
\pm 1 & \mbox{for}
& y=1 \\ 0 & \mbox{for} & y=-1 \\
\end{array}\right.\]
Now for $(x,y) \in D$, define \begin{equation}\label{eq:pstardef} 
p^*(x,y) = \left\{ \begin{array}{ccc}
0.5 & \mbox{for}
& y=1 \\ 0.5^d & \mbox{for} & y=-1 \\
\end{array}\right.
\end{equation}
Note that $p^*$ is supported on $2^{d-1}$ positive instances and one negative instance.
We begin by showing \eqref{eq:pstarcs}.
\begin{claim}
$p^*$ as in \eqref{eq:pstardef} satisfies \eqref{eq:pstarcs}. Further, the unique maximum margin convex combination of $\delta_{\theta_1},\delta_{\theta_2}$ is $\nu^*$.
\end{claim}
\begin{proof}
Let us find the minimizers $(x,y) \sim \gD$ of $yf(\nu,x) = y\E_{(w,b,a)\sim \nu^*} [a(\phi(\langle w,x \rangle+b))]$ for
any $\nu = \lambda \delta_{\theta_1} + (1-\lambda) \delta_{\theta_2}$, $0 \leq \lambda \leq 1$. 

$yf(\nu,x)$ for $(x,y)$ with $y=-1$ (denoting $x_1$ by $-\alpha_1$, where $\alpha_1 \geq \gamma$) is
\begin{align*}
    yf(\nu,x) = -1\Bigl[&\lambda*\phi\left(\frac{\gamma}{\sqrt{2(1+\gamma^2)}}\e_1^\top (-\alpha_1\e_1) + \frac{1}{\sqrt{2(1+\gamma^2)}}\right)*\frac{1}{\sqrt{2}}\\
    &+ (1-\lambda)*\phi\left(-\frac{\gamma}{\sqrt{2(1+\gamma^2)}}\e_1^\top (-\alpha_1\e_1) + \frac{1}{\sqrt{2(1+\gamma^2)}}\right)*\frac{-1}{\sqrt{2}}\Bigr]\,,\
\end{align*}
and for $(x,y)$ with $y=1$ (denoting $x_1$ by $\alpha_2$, where $\alpha_2 \geq \gamma$) is
\begin{align*}
    yf(\nu,x) = 1\Bigl[&\lambda*\phi\left(\frac{\gamma}{\sqrt{2(1+\gamma^2)}}\e_1^\top (\alpha_2\e_1) + \frac{1}{\sqrt{2(1+\gamma^2)}}\right)*\frac{1}{\sqrt{2}} \\
    &+ (1-\lambda)*\phi\left(-\frac{\gamma}{\sqrt{2(1+\gamma^2)}}\e_1^\top (\alpha_2\e_1) + \frac{1}{\sqrt{2(1+\gamma^2)}}\right)*\frac{-1}{\sqrt{2}}\Bigr] \,.
\end{align*}
As $\gamma \geq 1$, the expressions above equal $\lambda\frac{\sqrt{\gamma \alpha_1+1}}{2}$ and $(1-\lambda)\frac{\sqrt{\gamma\alpha_2+1}}{2}$ respectively, and hence are minimized at $\alpha_1 = \alpha_2 = \gamma$. Hence, the margin of $\nu$ is $\min(\lambda,1-\lambda)\frac{\sqrt{1+\gamma^2}}{2}$ which is uniquely maximized at $\lambda = 1/2$. Further for $\lambda = 1/2$, all points in $D$ have the same value of $y f(\nu,x)$.
\end{proof}
In the rest of the proof we show \eqref{eq:nustarcs}, Let us denote by $g(w,b,a) \coloneqq \E_{(x,y)\sim p^*} y[a(\phi(\langle w,x \rangle+b))]$. We show that $\delta_{\theta_1},\delta_{\theta_2}$ are the only maximizers of $g(w,b,a)$ over $\sS^{d+1}$.

 We first find $g(\theta_1),g(\theta_2)$.
 \begin{align*}g(\theta_1) &= \Pr(y=1)\cdot1\cdot\frac{1}{\sqrt{2}}\cdot\phi\left(\frac{\gamma}{\sqrt{2(1+\gamma^2)}}\e_1^T(\gamma\e_1)+\frac{1}{\sqrt{2(1+\gamma^2)}}\right) \\&+ \Pr(y=-1)\cdot-1\cdot\frac{1}{\sqrt{2}}\cdot\phi\left(\frac{\gamma}{\sqrt{2(1+\gamma^2)}}\e_1^T(-\gamma\e_1)+\frac{1}{\sqrt{2(1+\gamma^2)}}\right) =\frac{\sqrt{\gamma^2+1}}{4}\,,\end{align*}
 where the first term is because $w_2,w_3,\ldots,w_d$ are zero for $\theta_1$. Similarly, $g(\theta_2) = \frac{\sqrt{\gamma^2+1}}{4}$. We now show that $g(w,a,b) < \frac{\sqrt{\gamma^2+1}}{4}$ for $(w,a,b)\notin \{{\theta_1},{\theta_2}\}$.

We begin by showing the following simple but useful claim.

\begin{claim}
All maximizers of $g(w,b,a)$ over $\sS^{d+1}$ satisfy $|a|=1/\sqrt{2}$. 
\end{claim}
\begin{proof}
The proof essentially follows from the $1-$homogeneity of the ReLU function $\phi$ and separability of $g(w,b,a)$. Note that $g(w,b,a) = \sqrt{\|w\|^2+b^2}a\cdot g(w',b',1)$ where $\|w'\|^2+b^2=1$. Maximizing $g(w,b,a)$ is equivalent to maximizing $g(w',b',1)$ over $\sS^{d}$ and $a\sqrt{\|w\|^2+b^2}$ over $\sS^{d+1}$ respectively. The second of these has its unique maximum at $|a|=1/\sqrt{2}$, completing the proof.
\end{proof}

Now express $g(w,b,a)$ as
\begin{align} g\left(w,b,a\right) &= a\left(\Pr(y=1)\E[\phi(w^Tx+b)|y=1] - \Pr(y=-1)\E[\phi(w^Tx+b)|y=-1]\right)\notag\\
&=\frac{a}{2}\left(\E_\sigma\big[\phi(\gamma w_1+b+\sum^d_{i=2}\sigma_i w_i)\big]-\phi(b-\gamma w_1)\right)\,,\label{eq:gwba}
\end{align}
where $\sigma_i$ are independent Rademacher random variables. We have two cases on $a$:

\noindent\textbf{Case 1: $a = 1/\sqrt{2}$.}
By \eqref{eq:gwba} we have
\[g(w,b,1/\sqrt{2}) \leq \frac{1}{2\sqrt{2}}\E_\sigma\big[\phi(\gamma w_1+b+\sum^d_{i=2}\sigma_i w_i)\big]\,.\] To simplify the above, define the random variable $X = \sum^d_{i=2}\sigma_i w_i$ and denote $\gamma w_1+b$ by $\alpha$. Note that $|\alpha| = |\gamma w_1 + b|\leq \sqrt{\frac{\gamma^2+1}{2}}$ which follows from $\|w\|^2+b^2=1/2$. The expectation in the last expression above becomes \begin{align*}
\E[\phi(X+\alpha)] &= \E[(X+\alpha)\1\{X+\alpha\geq 0\}] = \E[X\1\{X\geq-\alpha\}]+\alpha\Pr(X\geq-\alpha)\\
&=\E[X\1\{X\geq\alpha\}]+\alpha(1-\Pr(X\geq \alpha))\leq \E[X\1\{X\geq\alpha\}] + \alpha \,,
\end{align*}
where the last equality follows from symmetry of $X$. Note that $\text{Var}(X) = \sum^d_{i=2} w_i^2$ which is at most $\frac{1}{2} - \frac{\alpha^2}{1+\gamma^2}$ (using $\gamma w_1+b=\alpha$ and $\|w\|^2+b^2=1/2$).
Using~\ref{lem:expec} to upper bound $\E[X\1\{X\geq\alpha\}]$ we have
\[\E[\phi(X+\alpha)]\leq  \alpha+\sqrt{\frac{1}{2} \min\left(\frac{1}{2}, \frac{\frac{1}{2} - \frac{\alpha^2}{1+\gamma^2}}{2\alpha^2}\right) \left(\frac{1}{2} - \frac{\alpha^2}{1+\gamma^2}\right)}\,.\]
We can check that the RHS of the above has its unique maximizer at $\alpha=\sqrt{\frac{{1+\gamma^2}}{2}}$ for $|\alpha| \leq \sqrt{\frac{{1+\gamma^2}}{2}}$. Hence $g(w,b,a) \leq \frac{\sqrt{1+\gamma^2}}{4}$ in this case. We are now done since any $(w_1,b)$ satisfying $\gamma w_1 + b = \sqrt{\frac{1+\gamma^2}{2}}$ and $w^2_1 + b^2 \leq 1/2$ has $b = \frac{1}{\sqrt{2(1+\gamma^2)}}$.

\noindent\textbf{Case 2: $a = -1/\sqrt{2}$.} Using \eqref{eq:gwba} we have $ g(w,b,-1/\sqrt{2}) \leq \phi(b-\gamma w_1)/2\sqrt{2}$ which for $b^2 + w_1^2 \leq 1/2$ attains its unique maximum $\sqrt{\frac{\gamma^2+1}{4}}$ at $b = \frac{1}{\sqrt{2(1+\gamma^2)}}$.

Finally, note that the weights of the \emph{trained} network $(w,b,a)$ are sampled from $\nu^*$. Hence, the final claim in the theorem about $f(\nu^*, Px_1 + P_\perp x_2)$ follows since the distribution of $w$ only has a support on $\e_1$ and $-\e_1$.

\end{proof}


\subsubsection{Auxiliary lemmas for rich regime}
\begin{lemma}\label{lem:expec}
For any symmetric discrete random variable X with bounded variance, for $\alpha > 0$,
\[ \mathbb{E}[X \mathbb{I}(X \geq \alpha)] \leq \sqrt{\frac{1}{2} \min\left(\frac{1}{2}, \frac{Var(X)}{2\alpha^2}\right) Var(X)}\,.\]
\end{lemma}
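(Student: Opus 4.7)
\textbf{Proof plan for Lemma~\ref{lem:expec}.}

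My plan is to reduce the one-sided truncated expectation to a two-sided one via symmetry, then apply Cauchy--Schwarz and Chebyshev. Concretely, since $X$ is symmetric about $0$, we have $\E[X\1\{X \geq \alpha\}] = -\E[X\1\{X \leq -\alpha\}]$, so
\begin{equation*}
\E[X\1\{X \geq \alpha\}] = \tfrac{1}{2}\E[|X|\,\1\{|X| \geq \alpha\}].
\end{equation*}
This is the crucial opening move: it turns an asymmetric truncation into something controlled by the second moment of $X$ itself.

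Next I would apply Cauchy--Schwarz to the right-hand side, treating $|X|$ and $\1\{|X| \geq \alpha\}$ as the two factors, to get
\begin{equation*}
\E[|X|\,\1\{|X| \geq \alpha\}] \leq \sqrt{\E[X^2]\cdot \Pr(|X|\geq \alpha)} = \sqrt{\Var(X)\cdot \Pr(|X|\geq \alpha)},
\end{equation*}
where I used $\E[X]=0$ (symmetry) to identify $\E[X^2]$ with $\Var(X)$. To bound the tail probability I would invoke both the trivial bound $\Pr(|X|\geq\alpha) \leq 1$ and Chebyshev's inequality $\Pr(|X|\geq \alpha) \leq \Var(X)/\alpha^2$, combining them into $\Pr(|X|\geq \alpha) \leq \min\bigl(1,\,\Var(X)/\alpha^2\bigr)$.

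Putting the three steps together yields
\begin{equation*}
\E[X\1\{X\geq \alpha\}] \;\leq\; \tfrac{1}{2}\sqrt{\Var(X)\cdot \min\!\Bigl(1,\tfrac{\Var(X)}{\alpha^2}\Bigr)} \;=\; \sqrt{\tfrac{1}{2}\min\!\Bigl(\tfrac{1}{2},\tfrac{\Var(X)}{2\alpha^2}\Bigr)\Var(X)},
\end{equation*}
where the last equality just absorbs the factor of $1/2$ into the $\min$. This matches the bound in the lemma exactly. There is no real obstacle here; the only slightly delicate point is remembering to pull the symmetry identity first so that Cauchy--Schwarz produces $\Var(X)$ (rather than a one-sided second moment that would be harder to relate to Chebyshev cleanly), but after that the proof is a two-line calculation.
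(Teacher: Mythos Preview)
Your proof is correct and uses the same three ingredients as the paper's argument: symmetry of $X$, Cauchy--Schwarz, and Chebyshev's inequality. The only cosmetic difference is ordering: you invoke symmetry first to pass to the two-sided quantity $\tfrac{1}{2}\E[|X|\1\{|X|\geq\alpha\}]$ and then apply Cauchy--Schwarz with the full second moment $\E[X^2]=\Var(X)$, whereas the paper applies Cauchy--Schwarz directly to the one-sided sum (obtaining $\sqrt{p(X\geq\alpha)\cdot\sum_{x\geq\alpha}x^2 p(x)}$) and only then uses symmetry to halve each factor; the final bound is identical.
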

\begin{proof}
\begin{equation}\label{eq:rv}\mathbb{E}[X \mathbb{I}(X \geq \alpha)] = \sum_{x \geq \alpha} xp(x) = \sum_{x \geq \alpha} \sqrt{p(x)}\sqrt{p(x)}x \leq \sqrt{p(X \geq \alpha) \sum_{x \geq \alpha} x^2 p(x)} \,,\end{equation}
where the last inequality is by Cauchy-Schwartz. Also by Chebyshev's inequality, $p(|X| \geq \alpha) \leq Var(X)/2\alpha^2$. Combining this with \eqref{eq:rv} and using symmetry of $X$ and non-negativity of $\alpha$ gives the required lemma.
\end{proof}

\subsubsection{Proof of proposition \ref{prop:diverse-ifm}} \label{app:prop-diverse}
We restate Proposition \ref{prop:diverse-ifm} and prove it
\begin{proposition}
Consider any \ifm dataset as described in Section~\ref{ifm:dataset}.
Let $f$ be the model described in Theorem~\ref{thm:CBrich} and $f_\textrm{proj}$ be the second model obtained by $\diverse$. Then, the outputs $f$ and $f_\textrm{proj}$ on $x$ i.e., $f(x)$ and $f_\textrm{proj}(x)$ depend only on $x_1$ and $\set{x_2, \cdots, x_d}$ respectively.
\end{proposition}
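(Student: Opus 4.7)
The proposition has two parts, one for each of the two models $f$ and $f_{\textrm{proj}}$ produced by \diverse. The first part is essentially a direct corollary of Theorem~\ref{thm:rich}, while the second part is an almost immediate consequence of how \diverse is defined, once the first part pins down the projection matrix $P$ exactly. My plan is: first, deduce the explicit structure of $f$ from Theorem~\ref{thm:rich}; second, identify the projection matrix $P$ that \diverse extracts from this $f$; third, conclude that $f_{\textrm{proj}}$ sees only the complementary coordinates by construction.

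For the first claim, Theorem~\ref{thm:rich} tells us that under the stated hypotheses, gradient flow converges to $\nu^{*} = 0.5\,\delta_{\theta_1} + 0.5\,\delta_{\theta_2}$, where both $\theta_1$ and $\theta_2$ have first-layer weight components of the form $\pm\tfrac{\gamma}{\sqrt{2(1+\gamma^2)}}\,\e_1$. In particular, every $(w,b,a)$ in the support of $\nu^{*}$ satisfies $w = c\,\e_1$ for some scalar $c$, so $\langle w, x\rangle = c\,x_1$ depends on $x$ only through $x_1$. Plugging into $f(\nu^{*}, x) = \mathbb{E}_{(w,b,a)\sim \nu^{*}}[a\,\phi(\langle w,x\rangle + b)]$ then shows that $f(x)$ is a function of $x_1$ alone.

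For the second claim, recall that \diverse constructs $P$ from the top principal directions of the first-layer weight matrix of $f$. By the structure of $\nu^{*}$ derived above, the supported weight vectors lie entirely in $\mathrm{span}(\e_1)$, so in the idealized infinite-width limit $P = \e_1\e_1^{\top}$ and $P_{\perp} = I - \e_1\e_1^{\top}$ is the projection onto $\mathrm{span}(\e_2,\ldots,\e_d)$. The second model $f_{\textrm{proj}}$ is trained on inputs $P_{\perp}x^{(i)}$, i.e., on inputs whose first coordinate is identically zero, and is likewise evaluated on $P_{\perp}x$; hence $f_{\textrm{proj}}(x)$ depends on $x$ only through $P_{\perp}x = (0, x_2, \ldots, x_d)$, which is a function of $\{x_2,\ldots,x_d\}$. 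The \ifm assumption that some coordinate $i\neq 1$ has disjoint support conditioned on $y$ ensures that the reduced training problem on $(P_{\perp}x, y)$ is still classifiable, so this construction is non-vacuous.

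The main conceptual obstacle is that the entire nontrivial content of the proposition is already embedded in Theorem~\ref{thm:rich}: once one knows that every support point of $\nu^{*}$ has its weight vector aligned with $\pm\e_1$, the claim about $f$ is just a restatement, and the claim about $f_{\textrm{proj}}$ reduces to the tautology that projecting out $\e_1$ before feeding into any map destroys dependence on $x_1$. The statement therefore does not require any new dynamical analysis of the second training run; it is a structural corollary of the first.
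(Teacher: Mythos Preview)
Your proposal is correct and follows essentially the same approach as the paper: invoke Theorem~\ref{thm:rich} to conclude that the support of $\nu^*$ has all first-layer weights along $\pm\e_1$, deduce $P=\e_1\e_1^\top$, and then observe that $f_{\textrm{proj}}$ is built from $P_\perp x$ and hence depends only on $x_2,\ldots,x_d$. If anything, your write-up is slightly more explicit than the paper's about why projecting through $P_\perp$ kills dependence on $x_1$.
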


\begin{proof}
As shown in Theorem \ref{thm:CBrich}, the final distribution of the weights is given by $\nu^* = 0.5\delta_{\theta_1} + 0.5\delta_{\theta_2}$, where $\theta_1 = (\frac{\gamma}{\sqrt{2(1+\gamma^2)}}\e_1,\frac{1}{\sqrt{2(1+\gamma^2)}},1/\sqrt{2}) ,\theta_2 = (-\frac{\gamma}{\sqrt{2(1+\gamma^2)}}\e_1,\frac{1}{\sqrt{2(1+\gamma^2)}},-1/\sqrt{2})$ and $\e_1\defeq [1,0, \cdots, 0]$ denotes first standard basis vector.

As the first layer weight matrix only has support along the $\e_1$ direction, therefore its top singular vector also points along the $\e_1$ direction. Hence, $P  = \e_1 \e_1^\top$ and $P_{\perp} = I - \e_1\e_1^\top$, where $I$ denotes the identity matrix. Thus, the dataset obtained by projecting the input through $P_{\perp}$ has value $0$ for the linear coordinate, for both $y=+1$ and $y=-1$. Hence, it is not separable along the linear coordinate. Thus, the second model $f_{proj}$ relies on other coordinates for classification.
\end{proof}

\subsection{Lazy regime}\label{app:lazy}
Theorem~\ref{thm:lazyrandacc} is a corollary of the following more general theorem. \begin{theorem}\label{thm:lazy2layer}
    Consider a point $x \in D$. For sufficiently small $\epsilon > 0$, there exist an absolute constant $N$ such that for all $d > N, \gamma < \epsilon\sqrt{d}$ and $\gamma \geq 7$, for the joint training of both the layers of 1-hidden layer FCN in the NTK regime, the prediction of any point of the form $(\zeta, x_{2:d})$ satisfies the following:
        \begin{enumerate}
            \item For $\zeta \geq 0.73$, the prediction is positive.
            \item For $\zeta \leq -0.95\gamma$, the prediction is negative.
        
        \end{enumerate}
\end{theorem}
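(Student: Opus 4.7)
The plan is to use the representer theorem together with the symmetries of $D$ to reduce the infinite-dimensional RKHS problem in \eqref{max-marg-ntk} to a two-variable optimization. Writing the positive class as $\{(\gamma, s) : s \in \{\pm 1\}^{d-1}\}$ and the lone negative point as $b := (-\gamma, 0)$, the dataset and the max-margin objective are invariant under permutations of coordinates $2,\ldots,d$ and under sign flips in those coordinates. Hence the unique optimal $f$ is symmetric, and by the representer theorem all positive examples carry a common dual weight $\alpha \geq 0$ while the negative one carries some $\beta \geq 0$:
\[ f(x) \;=\; \alpha \sum_{s \in \{\pm 1\}^{d-1}} K(x, (\gamma, s)) \;-\; \beta\, K(x, b). \]

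The scalars $\alpha, \beta$ are pinned down by the two distinct complementary-slackness conditions $f((\gamma, s_0)) = 1$ and $-f(b) = 1$ (by symmetry, every positive example is a support vector). This is a $2 \times 2$ linear system with coefficients $S_+ := \sum_s K((\gamma, s_0), (\gamma, s))$, $K_{+-} := K((\gamma, s_0), b)$, and $K_{bb} := K(b,b) = 2\gamma^2$. Using Theorem~\ref{thm:lazyBM}, $K_{+-}$ is immediate, and $S_+ = 2^{d-1}\, \E_s K((\gamma, s_0), (\gamma, s))$ is handled by noting that $\langle s_0, s\rangle$ is a shifted binomial of mean $0$ and variance $d-1$, so the argument of $\kappa$ in each summand is $O(1/\sqrt{d})$ with high probability and admits a Taylor expansion around $0$. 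Solving the linear system yields explicit leading-order expressions for $\alpha$ and $\beta$ in $\gamma, d$.

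For a test point $\tilde{x} = (\zeta, x_{2:d})$ with $x_{2:d}$ either $\mathbf{0}$ or $\pm 1$-valued, the prediction becomes
\[ f(\tilde{x}) \;=\; \alpha\, S(\zeta, x_{2:d}) \;-\; \beta\, K(\tilde{x}, b), \quad S(\zeta, x_{2:d}) := \sum_s K(\tilde{x}, (\gamma, s)). \]
The change of variables $s \mapsto x_{2:d} \odot s$ (valid when $x_{2:d} \in \{\pm 1\}^{d-1}$) shows that $S(\zeta, x_{2:d})$ depends on $x_{2:d}$ only through whether it is zero or $\pm 1$-valued, so the prediction reduces to a function of the scalar $\zeta$ alone within each case. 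Moreover, since $\kappa' > 0$ on $(-1,1)$, $f(\tilde{x})$ is monotonically increasing in $\zeta$, and it suffices to verify that $f$ is strictly positive at $\zeta = 0.73$ and strictly negative at $\zeta = -0.95\gamma$ (within each case for $x_{2:d}$).

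The main obstacle is that these thresholds leave a fairly narrow quantitative margin, and the pronounced asymmetry between them (a constant versus a multiple of $\gamma$) reflects the extreme imbalance between $2^{d-1}$ positive examples and a single negative one: the negative point needs a disproportionately large dual weight $\beta$ to hold its side of the margin, which pulls the decision boundary in $\zeta$ substantially toward the negative direction. Controlling this requires tracking constants carefully through the Taylor expansion of $\kappa$ (using $\kappa(0) = 1/\pi$, $\kappa'(0) = 2$, and boundedness of $\kappa''$ on a neighborhood of $0$) together with standard concentration bounds for $\langle s_0, s\rangle$. The hypotheses $\gamma \geq 7$ and $d > N$ enter precisely here: they guarantee that the leading signal at each threshold dominates the $o(1)$ errors from truncating the Taylor series and approximating the binomial sum. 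Once these estimates are tight enough, direct substitution at the two threshold values completes the argument.
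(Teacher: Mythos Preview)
Your plan is essentially the paper's: representer theorem plus the sign/permutation symmetry of $D$ collapses the dual to two scalars, a $2\times 2$ linear system pins them down, and a Taylor expansion of $\kappa$ near $0$ together with binomial concentration over $s$ yields the sign of $f(\tilde x)$ as a function of $\zeta$. That skeleton is exactly what the paper does.

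Several concrete slips would derail the quantitative part, however. First, $\kappa'(0)=1$, not $2$ (differentiate the formula in Theorem~\ref{thm:lazyBM} and evaluate at $0$); since the thresholds $0.73$ and $-0.95\gamma$ are read off from precisely these constants, this error propagates. Second, your $K_{bb}=2\gamma^2$ shows you have not appended the bias coordinate; the paper appends a $1$ to every instance (so $\|b\|^2=1+\gamma^2$ and all inner products shift by $+1$), and without it the $x_{2:d}=\mathbf{0}$ test case degenerates because $\tilde x$ becomes a scalar multiple of $e_1$. Third, the monotonicity step ``$\kappa'>0$ so $f(\tilde x)$ is increasing in $\zeta$'' is not immediate: both the prefactor $\|\tilde x\|$ and the argument of $\kappa$ depend on $\zeta$ through $\|\tilde x\|$, and these do not obviously cooperate. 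The paper avoids this by Taylor expanding $g(\zeta)=f(\tilde x)/\|\tilde x\|$ directly (around $\zeta=\gamma$ in the positive case and around the zero of $\tau_0$ in the negative case) and reading the sign from the linear term. Finally, you cannot invoke boundedness of $\kappa''$ until you have split off the tails of the binomial sum: for indices with $|i-d/2|>d/4$ the cosine argument is near $\pm 1$ where $\kappa''$ blows up, so the paper explicitly writes $g=h+q$ with $q$ carrying those terms and controlled by $o(c^d)$ via concentration, and only then Taylor expands $h$.
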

The above theorem establishes that perturbing $x_1$ by $O(\gamma)$ changes $pred(f(x))$ for $x \in D$ (whereas a classifier exists that achieves a margin of $\Omega(\sqrt{d})$ on $D$, as $D$ has margin $1$ for coordinates $\{2\cdots d\}$).
As $\gamma = o(d)$, this shows that the learned model is adversarially vulnerable.

\begin{proof}[Proof of Theorem \ref{thm:lazy2layer}]
The idea of the proof is to obtain an explicit expression for $f(x)$ by applying standard kernel max-margin SVM theory to the NTK kernel \ref{thm:lazyBM}.

We begin with some preliminaries. We will refer to the first coordinate of the instance as the 'linear' coordinate, and to the rest as 'non-linear' coordinates. 
Also, henceforth we append an extra coordinate with value $1$ to all our instances (corresponding to bias term) - as is standard for working with unbiased SVM without loss of generality.

\noindent\textbf{Explicit expression for $f$.} Using representer theorem for max margin kernel SVM, we know that $f$ can be expressed as
\[ f(x) = \sum_{(x^{(t)},y^{(t)})\in D} \lambda_t y^{(t)}K(x,x^{(t)})\,, \]
for some $\lambda_t \geq 0$ (that are known as \emph{Lagrange multipliers}). Further by KKT conditions, a function possessing such a representation (that correctly classifies $D$) has maximum margin if $y^{(t)}f(x^{(t)}) = 1$ whenever $\lambda_t > 0$ (training points $t$ satisfying $\lambda_t > 0$ are called \emph{support vectors}).

We begin with a useful claim.
\begin{claim}\label{cl:supp}
The max margin kernel SVM for $D$ with the NTK kernel has all points in $D$ as support vectors.
\end{claim}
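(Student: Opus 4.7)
The plan is to exploit the sign-flip symmetry of $D$ on coordinates $2,\ldots,d$ to collapse the Lagrange multipliers to just two unknowns, compute the cross-kernel between positive and negative points, and then rule out the two degenerate cases by feasibility arguments.

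First, I would set up the symmetry. The group $G = \{\pm 1\}^{d-1}$ acting by sign-flips on coordinates $2,\ldots,d$ preserves $D$ (it permutes the $2^{d-1}$ positive points among themselves and fixes the single negative point, which I denote $x^-$) and, being orthogonal, preserves the NTK kernel $K$. Since $K$ is positive-definite on distinct inputs (the NTK is universal, as cited in the paper), the dual SVM objective $\sum_t \lambda_t - \tfrac{1}{2}\lambda^\top H\lambda$ with $H_{st}=y_s y_t K_{st}$ is strictly concave and so has a unique optimum. By $G$-invariance of both the objective and the feasible set, this unique optimum is $G$-invariant, so indexing positive points by $\sigma\in\{\pm 1\}^{d-1}$ one obtains $\lambda_\sigma \equiv \lambda_+$ and a single $\lambda_-$ for $x^-$. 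It therefore suffices to show $\lambda_+,\lambda_->0$.

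Next, by the same symmetry, $K(x^\sigma, x^-)$ is a single scalar $K_{+-}$ independent of $\sigma$. Using the bias-appended representation one computes $\langle x^\sigma,x^-\rangle = 1-\gamma^2$, $\|x^-\|^2=\gamma^2+1$, and $\|x^\sigma\|^2=\gamma^2+d$, so the argument of $\kappa$ equals $(1-\gamma^2)/\sqrt{(\gamma^2+1)(\gamma^2+d)}$. In the regime $\gamma\in[7,\epsilon\sqrt{d})$ with $\epsilon$ small and $d$ large, the crude bound $|1-\gamma^2|/\sqrt{(\gamma^2+1)(\gamma^2+d)} \leq \gamma/\sqrt{d}$ places this argument in $(-\epsilon, 0)$, where $\kappa$ is strictly positive since $\kappa(0)=1/\pi$ and $\kappa$'s only zero in $[-1,0]$ sits near $u\approx -0.4$. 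This yields $K_{+-}>0$, and is the main obstacle of the proof: it is the only step where the parameter regime enters, and it requires an elementary but careful estimate of $\kappa$ away from $-1$.

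Finally, I would rule out each degenerate case by a one-line feasibility contradiction. If $\lambda_-=0$, then $f(x)=\lambda_+\sum_\sigma K(x,x^\sigma)$, so $f(x^-) = \lambda_+\cdot 2^{d-1}\cdot K_{+-}\geq 0$, violating the constraint $-f(x^-)\geq 1$. If instead $\lambda_+=0$, then $f(x)=-\lambda_- K(x,x^-)$, so $f(x^\sigma) = -\lambda_- K_{+-}\leq 0$ for every positive $\sigma$, violating $f(x^\sigma)\geq 1$. Both $\lambda_+$ and $\lambda_-$ must therefore be strictly positive, which by complementary slackness means every point in $D$ is a support vector.
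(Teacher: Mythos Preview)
Your argument is correct and takes a cleaner route to the claim than the paper's own proof. The paper instead solves the linear system $K\alpha=y$ explicitly: by the same symmetry you exploit, $\alpha$ has the form $(a,\ldots,a,b)$, and the paper derives closed-form expressions
\[
a=\frac{\rho_2\kappa(1)+\rho_1\beta_d}{\rho_1^2\rho_2\sum_{i}\binom{d-1}{i}[\kappa(1)\beta_i-\beta_d^2]},\qquad b=\frac{-1-2^{d-1}a\rho_1\rho_2\beta_d}{\rho_2^2\kappa(1)},
\]
then bounds the denominator of $a$ above and below (introducing an auxiliary quantity $\xi$) to verify $a>0$, after which $b<0$ follows from $\beta_d\approx\kappa(0)>0$ in this parameter regime. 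Your feasibility-contradiction argument sidesteps that computation: once $K_{+-}>0$ is established (via the same observation that the normalized inner product is $O(\gamma/\sqrt d)$ and hence near $0$), a one-line sign check on $f(x^-)$ and $f(x^\sigma)$ rules out either multiplier vanishing. One minor slip: $\kappa(-1)=0$ as well, so there are two zeros of $\kappa$ in $[-1,0]$, not just the one near $-0.4$; this is harmless since you only use $\kappa>0$ near $0$. Note, however, that the paper's longer computation is not wasted effort: the explicit formula for $a$ and the bounds on $\xi$ are reused downstream in the proof of the enclosing Theorem~\ref{thm:lazy2layer} to estimate $f$ at perturbed points, so if you adopt your shortcut for the claim you would still need to recover those formulas separately.
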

\begin{proof}
By the above discussion, it suffices to show that the (unique) solution $\alpha \in \R^{|D|}$ to $K\alpha = y$ satisfies $\text{sign}(\alpha_i) = y^{(i)}$ for all $i$, where $K$ is the $|D| \times |D|$ Gram matrix with $(i,j)$th entry $K(x^{(i)},x^{(j)})$ and $y_i = y^{(i)}$ (the Lagrange multipliers $\lambda_i$ are then given by $y_i \alpha_i$).

\emph{Structure of Gram matrix.} Order $D$ so that the positive instances appear first. Then the Gram matrix $K$ has a block structure of the form
$\begin{pmatrix}
B & C \\
C^T & R
\end{pmatrix}$
where $B \in \R^{2^{d-1} \times 2^{d-1}}$ and $R \in \R$ are the Gram matrices for the positive and negative instances respectively, and $C \in \R^{2^{d-1} \times 1}$ represents the $K(x^{(i)},x^{(|D|)})$ values for $i < |D|$.

Recall that for the NTK kernel, $K(x^{(i)},x^{(j)})$ has the form $\|x^{(i)}\|\|x^{(j)}\|\kappa(\langle x^{(i)},x^{(j)} \rangle)$. Note all the positive instances have the same norm (denoted by $\rho_1 = \sqrt{d+\gamma^2}$) and 
the inner product between two positive instances depends only on the number $i$ of non-matching non-linear coordinates (denoted by $\beta_i$ for $0 \leq i \leq d-1$). Hence, the rows of $B$ are permutations of each other, with the entry $\rho_1^2\beta_i$ appearing $d-1 \choose i$ times. Similarly, the entries in $C$ are all equal and are denoted by $\rho_1 \rho_2 \beta_d$ where $\beta_d$ denotes $\kappa(x^{(t)},x^{|D|})$ for any $t < |D|$ and $\rho_2 = \|x^{|D|}\| = \sqrt{1+\gamma^2}$. The only entry in $R$ is $\rho_2^2 \kappa(1)$. In particular,
\[\beta_i = \kappa\left(\frac{d - 2i + \gamma^2}{d + \gamma^2}\right) \text{ for } i \in [|D|-1],\qquad \text{and}\qquad \beta_d = \kappa\left(\frac{1 - \gamma^2}{\sqrt{d + \gamma^2}\sqrt{1 + \gamma^2}}\right)\,.\]

Now we are ready to solve $K\alpha = y$. By symmetry in the structure of K, $\alpha$ looks like $[a, a, ......, b]$, where the first $|D|-1$ entries are the same.

Expanding $K\alpha = y$, we get two equations given by
\[ a\rho_1^2\left(\sum_{i=0}^{d-1} {d-1 \choose i} \beta_i\right) + b\rho_1\rho_2\beta_d = 1 \qquad\text{and}\qquad2^{d-1}a\rho_1\rho_2\beta_d + \rho_2^2\kappa(1)b = -1\,. \]
Solving, we get
\[ a = \frac{\rho_2\kappa(1) + \rho_1\beta_d}{\rho_1^2 \rho_2\sum_{i=0}^{d-1} \left({d-1 \choose i} [\kappa(1)\beta_i - \beta_d^2]\right)} \qquad\text{and}\qquad b = \frac{-1 - 2^{d-1}a\rho_1 \rho_2 \beta_d}{\rho_2^2\kappa(1)}\,. \]
We now show that $a > 0$ and $b < 0$. Note that for sufficiently large $d$, $\beta_d$ can be made arbitrarily close to $\kappa(0) = 1/\pi$ (since $\kappa$ is smooth around $0$). Hence, $a > 0$ implies $b < 0$. We in fact give the following estimate for $a$:
\begin{equation}\label{eq:a}
a = 2^{1-d}\cdot\frac{\rho_2\kappa(1)+\rho_1\beta_d}{\xi\rho^2_1\rho_2} \qquad\text{where}\qquad \frac{2}{\pi}-\frac{1}{\pi^2} + O\left(\frac{1}{d}\right)\leq \xi \leq 2 + O\left(\frac{1}{d}\right)\,.
\end{equation}
For the lower bound on $\xi$, write
\begin{align*}
    &\sum_{i=0}^{d-1} {d-1 \choose i} [\kappa(1)\beta_i - \beta_d^2] 
    = \kappa(1)\sum^{\lfloor d/2 \rfloor}_{i=0} {d-1 \choose i} (\beta_i + \beta_{d-1}) -  2^{d-1}\beta^2_d\\
    &\geq\kappa(1)\sum^{\lfloor d/2 \rfloor}_{i=0} {d-1 \choose i} 2 \beta_{d/2} -  2^{d-1}\beta^2_d \geq 2^{d-1}\left(\kappa(1)\kappa(0) - \kappa^2(0) + O\left(\frac{1}{d}\right)\right)\,,
\end{align*}
where for the first inequality we used convexity of $\kappa$ and for the second inequality we used $\beta_{d/2} = \kappa(0) + O(1/d), \beta_d = \kappa(0) + O(1/\sqrt{d})$.  
For the upper bound on $\xi$, write
\begin{align*}
    &\sum_{i=0}^{d-1} {d-1 \choose i} [\kappa(1)\beta_i - \beta_d^2] 
    \leq\kappa(1)\sum^{d-1}_{i=0}{d-1 \choose i}\kappa \left(1-\frac{2i}{d+\gamma^2}\right)\\
    &\leq\kappa(1)\sum^{d-1}_{i=0}{d-1 \choose i} \left(2-\frac{2i}{d+\gamma^2}\right)
    =\kappa(1)2^{d}-\frac{\kappa(1)(d-1)2^{d-1}}{d+\gamma^2}\,,
\end{align*}
where for the second inequality we used $\kappa(u) \leq 1+u$ (which holds by convexity and $\kappa(-1)=0, \kappa(1)=2$).
\end{proof}

Now we analyze predicted labels for points of the form $(\zeta,x_{2:d+1})$ where $x \in D$. We make two cases depending on the label of $x$.

\textbf{Predicted label for point $(\zeta,x^{(t)}_{2:d+1})$ where $x^{(t)} \in D$ has positive label}

Our point (denoted by $x$) has the form $(\zeta,\zeta_1,\zeta_2,\ldots,\zeta_d,1)$ where $\zeta_i \in \pm 1$. The idea of the proof is to write $f$ explicitly as a function of $\zeta$ and work with its first order Taylor expansion around $\zeta = \gamma$, with some additional work to take care of non-smoothness of $f$.

\emph{Explicit form for $f$.} Let $\tau_i \defeq \langle x,x' \rangle/(\|x\|\|x'\|)$ for a positive instance $x' \in D$, where $x$ and $x'$ have exactly $i$ non-matching non-linear coordinates (for $0 \leq i \leq d-1$). Similarly denote by $\tau_d$ the quantity $\langle x,x^{|D|} \rangle/(\|x\|\|x^{|D|}\|)$. In particular,
\[ \tau_i = \left(\frac{d-2i+\gamma\zeta}{\rho_1\|x\|}\right)\qquad \text{and} \qquad \tau_d = \left(\frac{1-\gamma\zeta}{\rho_2\|x\|}\right)\,. \]

By the above discussion, we have
\begin{align*}
f(x) = a\left(\sum^{|D|-1}_{t=1}K(x,x^{(t)})\right)+bK(x,x^{|D|})
= a\rho_1\|x\|\left(\sum^{d-1}_{i=0} {d-1 \choose i}\kappa(\tau_i)\right)+b\rho_2\|x\|\kappa(\tau_d)\,.
\end{align*}
Substituting $b$ and denoting $f(x)/\|x\|$ by $g(\zeta)$ we get
\begin{equation}\label{eq:gzeta}g(\zeta) = a\rho_1 \left[ \sum^{d-1}_{i=0} {d-1 \choose i} \kappa(\tau_i(\zeta)) - \frac{2^{d-1}\beta_d}{\kappa(1)} \kappa(\tau_d(\zeta)) \right] - \frac{\kappa(\tau_d(\zeta))}{\rho_2\kappa(1)}\,.\end{equation}

Now try to expand $g(\zeta)$ using the Taylor series around $\zeta = \gamma$ (note that $g(\gamma) = 1/\rho_1$). Note that $\kappa'$ can however be unbounded around $-1$ and $1$. To get around this, write $g = h + q$, where $h$ has bounded first and second derivative, and $q$ has lower order than $h$ for $\zeta$ of interest. In particular,
\[ h(\zeta) = a\rho_1 \left[ \sum^{3d/4}_{i=d/4} {d-1 \choose i} \kappa(\tau_i(\zeta)) - \frac{2^{d-1}\beta_d}{\kappa(1)} \kappa(\tau_d(\zeta)) \right] - \frac{\kappa(\tau_d(\zeta))}{\rho_2\kappa(1)}\qquad \text{and} \]
\[ q(\zeta) = a\rho_1 \left[ \sum_{i:|d/2-i|>d/4} {d-1 \choose i} \kappa(\tau_i(\zeta)) \right]\,.\]
Observe that $q(\zeta) = o(c^{d})$ for $c < 1$ using the estimate \eqref{eq:a} for $a$ and concentration for sums of independent Bernoullis. By Taylor's theorem,
\begin{equation}\label{eq:taylor}
 g(\zeta) = h(\gamma) + h'(\gamma)(\zeta-\gamma) + \frac{h''(\theta)(\zeta-\gamma)^2}{2} + q(\zeta)\,,
 \end{equation}
for some $\theta \in [\gamma,\zeta]$, where $h(\gamma) \approx 1/\sqrt{d}$. It will turn out that $|h'(\gamma)| = \Theta(1/\sqrt{d})$, $|h''(\zeta)| = o(1/\sqrt{d})$. This will allow us to complete the proof using the linear approximation of $g(\zeta)$ by neglecting the second order term and $q(\zeta)$. We now compute $h',h''$, treating $\|x\| = \sqrt{d+\zeta^2}$ as a constant for exposition (the proof works without this approximation or the reader may think of $\gamma$ as $o(\sqrt{d})$).
Using $\tau'_i(\zeta)\approx \frac{\gamma}{\rho_1\|x\|},\tau'_d(\zeta)\approx \frac{-\gamma}{\rho_2\|x\|}$,
\begin{align*}
h'(\zeta) &\approx a\rho_1\left[\sum^{d-1}_{i=0}{d-1 \choose i}\kappa'(\tau_i(\zeta))\frac{\gamma}{\rho_1\|x\|} +\frac{2^{d-1}\beta_d}{\kappa(1)}\kappa'(\tau_d(\zeta))\frac{\gamma}{\rho_2\|x\|}\right]+\frac{\kappa'(\tau_d(\zeta))}{\rho_2\kappa(1)}\frac{\gamma}{\rho_2\|x\|}\\
h''(\zeta) &\approx a\rho_1\left[\sum^{d-1}_{i=0}{d-1 \choose i}\kappa''(\tau_i(\zeta))\frac{\gamma^2}{\rho^2_1\|x\|^2} -\frac{2^{d-1}\beta_d}{\kappa(1)}\kappa''(\tau_d(\zeta))\frac{\gamma^2}{\rho_2^2\|x\|^2}\right]-\frac{\kappa''(\tau_d(\zeta))}{\rho_2\kappa(1)}\frac{\gamma^2}{\rho_2^2\|x\|^2}\,.
\end{align*}
Plugging $\|x\|\approx \rho_1 \approx \sqrt{d}$ and substituting $a$ from \eqref{eq:a},
\[ h'(\zeta) = \frac{(1+\beta^2_d/\xi)\kappa'(\tau_d(\zeta))\gamma}{\rho^2_2\kappa(1)\sqrt{d}}+ o\left(\frac{1}{\sqrt{d}}\right)\qquad\text{and}\qquad h''(\zeta) = O\left(\frac{1}{d}\right)\,,\]
which substituted in \eqref{eq:taylor} with $\tau_d(\zeta) \approx 0, \beta_d \approx \kappa(0), \kappa'(\tau_d(\zeta)) \approx \kappa'(0)$ gives
\[ g(\zeta) = \frac{1}{\sqrt{d}}\left(1+\frac{(1+\kappa^2(0)/\xi)\kappa'(0)\gamma}{\kappa(1)\rho^2_2}(\zeta-\gamma)\right) + o\left(\frac{1}{\sqrt{d}}\right)\,,\]

Hence, $g(\zeta) > 0$  whenever the coefficient of $1/\sqrt{d}$ above is bounded above zero, and a similar condition holds for $g(\zeta) < 0$.
Using the estimates of $\xi$ from \eqref{eq:a} and $\kappa'(0) = 1,\kappa(0) = 1/\pi, \kappa(1) = 2, \rho^2_2 = 1 + \gamma^2$ in the above gives that $g(\zeta) > 0$ for $\zeta > -0.68 \gamma - 1.68/\gamma$ and $g(\zeta) < 0$ for $\zeta < -0.905\gamma-1.905/\gamma$.

\textbf{Predicted label for point $(\zeta,x^{(t)}_{2:d+1})$ where $x^{(t)} \in D$ has negative label}

Following the same plan, write our point (denoted by $x$) as $(\zeta,0,\ldots,0,1)$.

\emph{Explicit form for $f$.}
Begin by finding
\[ \tau_i = \left(\frac{1+\gamma\zeta}{\rho_1\|x\|}\right)\qquad \text{and} \qquad \tau_d = \left(\frac{1-\gamma\zeta}{\rho_2\|x\|}\right)\,. \]

\eqref{eq:gzeta} now gives
\[g(\zeta)=2^{d-1}a\rho_1\left[\kappa(\tau_0(\zeta)) - \frac{\beta_d\kappa(\tau_d(\zeta))}{\kappa(1)}\right]-\frac{\kappa(\tau_d(\zeta))}{\rho_2\kappa(1)}\,.\]

Expanding $\kappa(\tau_0(\zeta))$ using Taylor series around $\zeta = -1/\gamma$,
\[\kappa(\tau_0(\zeta)) = \kappa(0) + \kappa'(\tau_0(\theta))\tau'_0(\theta)(\zeta+\frac{1}{\gamma})\,, \]
for some $\theta \in [-1,1]$. For large $d$, $\tau_0(\theta) \approx 0$ and $\tau'_0(\theta) = O(1/\sqrt{d})$. Hence we have
\begin{align*}
g(\zeta)&=\frac{\rho_2\kappa(1)+\rho_1\beta_d}{\xi \rho_1 \rho_2}\left[\kappa(0)+O\left(\frac{1}{\sqrt{d}}\right) -\frac{\beta_d\kappa(\tau_d(\zeta))}{\kappa(1)}\right]-\frac{\kappa(\tau_d(\zeta))}{\rho_2\kappa(1)}\\
&=\frac{1}{\rho_2}\left(\frac{\kappa^2(0)}{\xi} - \left(\frac{\kappa^2(0)}{\xi \kappa(1)}+\frac{1}{\kappa(1)}\right)\kappa(\tau_d(\zeta))\right) + o(1)\,.
\end{align*}
As before $g(\zeta) > 0$ whenever the coefficient of $1/\rho_2$ above is bounded above zero which happens for $\zeta \geq 0.73$ (for $\gamma \geq 3$). Similarly, $g(\zeta) < 0$ for $\zeta \leq 0$.\end{proof}
\section{Experiments}\label{app:exp}
In this section, we provide experimental details, including hyperparameter tuning setup and some additional experiments.
\subsection{Details on the experimental setting} \label{app:exp-setting}
We will first describe the four datasets that have been used in this work.

\begin{enumerate}
    \item \textbf{Imagenette} \citep{imagenette}: This is a subset of 10 classes of Imagenet, that are comparatively easier to classify.
    \item \textbf{b-Imagenette}: This is a binarized version of Imagenette, where only a subset of two classes (tench and English springer) is used. 
    \item \textbf{Waterbirds-Landbirds} \citep{Sagawa*2020Distributionally}: This is a majority-minority group dataset, consisting of waterbirds on water and land background, as well as landbirds on land and water background.
    This dataset serves as a baseline for checking the dependence of model on the spurious background feature when predicting the bird class, as most of the training examples have waterbirds on water and landbirds on land background.
    \item \textbf{MNIST-CIFAR} \citep{shah2020pitfalls}: This is a collage dataset, created by concatenating MNIST and CIFAR images along an axis. This is a synthetic dataset for evaluating the simplicity bias of a trained model.
\end{enumerate}

\paragraph{Setup} 

Throughout the paper, we work with the pretrained representations of the above datasets, obtained by using an Imagenet pretrained Resnet 50. We finetune a 1-hidden layer FCN with a hidden dimension of $100$ on top of these representations (keeping the backbone fixed) using SGD with a momentum of 0.9. Every model is trained for $20000$ steps with a warmup and cosine decay learning rate scheduler. For each of the runs, we tune the batch size, learning rate and weight decay using validation accuracy. Below are the hyperparameter tuning details:
\begin{itemize}
    \item Batch size $\in \{128, 256\}$
    \item Learning rate:  
    \begin{itemize}
        \item Rich regime: $\in \{0.5, 1.0\}$ (as learning rate in rich regime needs to scale up with the hidden dimension)
        \item Lazy regime: $\in \{0.01, 0.05\}$
    \end{itemize} 
    \item Weight decay: $\in \{0, 1e^{-4}\}$
\end{itemize}

The final numbers reported are averaged across 3 independent runs with the selected hyperparameters.

\paragraph{Evaluation}

For Imagenette, b-Imagenette and MNIST-CIFAR, we report the standard test accuracy in all the experiments. For waterbirds, we report train-adjusted test accuracy, as reported in \citet{Sagawa*2020Distributionally}. Precisely, accuracy for each group present in the test data is individually calculated and then weighed by the proportion of the corresponding group in the train dataset.

\subsection{Additional experimental results}
\label{app:additional-exps}

In this section, we present a few additional experimental results. 

\paragraph{Accuracy of $f_{\textrm{proj}}$} In Table \ref{tab:f_proj_acc} and \ref{tab:f_proj_acc_lazy}, we show the test accuracy of $f_{\textrm{proj}}$ in rich and lazy regime respectively. As can be seen, even after projecting out the principal components used by $f$, $f_{\textrm{proj}}$ attains significantly high accuracy. Note that, in these experiments, model 1 was kept fixed and the accuracy of $f_{\textrm{proj}}$ is averaged across 3 runs.


\begin{table}[t]
\caption{Trained accuracy of $f_{\textrm{proj}}$ in rich regime
}
\label{tab:f_proj_acc}
\begin{center}
\setlength\tabcolsep{4.5pt}
\begin{tabular}{|M{2.2cm} M{2cm} M{2cm}|} 
 \hline
 Dataset & Acc($f$) & Acc($f_{\textrm{proj}}$) \\ [0.5ex] 
 \hline\hline
 b-Imagenette& $93.35$ & $91.35\pm0.32$ \\ 
 Imagenette& $79.67$ & $71.93\pm0.12$ \\ 
 Waterbirds& $90.29$ & $89.92\pm0.08$ \\
 \mnistcifar& $99.69$ & $98.95\pm0.02$ \\ 
 \hline
\end{tabular}
\end{center}
\end{table}

\begin{table}[t]
\caption{Trained accuracy of $f_{\textrm{proj}}$ in lazy regime
}
\label{tab:f_proj_acc_lazy}
\begin{center}
\setlength\tabcolsep{4.5pt}
\begin{tabular}{|M{2.2cm} M{2cm} M{2cm}|} 
 \hline
 Dataset & Acc($f$) & Acc($f_{\textrm{proj}}$) \\ [0.5ex] 
 \hline\hline
 b-Imagenette& $93.09$ & $91.77\pm0.34$ \\ 
 Imagenette& $80.31$ & $77.34\pm0.21$ \\ 
 Waterbirds& $90.4$ & $89.5\pm0.18$ \\
 \mnistcifar& $99.74$ & $98.54\pm0.00$ \\ 
 \hline
\end{tabular}
\end{center}
\end{table}

\paragraph{Singular value decay}. In Figure \ref{fig:sing_value}, we provide the singular value decay of the weight matrix for the first model trained in rich regime. As can be seen, the top few singular values capture most of the Frobenius norm of the matrix.

\begin{figure}[t]
\centering
\includegraphics[width=0.5\textwidth]{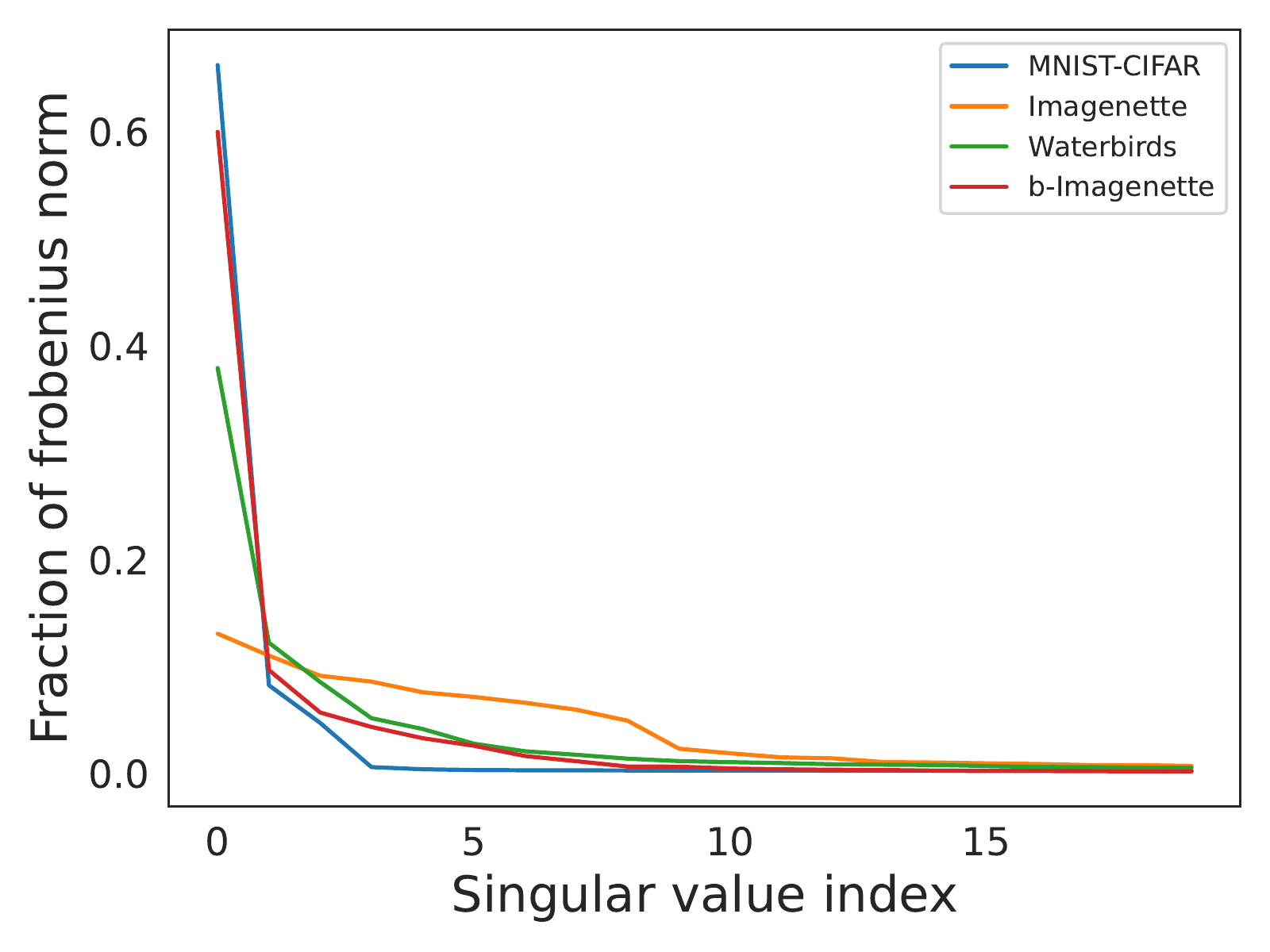}
\caption{Fraction of Frobenius norm captured by the top $i^{th}$ singular value i.e., $\sigma_i^2 / \sum_{j=1}^d \sigma_j^2$ vs $i$ of the first layer weight matrix trained in rich regime for various datasets.}
\label{fig:sing_value}
\end{figure}

\paragraph{MNIST-CIFAR} In Figure \ref{fig:rich_mnist_gauss_robust}, we show that an ensemble of $f$ and $f_{\textrm{proj}}$ has better gaussian robustness than an ensemble of $f$ and $f_{\textrm{ind}}$ on MNIST-CIFAR dataset.

\begin{figure}[t]
\centering
\includegraphics[width=0.5\textwidth]{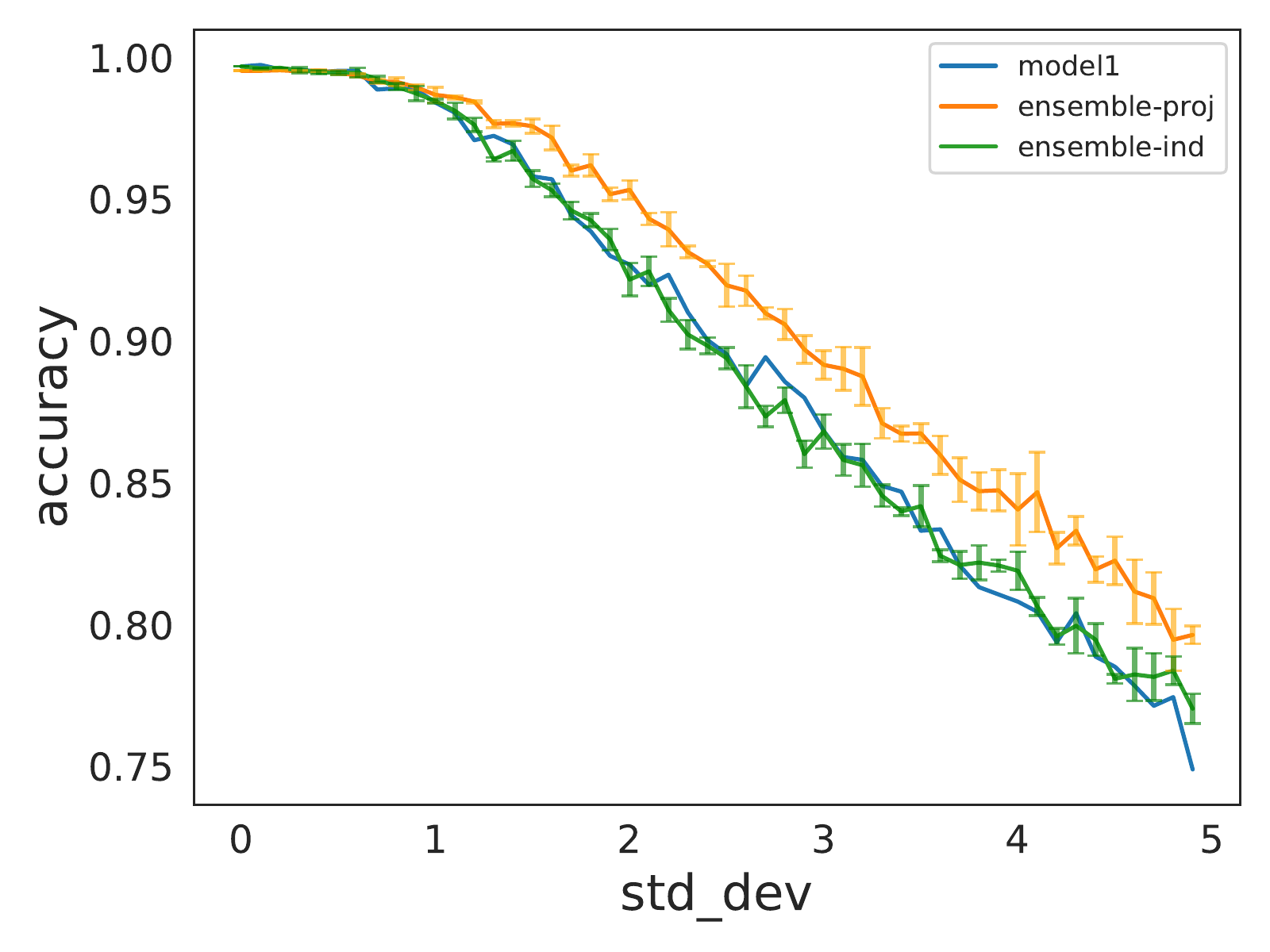}
\caption{Variation of test accuracy with the standard deviation of Gaussian noise added to the pretrained representations of MNIST-CIFAR dataset. Model 1 is kept fixed, and values for both the ensembles are averaged across 3 runs.}
\label{fig:rich_mnist_gauss_robust}
\end{figure}

\paragraph{Quantitative measurement of non-linearity of decision boundary} In this section, we report a quantitative measure of non-linearity of the decision boundary along the top two singular vectors for $f$ and $f_{\textrm{proj}}$. Basically, we fit a linear classifier to the decision boundary and report its accuracy. As shown in Table \ref{tab:quant_non_lin}, the test accuracy obtained by the linear classifier for $f_{\textrm{proj}}$ is less than $f$.

\begin{table}[t]
\caption{Quantitative measurement of non-linearity of decision boundary -- accuracy of fitted linear classifier to the decision boundary
}
\label{tab:quant_non_lin}
\begin{center}
\setlength\tabcolsep{4.5pt}
\begin{tabular}{|M{2.2cm} M{4cm} M{4cm}|} 
 \hline
 Dataset & Linear-Classifier-Acc($f$) & Linear-Classifier-Acc($f_{\textrm{proj}}$) \\ [0.5ex] 
 \hline\hline
 b-Imagenette& $96.12$ & $95.28\pm 0.2$ \\ 
 Waterbirds& $97.28$ & $93.24\pm0.24$ \\
 \hline
\end{tabular}
\end{center}
\end{table}

\paragraph{Variation of LD-SB with depth} In Figure \ref{fig:depth-2} and \ref{fig:depth-3}, we show the evolution of effective rank of weight matrices for depth-2 and 3 ReLU networks. As can be seen, the rank still decreases with training, however the effect is less pronounced for the initial layers. Note that the initialization used in these runs was the feature learning initialization as proposed in \citet{Yang21}.

\begin{figure}[t]
\centering
\begin{subfigure}{.4\textwidth}
  \centering
  \includegraphics[width=\textwidth]{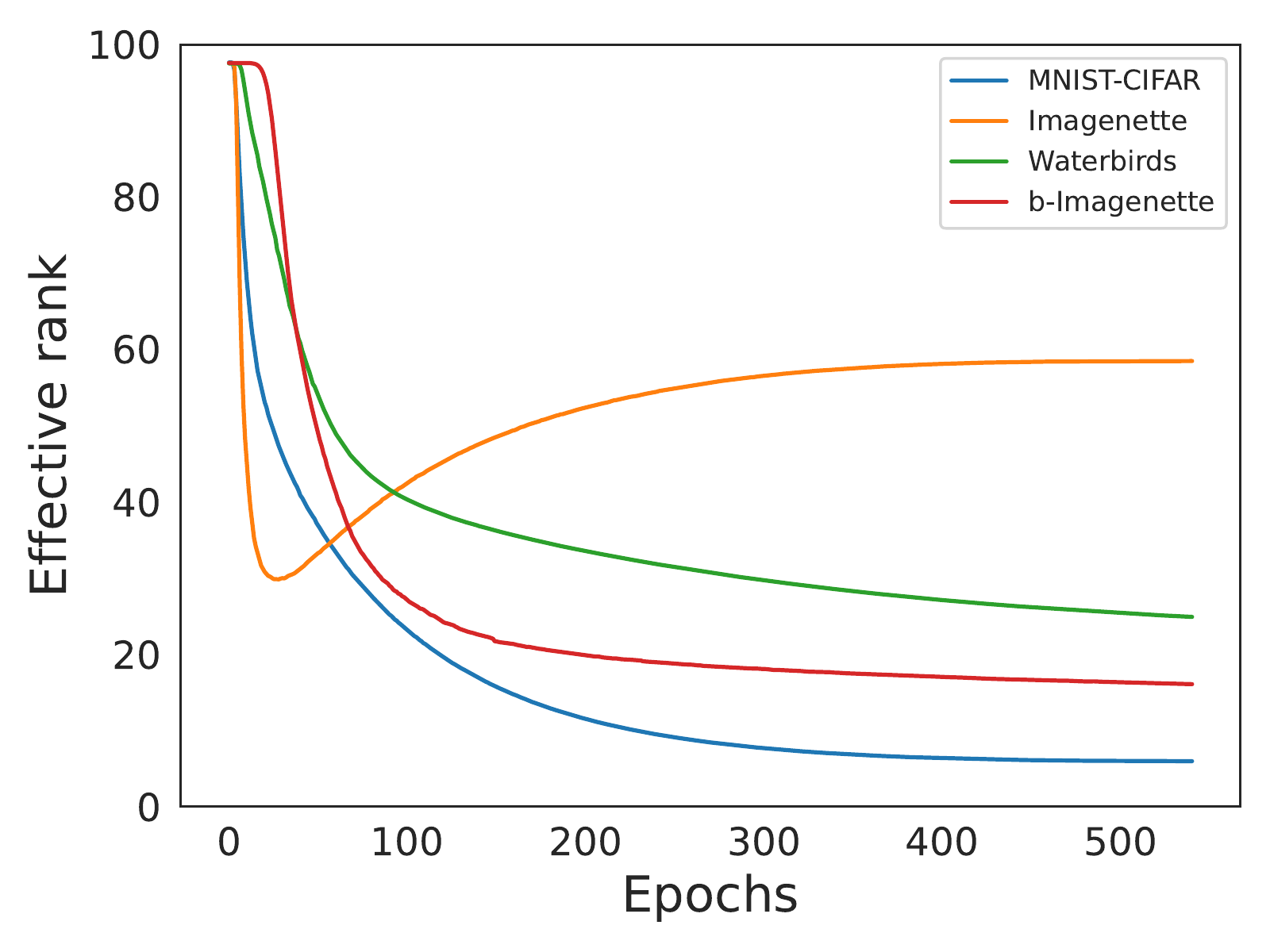}
    \caption{Layer 1}
\end{subfigure}
\begin{subfigure}{.4\textwidth}
  \centering
  \includegraphics[width=\textwidth]{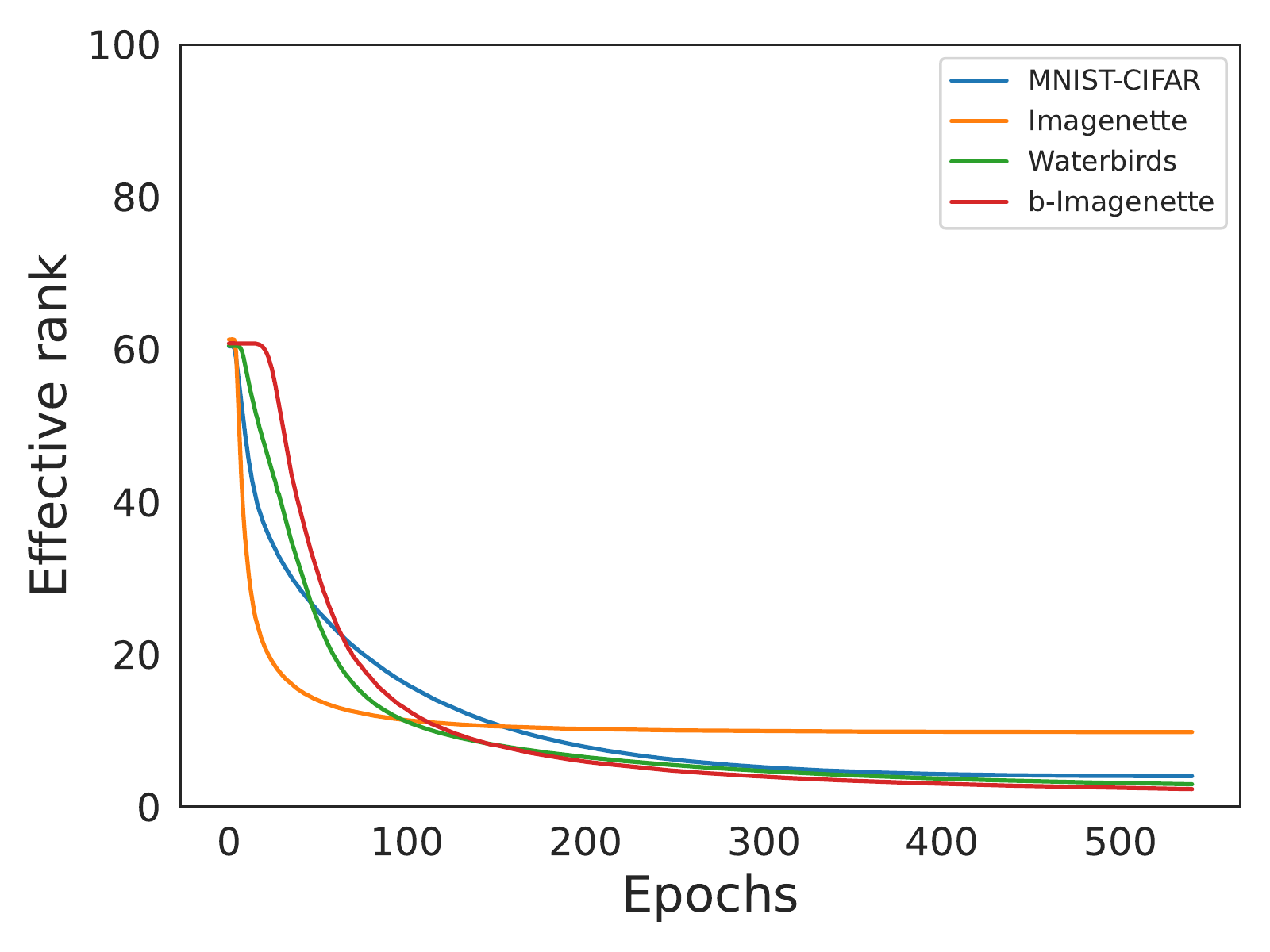}
    \caption{Layer 2}
\end{subfigure}
\caption{Evolution of effective rank of the weight matrices for a depth-2 ReLU network on Resnet-50 pretrained representations of the dataset}
\label{fig:depth-2}
\end{figure}

\begin{figure}[t]
\centering
\begin{subfigure}{.3\textwidth}
  \centering
  \includegraphics[width=\textwidth]{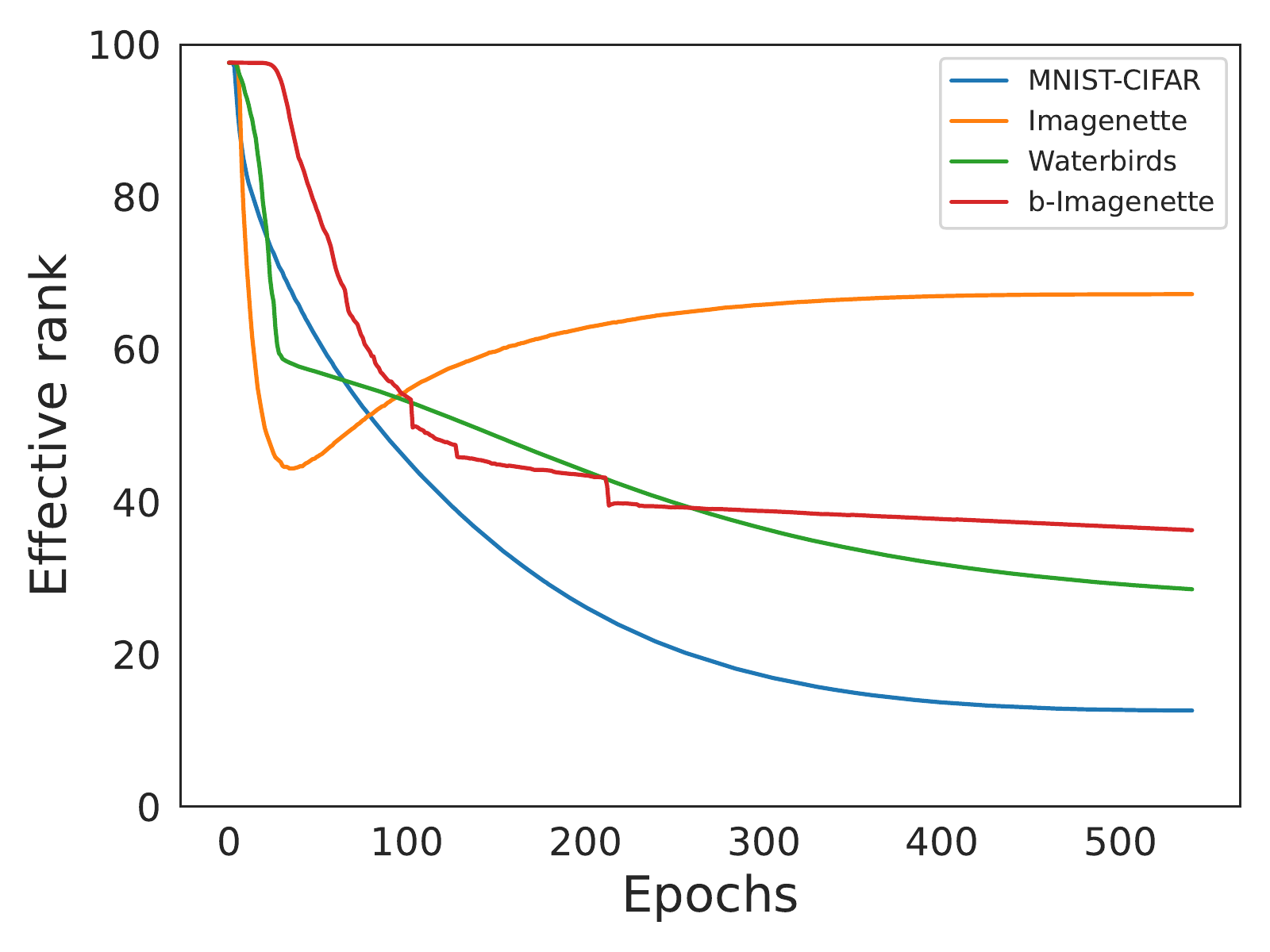}
    \caption{Layer 1}
\end{subfigure}
\begin{subfigure}{.3\textwidth}
  \centering
  \includegraphics[width=\textwidth]{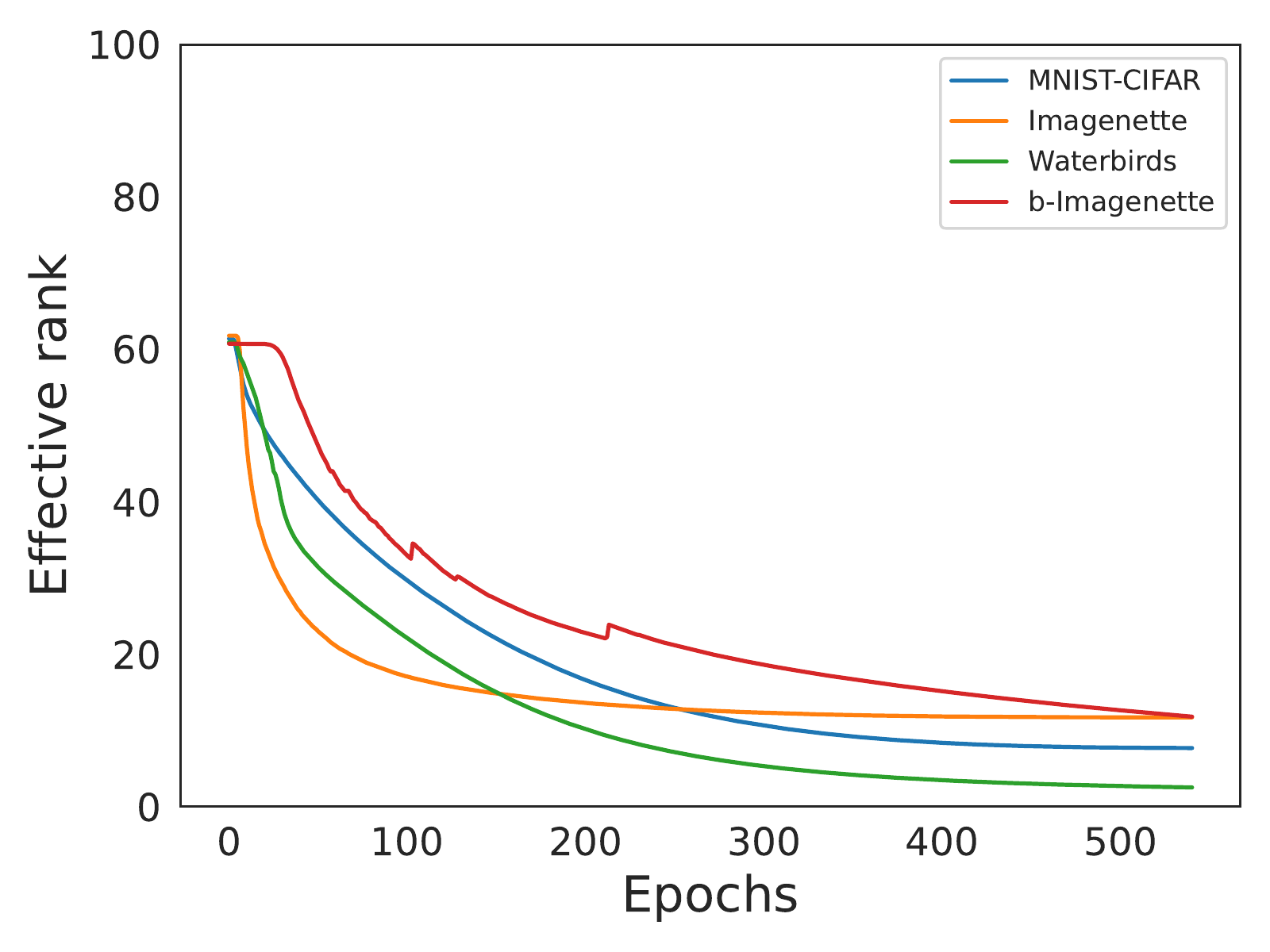}
    \caption{Layer 2}
\end{subfigure}
\begin{subfigure}{.3\textwidth}
  \centering
  \includegraphics[width=\textwidth]{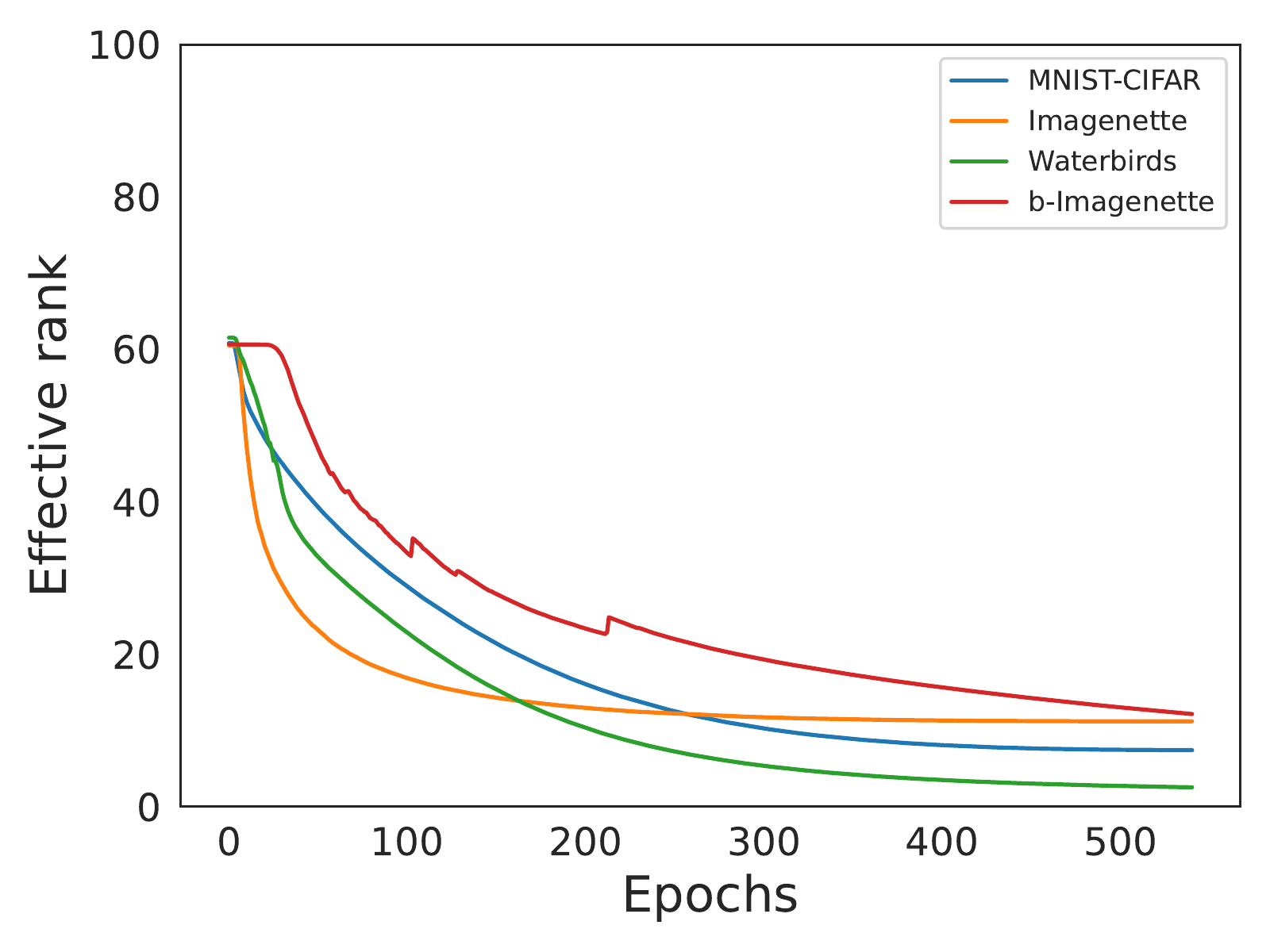}
    \caption{layer 3}
\end{subfigure}         
\caption{Evolution of effective rank of the weight matrices for a depth-3 ReLU network on Resnet-50 pretrained representations of the dataset}
\label{fig:depth-3}
\end{figure}

\section{Extended Related Works} \label{app:rel-works}
In this section, we provide an extensive literature survey of various topics that the paper is based on.

\paragraph{Low rank Simplicity Bias in Linear Networks} Multiple works have established low rank simplicity bias for gradient descent on linear networks, both for squared loss as well as cross-entropy loss. For squared loss, \citet{gunasekar17} conjectured that the network is biased towards finding minimum nuclear norm solutions for two-layer linear networks. \citet{Arora19} refuted the conjecture and instead argued that the network is biased towards finding low rank solutions. \citet{Razin20} provided empirical support to the low rank conjecture, by providing synthetic examples where the network drives nuclear norm to infinity, but minimizes the rank of the effective linear mapping. \citet{li2021towards} established that for small enough initialization, gradient flow on linear networks follows greedy low-rank learning trajectory. For binary classification on linearly separable data, \citet{ji2019implicit} showed that the weight matrices of a linear network eventually become rank-1 as training progresses.

\paragraph{Low rank Simplicity Bias in Non-Linear Networks} For non-linear networks, the work related to low-rank simplicity bias is rather sparse. Two of the most notable works are \citet{huh2021low} and \citet{Tomer22}. \citet{huh2021low} empirically established that the rank of the embeddings learnt by a neural network with ReLU activations goes down as training progresses. \citet{Tomer22} provided an intuition behind the relation between the rank of the weight matrices and various hyperparameter such as batch size, weight decay etc. In contrast to these works, for 1 layer nets, we theoretically and empirically establish that the network depends on an extremely low dimensional projection of the input, and this bias can be utilized to develop a robust classifier.

\paragraph{Relation to OOD} Many recent works in OOD detection \citep{Cook20, Zaeemzadeh21} explicitly create low-rank embeddings so that it is easier to discriminate them for an OOD point. Other works also implicitly rely on the low-rank nature of the embeddings. \citet{Ndiour20} use PCA on the learnt features, and only model the likelihood along the small subspace spanned by the top few directions. \citet{Wang22} utilise the low rank nature of the embeddings to estimate the perpendicular projection of a given data point to this low rank subspace and combine it with logit information to detect OOD datapoints. While there have been works implicitly utilizing the low rank property of embeddings, we note that our paper (i) demonstrates low rank property of the \emph{weights}, rather than that of embeddings, and (ii) shows that it is a consequence of SB.

\paragraph{Other Simplicity Bias} There have been many works exploring the nature of simplicity bias in neural networks, both empirically and theoretically. \citet{Kalimeris19} empirically demonstrated that SGD on neural networks gradually learns functions of increasing complexity. \cite{Nasim18} empirically demonstrated that neural networks tend to learn lower frequency functions first. \cite{Ronen19} theoretically established that in NTK regime, the convergence rate depends on the eigenvalues of the kernel spectrum. \cite{Hacohen20} showed that neural networks always learn train and test examples almost in the same order, irrespective of the architecture. \cite{pezeshki2021gradient} proposes that \emph{gradient starvation} at the beginning of training is a potential reason for SB in the lazy/NTK regime but the conditions are hard to interpret. In contrast, our results are shown for any dataset in the \ifm model in the \emph{rich} regime of training. ~\cite{lyu2021gradient} consider anti-symmetric datasets and show that single hidden layer input homogeneous networks (i.e., without \emph{bias} parameters) converge to linear classifiers. However, such networks have strictly weaker expressive power compared to those with bias parameters. \cite{Guy22} showed that for deep linear networks, in NTK regime, they learn the higher principal components of the input data first. Most of the previous works used simplicity bias as a reason behind better generalization of neural nets. However, \cite{shah2020pitfalls} showed that extreme simplicity bias could also lead to worse OOD performance.

\textbf{Learning diverse classifiers}: There have been several works that attempt to learn diverse classifiers. Most works try to learn such models by ensuring that the input gradients of these models do not align~\citep{ross2018improving,teney2022evading}. \cite{xu2022controlling} proposes a way to learn diverse/orthogonal classifiers under the assumption that a complete classifier, that uses all features is available, and demonstrates its utility for various downstream tasks such as style transfer. \cite{lee2022diversify} learns diverse classifiers by enforcing diversity on unlabeled target data.

\textbf{Spurious correlations}: There has been a large body of work which identifies the reasons for spurious correlations in NNs~\citep{sagawa2020investigation} as well as proposing algorithmic fixes in different settings~\citep{liu2021just,chen2020self}.

\textbf{Implicit bias of gradient descent}: There is also a large body of work understanding the implicit bias of gradient descent dynamics. Most of these works are for standard linear~\citep{ji2019implicit} or deep linear networks~\citep{SoudryHNGS18,gunasekar2018implicit}. For nonlinear neural networks, one of the well-known results is for the case of $1$-hidden layer neural networks with homogeneous activation functions~\citep{ChizatB20}, which we crucially use in our proofs.

\section{More discussion on the extension of results to deep nets} \label{app:deep-nets}
Extending our theoretical results to deep nets is a very exciting and challenging research direction. For shallow as well as deep nets, even in the mean field regime of training, results regarding convergence to global minima have been established \citep{Chizat18, Cong21}. However, to the best of our knowledge, only for 1-hidden layer FCN \citep{ChizatB20}, a precise characterization of the global minima to which gradient flow converges has been established. Understanding this implicit bias of gradient flow is still an open problem for deep nets, which we think is essential for extension of our results to deep nets.

\section{Convergence to $\F_1-$max-margin classifier for ReLU networks} \label{app:relu-F1}
In this section, we will state the precise result of \citet{ChizatB20} regarding the asymptotic convergence point of gradient flow on ReLU networks. We will follow the notation of \citet{ChizatB20} for ease of the reader.

A neural network is parameterized by a probability measure $\mu$ on the neurons and is given by
\[ h(\mu, x) = \int \phi(w, x) d\mu(w) \]
where $\phi(w, x) = b (a^\top(x,1))_+$ ($+$ denotes the positive component, i.e the ReLU activation) with $w=(a,b) \in \R^{d+2}$. As the network is 2-homogeneous, a projection of the measure $\mu$ on the unit sphere can be defined. The projection operator ($\Pi_2$) on the sphere for a measure $\mu$ is defined such that for any continuous function $\varphi$ on the sphere,
\[ \int_{\mathbb{S}^{d+1}} \varphi(\theta) d[\Pi_2(\mu)](\theta) = \int_{\R^{d+2}} \|w\|^2 \varphi(w/\|w\|) d\mu(w) \]

Now, let $\rho$ denote the input distribution on the input space $\mathcal{X}$ and let the labeling function $y:\mathcal{X} \to \mathcal{Y}$ be deterministic. Then, consider the population objective given by
\[ F(\mu) = -\log\left[ \int_{\mathcal{X}} \exp(-y(x)h(\mu,x)) d\rho(x) \right] \]
Note that $\log$ doesn't affect the direction of the gradients, thus, the trajectory of gradient flow on this loss is the same as on exponential loss. Also, let the population smooth margin be given by
\[ S(f) = -\log\left(\int_{\mathcal{X}} \exp(-f(x)) d\rho(x)\right) \]
For this particular case, $f(x) = y(x)h(\mu,x)$. Denote $y(x)\cdot h(\mu,x)$ by $\hat{h}(\mu)$.
\begin{theorem}\label{thm:technical}
Suppose that $\rho$ has bounded density and bounded support, and labeling function $y$ is continuous, then
there exists a Wasserstein gradient flow $(\mu_t)$ on $F$ with $\mu_0 = \mathcal{U}(\mathbb{S}^d) \otimes \mathcal{U}\{-1,1\}$, i.e, input (resp. output) weights uniformly distributed on the sphere (resp. on $\{-1,1\}$). If $\nabla S(\hat{h}(\mu_t))$ 
converges weakly in $\mathcal{P}(\mathcal{X})$, if $\bar{\nu}_t = \Pi_2(\mu_t)/([\Pi_2(\mu_t)](\mathbb{S}^{d+1}))$ converges weakly in $\mathcal{P}(\mathbb{S}^{d+1})$ and  $F'(\mu_t)$ converges in $C_{loc}^1$ to $F'$ that satisfies the Morse-Sard property, then $h(\bar{\nu}_\infty,.)$ is a maximizer for $\max_{\|f\|_{\mathcal{F}_1} \leq 1} \min_{x \in \mathcal{X}} y(x)f(x)$.
\end{theorem}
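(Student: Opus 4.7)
The plan is to invoke Theorem~\ref{thm:technical} to reduce the convergence question to one of identifying the $\mathcal{F}_1$-max-margin classifier, i.e.\ the solution of~\eqref{eq:nustar}. Once we characterize this maximizer as $\nu^* = 0.5\delta_{\theta_1} + 0.5\delta_{\theta_2}$, the final implication about $f(\nu^*, Px^{(1)}+P_\perp x^{(2)})$ follows almost automatically: both $\theta_1$ and $\theta_2$ have first-layer weight vector proportional to $\pm\e_1$, so the model output depends only on the first input coordinate, and hence equals $f(\nu^*, x^{(1)}_1)$ regardless of the remaining coordinates of $x^{(2)}$.

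To identify $\nu^*$, I would deploy the primal-dual characterization of Lemma~\ref{lem:primdualrich}, reducing the problem to exhibiting a data distribution $p^*$ together with the candidate $\nu^*$ satisfying both complementary slackness conditions~\eqref{eq:nustarcs} and~\eqref{eq:pstarcs}. The natural candidate for $p^*$ concentrates on the \emph{worst-case} sub-distribution of $\gD$: take the first coordinate to be exactly $\gamma y$ (the boundary of the separating margin), place all $2^{d-1}$ sign patterns of $\{-1,+1\}^{d-1}$ on the non-linear coordinates in the positive class, and fix the non-linear coordinates to $0$ (the shared support point guaranteed by \ifm) for the negative class, with total mass $1/2$ on positives and $1/2$ on the single negative atom. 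This choice is engineered so that each of $\theta_1,\theta_2$ activates on exactly one class, making the two-atom measure $\nu^*$ balance the margin contributions perfectly.

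The heart of the proof, and the main obstacle, is verifying the first slackness condition: that $\theta_1,\theta_2$ are the \emph{unique} maximizers over $\mathbb{S}^{d+1}$ of
\[ g(w,b,a) \;\defeq\; \mathbb{E}_{(x,y)\sim p^*}\bigl[\,y\, a\, \phi(\langle w,x\rangle + b)\,\bigr]. \]
Here I would first exploit $1$-homogeneity of ReLU and separability of $g$ in $a$ versus $(w,b)$ to reduce to $|a|=1/\sqrt{2}$, then split into the two sign cases. The case $a=-1/\sqrt{2}$ is direct since only the negative atom contributes and the maximum of $\phi(b-\gamma w_1)$ under $b^2+w_1^2\leq 1/2$ is attained at the announced coordinates of $\theta_2$. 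The case $a=+1/\sqrt{2}$ is the delicate one: writing $X=\sum_{i=2}^d \sigma_i w_i$ with $\sigma_i$ Rademacher and $\alpha=\gamma w_1+b$, one needs to show that $\mathbb{E}_\sigma[\phi(X+\alpha)]$ is uniquely maximized (under $\|w\|^2+b^2 = 1/2$) by the degenerate choice $w_2=\cdots=w_d=0$ and $\alpha=\sqrt{(1+\gamma^2)/2}$. The route is to bound $\mathbb{E}[\phi(X+\alpha)] \le \alpha + \mathbb{E}[X\,\mathbf{1}\{X \ge \alpha\}]$ using symmetry of $X$, then control the tail via a Cauchy--Schwarz plus Chebyshev estimate (an auxiliary lemma analogous to Lemma~\ref{lem:expec}). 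The resulting upper bound, as a function of $\alpha$, is monotone increasing on the feasible range, and the hypothesis $\gamma\geq 1$ is precisely what forces the extremal maximizer to coincide with $\theta_1$ rather than some interior point with nonzero $w_2,\ldots,w_d$.

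Finally, the second slackness condition~\eqref{eq:pstarcs} and the uniqueness of $\nu^*$ as a convex combination of $\delta_{\theta_1},\delta_{\theta_2}$ can be handled in tandem. For any $\nu=\lambda\delta_{\theta_1}+(1-\lambda)\delta_{\theta_2}$, a direct computation shows that $(x,y)\mapsto y f(\nu,x)$ over $\gD$ is minimized precisely on the support of $p^*$ (using $\gamma\geq 1$ so that increasing $|x_1|$ only strengthens each active ReLU), and the resulting margin equals $\min(\lambda,1-\lambda)\sqrt{1+\gamma^2}/2$, uniquely maximized at $\lambda=1/2$. Since the dual objective $\mathbb{E}_{p^*}[y f(\nu,x)]$ is an upper bound on the margin for every $\nu$, uniqueness of the maximizer of $g$ combined with this uniqueness in $\lambda$ pins down $\nu^*$. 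Lemma~\ref{lem:primdualrich} then certifies $\nu^*$ as the $\mathcal{F}_1$-max-margin classifier, Theorem~\ref{thm:technical} promotes this to convergence of the rich-regime gradient flow, and the LD-SB conclusion follows immediately from $\mathrm{supp}(\nu^*)\subset\{\pm\e_1\}\times\mathbb{R}^2$.
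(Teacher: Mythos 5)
You have proved the wrong statement. Theorem~\ref{thm:technical} is an abstract convergence result about Wasserstein gradient flows for infinite-width, two-homogeneous ReLU networks: it asserts that, under regularity hypotheses on $\rho$, on the convergence of the normalized weight measure $\bar\nu_t$, and on $F'$ satisfying the Morse--Sard property, the limit $h(\bar\nu_\infty,\cdot)$ is a maximizer of the $\gF_1$-constrained margin $\max_{\|f\|_{\gF_1}\leq 1}\min_{x}y(x)f(x)$. The paper does not prove this theorem at all; it is an explicit restatement of Theorem~5 of \citet{ChizatB20} and is used strictly as a black box (see the footnote accompanying Theorem~\ref{thm:rich} and the surrounding discussion in Appendix~\ref{app:relu-F1}). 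A proof of it would require the machinery of mean-field limits, the Wasserstein-$2$ gradient-flow structure of the training dynamics, and the duality between the $\gF_1$ unit ball and Lipschitz margins, none of which appears in your proposal.

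What you have actually written is a proof sketch of Theorem~\ref{thm:rich} (equivalently, Theorem~\ref{thm:rich-app}): you fix an \ifm dataset, apply Lemma~\ref{lem:primdualrich} to reduce to exhibiting a dual distribution $p^*$, construct $p^*$ on the margin-boundary atoms, split by the sign of $a$ using one-homogeneity, and control the Rademacher-sum tail via a Cauchy--Schwarz/Chebyshev estimate (your auxiliary lemma is exactly Lemma~\ref{lem:expec}). That argument is correct and matches the paper's proof of Theorem~\ref{thm:rich} essentially step for step. But as a proof of Theorem~\ref{thm:technical} it is vacuous and, worse, circular: your first sentence invokes Theorem~\ref{thm:technical} itself as the bridge from gradient flow to the max-margin problem, which you cannot do while purportedly proving that theorem. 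To supply a proof of the stated theorem you would need to reproduce (or cite and summarize) the Chizat--Bach argument; to prove Theorem~\ref{thm:rich}, your sketch is on target, but that is a different claim.
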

where $\mathcal{P}(\mathcal{X})$ denotes the space of probability distributions on $\mathcal{X}$ and $[\Pi_2(\mu_t)](\mathbb{S}^{d+1})$ denotes the total mass of the measure $\Pi_2(\mu_t)$ on $\mathbb{S}^{d+1}$. 

To parse the theorem, note that
\[ \nabla S(f) = \frac{\exp(-f(x)) d\rho(x)}{\int_\mathcal{X} \exp(-f(x')) d\rho(x')} \]
Thus, $\nabla S(f)$ convergence means that the exponentiated normalized margins converge. Also, $\bar{\nu}_t$ is similar to the directional convergence of weights, however, in this case, weights are replaced by directions in $\mathbb{S}^{d+1}$. For explanation of the Morse-Sard property and the metric $C_{loc}^1$, please refer to Appendix H of \citet{ChizatB20}.

\end{document}
